\documentclass[acmsmall]{acmart}

\usepackage{hyperref}
\usepackage{color}
\usepackage{makecell}
\usepackage{url}
\usepackage{balance}
\usepackage{tabularx}

\usepackage{bm}
\usepackage{pifont}
\newcommand{\xmark}{\ding{55}}

\usepackage{algorithmic}
\usepackage{graphicx}
\usepackage{subcaption}
\usepackage[boxed,ruled,lined]{algorithm2e}
\usepackage{amsthm}
\usepackage[american]{babel}
\usepackage{setspace}
\usepackage{enumerate}
\usepackage{multirow}
\usepackage{url}

\usepackage{graphicx}
\usepackage{subcaption}

\newcommand{\argmax}{\operatornamewithlimits{arg\,max}}
\newcommand{\argmin}{\operatornamewithlimits{arg\,min}}

\AtBeginDocument{%
  \providecommand\BibTeX{{%
    \normalfont B\kern-0.5em{\scshape i\kern-0.25em b}\kern-0.8em\TeX}}}

\setcopyright{acmlicensed}
\copyrightyear{2018}
\acmYear{2018}
\acmDOI{XXXXXXX.XXXXXXX}

\acmJournal{JACM}
\acmVolume{37}
\acmNumber{4}
\acmArticle{111}
\acmMonth{8}

\acmConference[Conference acronym 'XX]{Make sure to enter the correct
  conference title from your rights confirmation emai}{June 03--05,
  2018}{Woodstock, NY}
\acmISBN{978-1-4503-XXXX-X/18/06}

\begin{document}

\title{Enhancing Bandit Algorithms with LLMs for Time-varying User Preferences in Streaming Recommendations}
\def\shorttitle{LLM4Bandit}


\author{Chenglei Shen}
\email{chengleishen9@ruc.edu.cn}
\affiliation{%
  \institution{Gaoling School of Artificial Intelligence, Renmin University of China}
  \city{Beijing}
  \country{China}
}

\author{Yi Zhan}
\email{zistrawberry1223@gmail.com}
\affiliation{%
  \institution{Gaoling School of Artificial Intelligence, Renmin University of China}
  \city{Beijing}
  \country{China}
}

\author{Weijie Yu}
\email{yu@uibe.edu.cn}
\affiliation{%
  \institution{School of Artificial Intelligence and Data Science, University of International Business and Economics}
  \city{Beijing}
  \country{China}
}
\author{Xiao Zhang}
\authornote{Corresponding author (e-mail: zhangx89@ruc.edu.cn)}
\email{zhangx89@ruc.edu.cn}
\affiliation{%
  \institution{Gaoling School of Artificial Intelligence, Renmin University of China}
  \city{Beijing}
  \country{China}
}
\author{Jun Xu}
\email{junxu@ruc.edu.cn}
\affiliation{%
  \institution{
Gaoling School of Artificial Intelligence, Renmin University of China}
  \city{Beijing}
  \country{China}
}




\renewcommand{\shortauthors}{Trovato and Tobin, et al.}

\begin{abstract}
In real-world streaming recommender systems, user preferences evolve dynamically over time. Existing bandit-based methods treat time merely as a timestamp, neglecting its explicit relationship with user preferences and leading to suboptimal performance. Moreover, the online learning methods often suffer from inefficient exploration–exploitation during the early online phase. To address these issues, we propose HyperBandit+, a novel contextual bandit policy which integrates a time-aware hypernetwork to adapt to time-varying user preferences and employs a large language model-assisted warm-start mechanism (LLM Start) to enhance exploration–exploitation efficiency at the early online phase. Specifically, HyperBandit+ leverages a neural network that takes time features as input and generates parameters for estimating time-varying rewards by capturing the correlation between time and user preferences. Additionally, the LLM Start mechanism employs multi-step data augmentation to simulate realistic interaction data for effective offline learning, providing warm-start parameters for the bandit policy at the early online phase. To meet real-time streaming recommendation demands, we adopt low-rank factorization to reduce hypernetwork training complexity. Theoretically, we rigorously establish a sublinear regret upper bound that accounts for both the hypernetwork and the LLM warm-start mechanism. Extensive experiments on real-world datasets demonstrate that HyperBandit+ consistently outperforms state-of-the-art baselines in terms of accumulated rewards. The code is available at \url{https://github.com/Starrylay/HyperBandit.git}.

\end{abstract}

\begin{CCSXML}
<ccs2012>
<concept>
<concept_id>10002951.10003317.10003347.10003350</concept_id>
<concept_desc>Information systems~Recommender systems</concept_desc>
<concept_significance>300</concept_significance>
</concept>
</ccs2012>
\end{CCSXML}

\ccsdesc[300]{Information systems~Recommender systems}

\keywords{Contextual bandit, Large language models, Hypernetwork, Streaming recommendation, Time-varying user preference}


\maketitle

\section{Introduction}
\label{sec:intro}


The exponential growth of the internet industry has precipitated an unprecedented surge in online platforms and applications, intensifying challenges associated with information overload. In response, recommender systems have evolved into indispensable solutions for mitigating this issue by curating contextually relevant content for users. Leveraging sophisticated architectures, these systems generate personalized recommendations that optimize user engagement in digital ecosystems \cite{chen2022graph, tian2022learning, wei2022contrastive}. A robust recommender framework necessitates meticulous analysis of historical user-item interactions to uncover latent behavioral patterns. Concurrently, the dynamic nature of online environments demands rapid adaptation to shifting user preferences \cite{zou2022multi}, requiring iterative model updates to sustain real-time responsiveness. This operational paradigm has propelled streaming recommendation into a prominent research domain, emphasizing incremental model optimization through immediate integration of users’ latest platform interactions. The ultimate objective is to deliver temporally aligned recommendations that mirror evolving user preferences and behavioral trajectories \cite{Chandramouli2011StreamRec, Chang2017Streaming, Jakomin2020Simultaneous, Wang2018Neural, Zhang2022Counteracting, Wang2018Streaming}.


\begin{figure}[!t]
\centering
\begin{subfigure}[b]{0.24\textwidth}
\includegraphics[width=\textwidth]{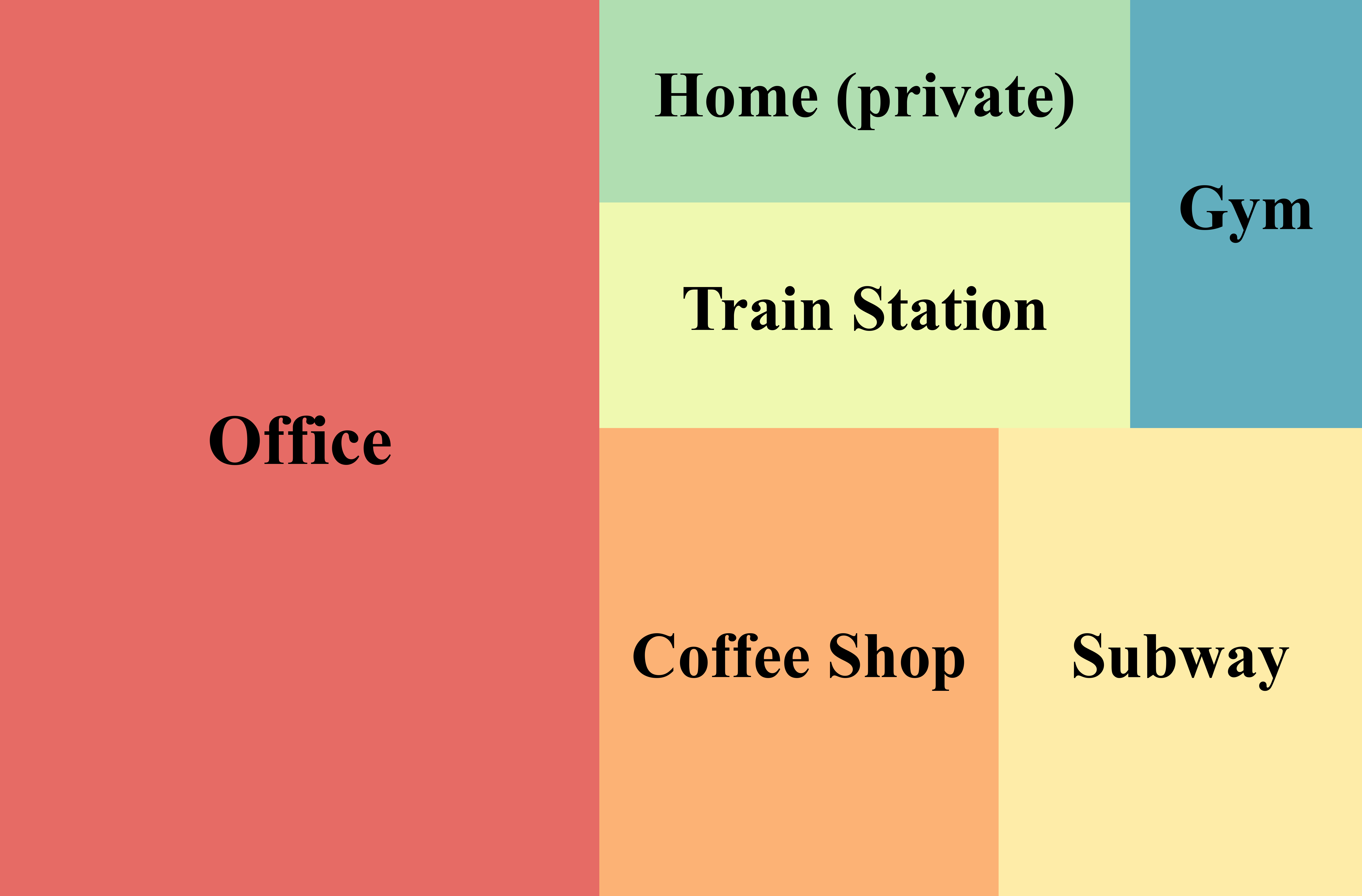}
\caption{\texttt{weekday morning}}
\end{subfigure}
\hfill
\begin{subfigure}[b]{0.24\textwidth}
\includegraphics[width=\textwidth]{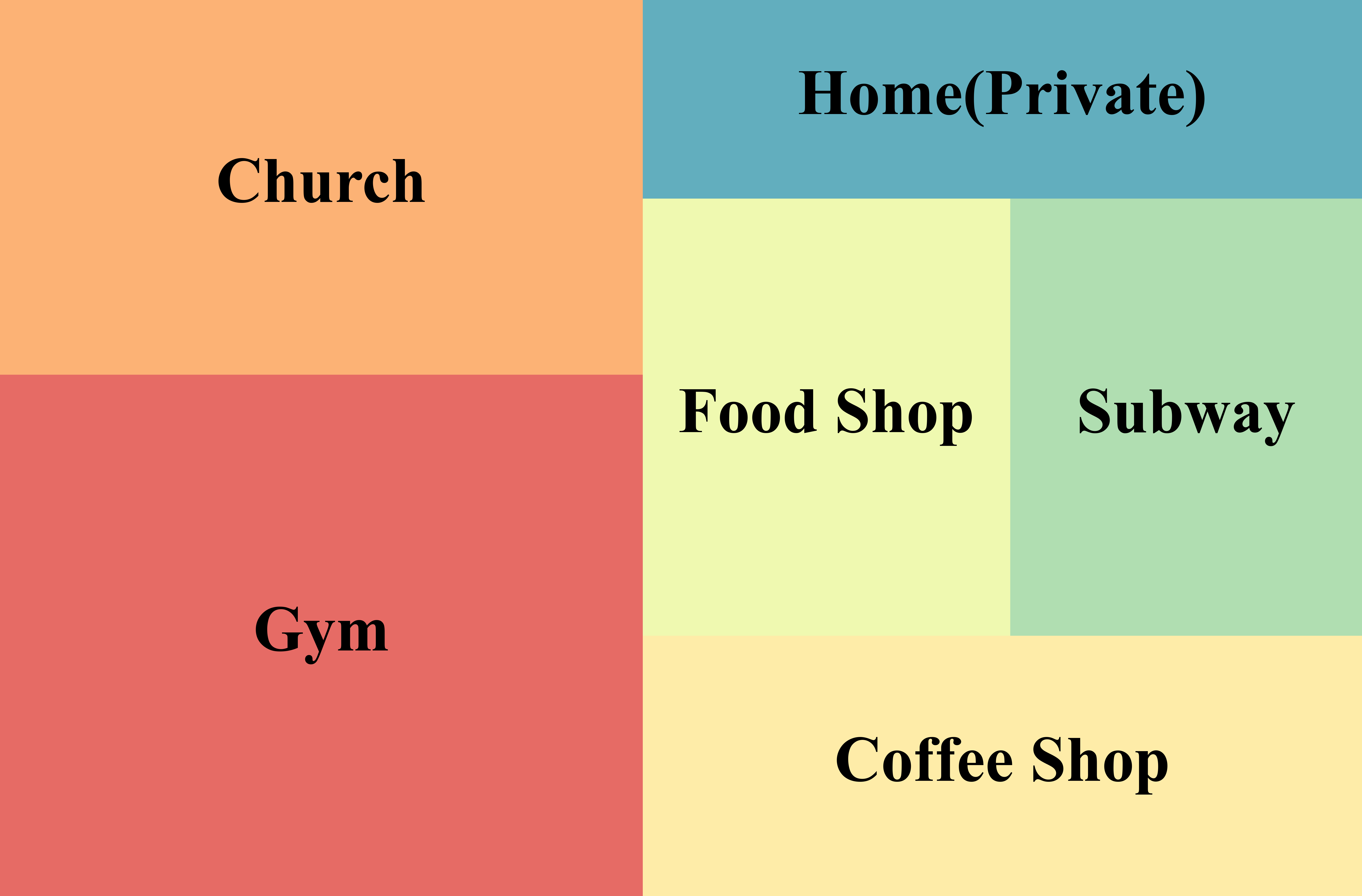}
\caption{\texttt{weekend morning}}
\end{subfigure}
\hfill
\begin{subfigure}[b]{0.24\textwidth}
\includegraphics[width=\textwidth]{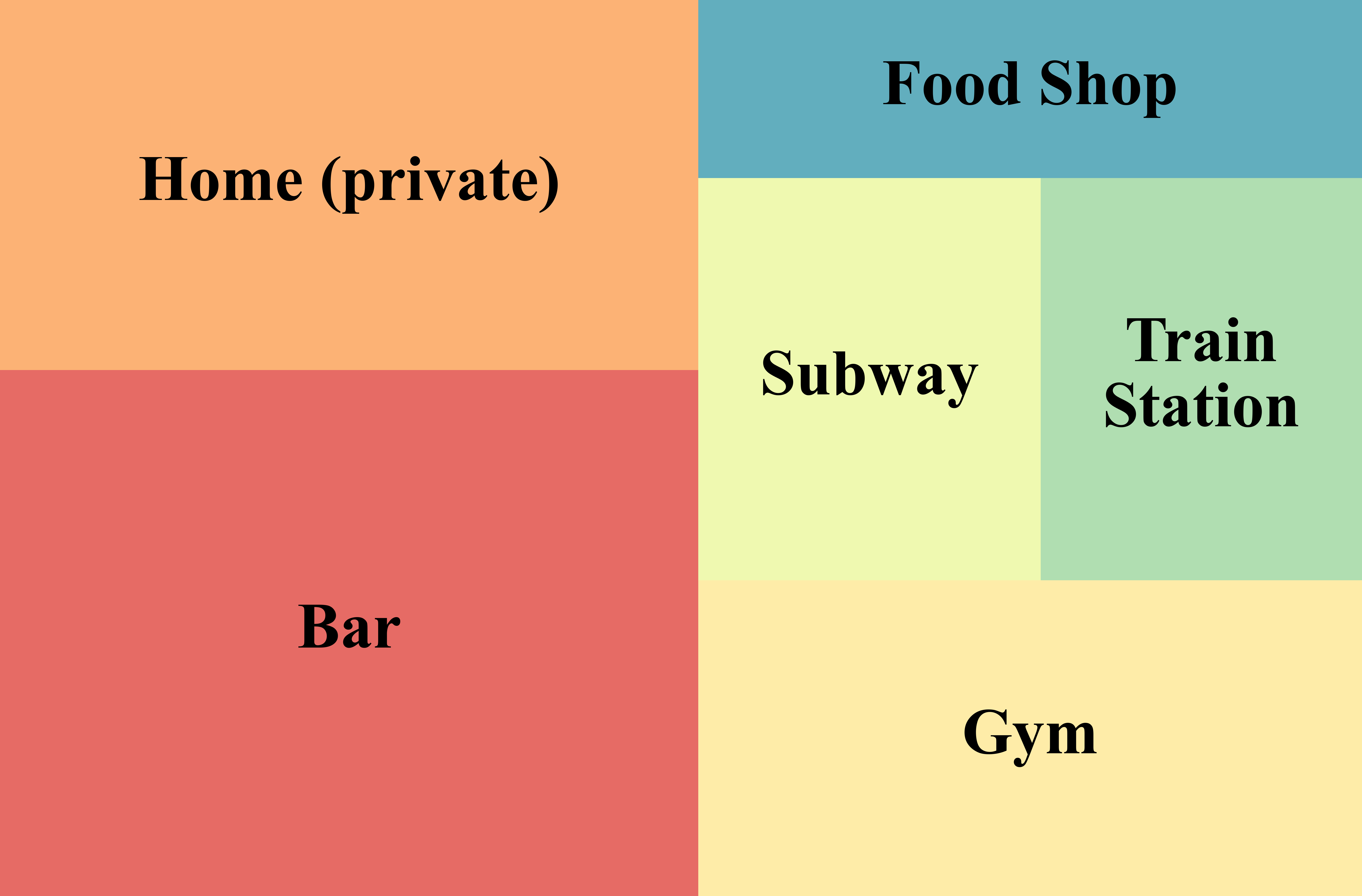}
\caption{\texttt{weekday night}}

\end{subfigure}
\hfill
\begin{subfigure}[b]{0.24\textwidth}
\includegraphics[width=\textwidth]{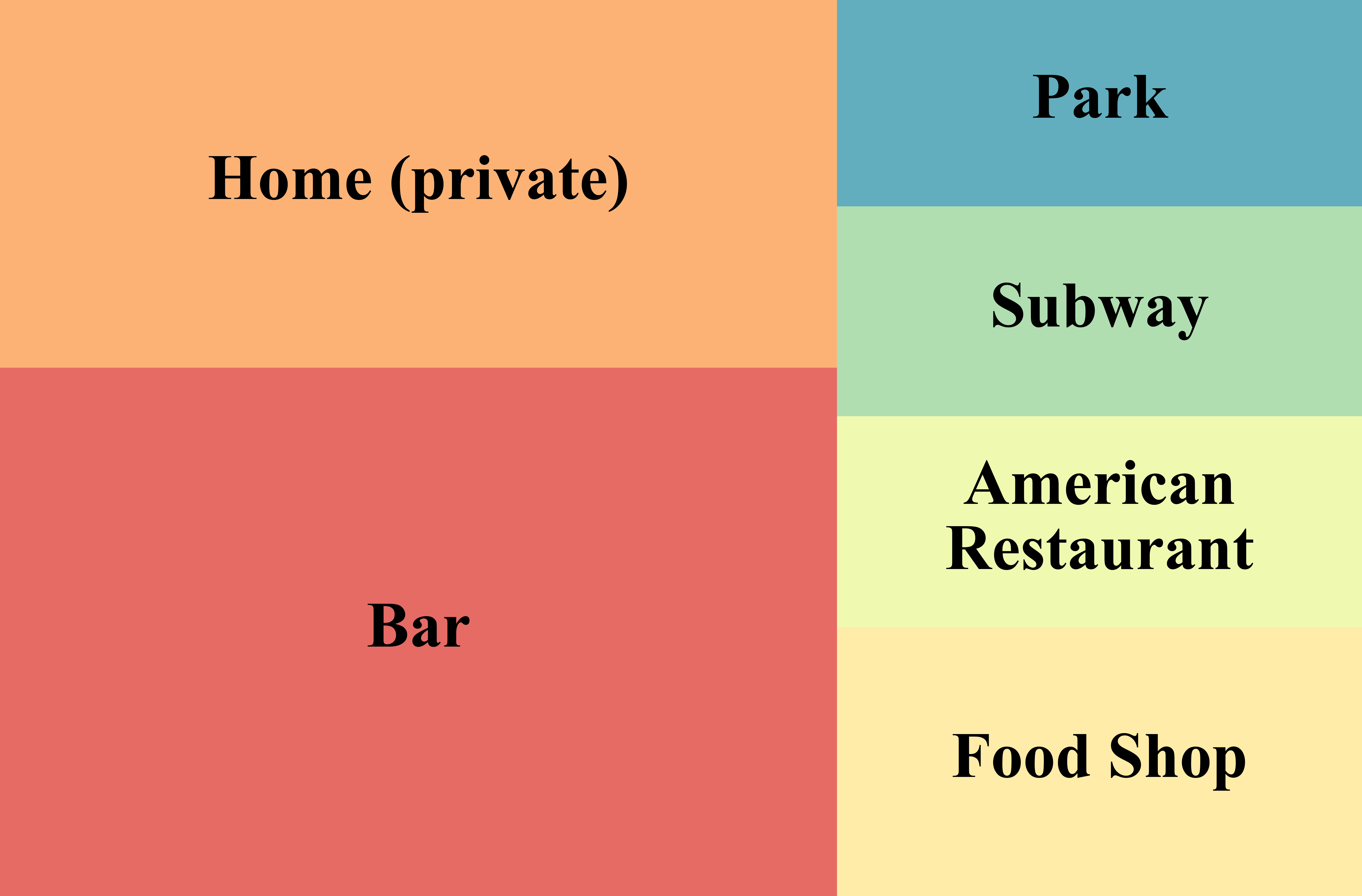}
\caption{\texttt{weekend night}}
\end{subfigure}
\caption{Treemap visualizations illustrating periodic shifts in the Foursquare-NYC points-of-interest (POI) dataset for morning and night periods on weekdays and weekends. Block sizes correspond to the frequency of visits, with larger blocks indicating higher visit counts.}
\label{fig:NYCtimevarying}
\end{figure}

However, streaming recommendation faces a significant challenge due to time-varying user preferences \cite{ditzler2015learning}, which limits the model's controllability at test time \cite{shen2024survey}. User preferences undergo dynamic changes influenced by various factors including seasonality, holidays, and circadian rhythm. This temporal variability is visually depicted in Fig.\ref{fig:NYCtimevarying}, where users exhibit a pattern of checking in at places like ``Office'' and ``Coffee Shop'' during weekday mornings, contrasting with preferences for places such as ``Gym'' and ``Church'' during weekend mornings, revealing a discernible weekly periodicity.  In contrast to their morning preferences, users consistently exhibit a tendency to visit places like ``Bars'' or spend time at ``Home'' during evening hours, regardless of whether it is a weekday or weekend, highlighting a distinct daily periodicity. Similarly, an interesting illustration within the realm of short video recommendation (e.g., Kuai application) is the observable trend where users demonstrate a preference for watching cartoons predominantly during weekends. Conversely, on weekdays, their content preferences lean towards other genres. These recurrent patterns significantly impair the performance of online learning algorithms, leading to suboptimal results. For example, as a classic framework for online learning, the multi-armed bandit (MAB) algorithms is widely used in online streaming recommendation. However, most existing contextual bandit algorithms assume a stationary environment~\cite{li2010contextual,Balseiro2019Contextual,han2020sequential}, overlooking the reality of non-stationary, time-varying user preferences. While some studies relax this assumption to a piecewise stationary environment~\cite{wu2018learning,xu2020contextual,besbes2019optimal}, enabling extra detection of preference changes, they may exhibit performance fluctuations when handling periodic changes. The gap in research lies in addressing streaming recommendations in periodic environments, where current approaches fail to recognize the periodicity of user preferences in an online manner.


Furthermore, online learning algorithms simultaneously serve users while learning from their interactions, a process that typically necessitates a substantial amount of interaction data to attain an optimal state. This often results in a suboptimal user experience during the initial phase of the algorithm, primarily due to sparse rewards stemming from the expansive action space (e.g., the candidate item space) and the relatively limited number of user interactions. Fig.\ref{fig:datasparsity} illustrates the distribution of user-item interactions in the initial week. It reveals that half of the items were visited fewer than 12 times while half of the users give a visit fewer than 9 times, highlighting a pronounced issue of data sparsity. In other words, the algorithm struggles to rapidly achieve convergence, highlighting the issue of inefficient exploration and exploitation in the initial stage. 
 
Recent methodologies focus on integrating side information as auxiliary data to mitigate data sparsity, including demographic data~\cite{barragans2010hybrid}, social relationships~\cite{herce2020new, hong2018multi, vatani2023personality}, item relationships~\cite{zhang2021mining}, and other sources. Nevertheless, these methods
pose challenges when addressing data sparsity in recommender systems.  Inappropriate use of side information may introduce data noise, where attributes or features may lack direct relevance to user preferences. Furthermore, data incompleteness~\cite{ko2022survey} arises when side information lacks certain attributes or features, such as privacy concerns limiting the collection of sufficient user profiles to understand their interests. This incompleteness undermines the model's ability to fully capture the unique characteristics of users and items, thereby affecting recommendation accuracy. Hence, the urgent challenge lies in generating accurate side information and seamlessly integrating it into downstream recommendation models.


\begin{figure}[!t]
\centering
\begin{subfigure}[b]{0.488\textwidth}
\includegraphics[width=\textwidth]{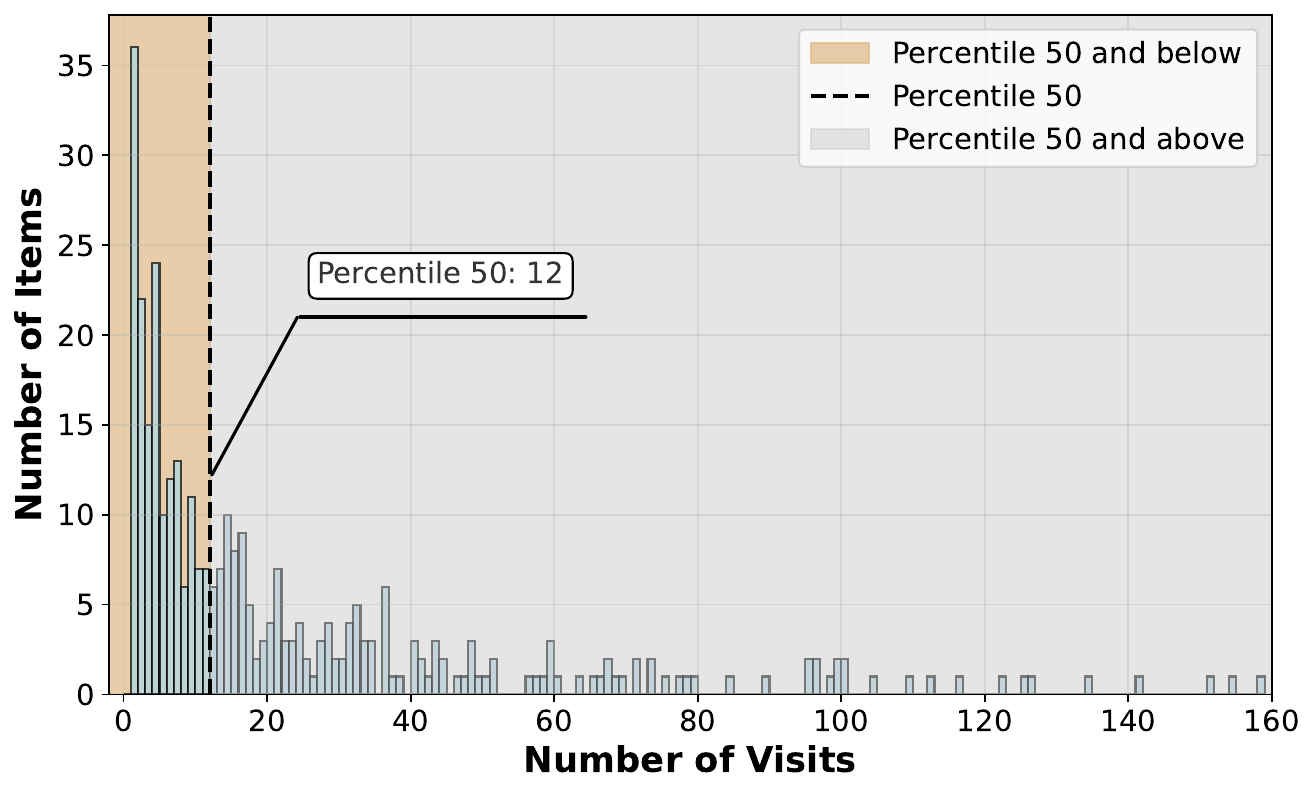}
\caption{\texttt{Distribution of Visit Number on Item.}}
\end{subfigure}
\hfill
\begin{subfigure}[b]{0.48\textwidth}
\includegraphics[width=\textwidth]{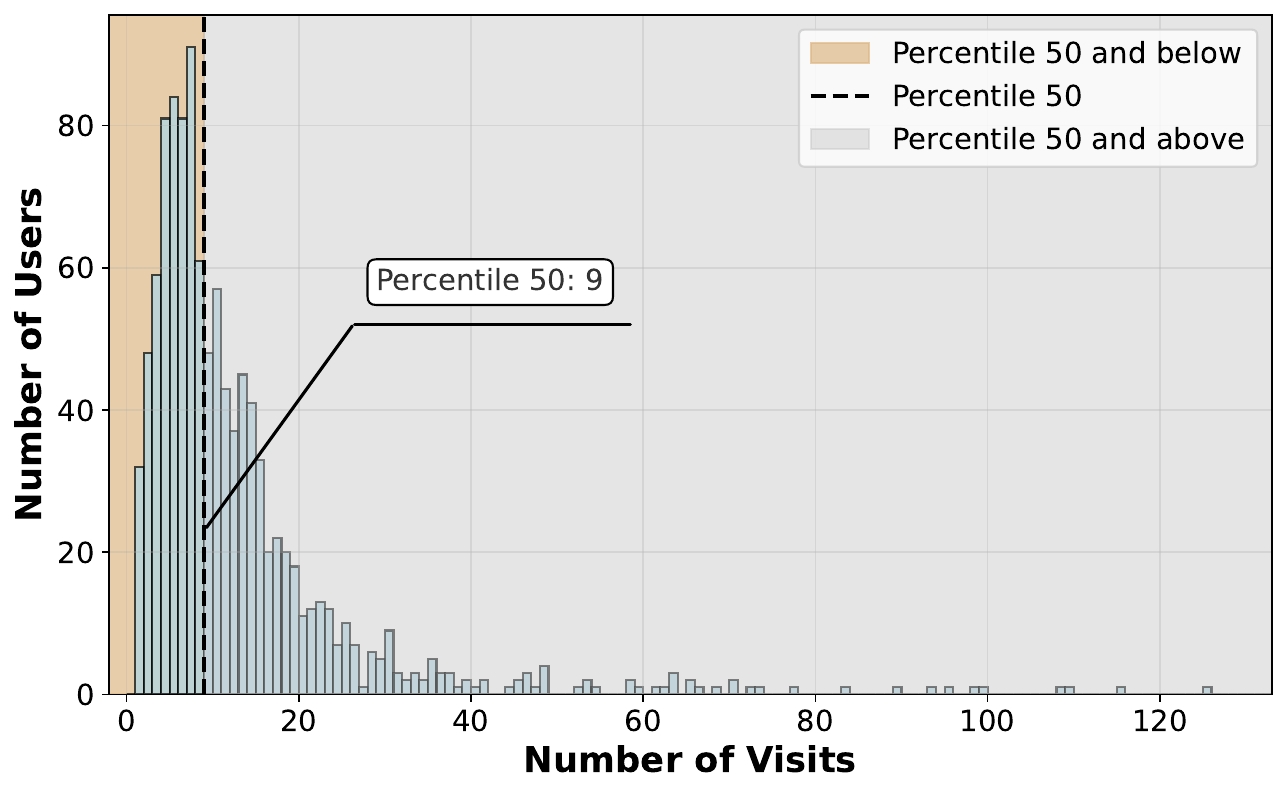}
\caption{\texttt{Distribution of Visit Number on User.}}
\end{subfigure}
\caption{The illustrations of the data sparsity.}
\label{fig:datasparsity}
\end{figure}


In this paper, we propose effective methods aiming at solving these two challenges: \textbf{(1) The challenge of periodic user preferences.} A large time period can be divided into multiple smaller periods in a periodic manner (e.g., based on the specific day of the week and different time slots within a day), and the reward function demonstrates a similar distribution whenever the same time period is encountered. Moreover, these time periods can be observed by the model and utilized for periodicity modeling and online adjustment of its user preference module in various streaming recommendation scenarios. As a targeted solution to the aforementioned process, we propose leveraging a hypernetwork that takes time period information as input to generate the parameters of the user preference matrix, thereby supporting the online learning method (i.e., the bandit policy) in adjusting its strategy. Consequently, the hypernetwork effectively captures the periodicity of user preferences over time. (2) \textbf{The challenge of inefficient exploration and exploitation during the initial stage.} Large Language Models (LLMs), trained on extensive real-world knowledge, possess the capability to comprehend user preferences and provide valuable completion information, even when dealing with privacy-constrained user profiles. Therefore, we suggest employing LLMs for an offline warm start before
online learning. Specifically, we employ LLMs for step-by-step data augmentation. Initially, we enhance user and item side information (e.g., profiles, attributes), thereby enriching the side information and alleviating the user privacy issues. Subsequently, utilizing the enriched user and item data, we obtain a substantial amount of simulated interaction data that more effectively aligns with user preferences, which allow algorithms have more trials to explore$\&$ exploit~(E$\&$E) in the offline phrase thus improving the E$\&$E efficiency in the initial stage of online phrase. 

The contributions are summarized as follows:
\begin{enumerate}[(1)]
    \item We propose a novel contextual bandit algorithm called HyperBandit+, along with an efficient online training method utilizing low-rank factorization. HyperBandit+ explicitly models the periodic variations in user preferences and dynamically adjusts the recommendation policy based on time features.
    \item We propose a novel data augmentation method called LLM Start, which effectively solve the issues of data sparsity and low-quality by utilizing Large Language Models. LLM Start utilizes world knowledge to step-by-step construct simulated interaction data, enabling downstream algorithms to perform well through a warm start.
    \item We rigorously establish a sublinear regret guarantee for HyperBandit+, thereby ensuring the convergence of the online learning process.

\end{enumerate}


\section{Related Work}
\subsection{Hypernetworks (HNs)}  

Hypernetworks (HNs), first introduced by Ha et al. \cite{ha2016hypernetworks}, are inspired by the genotype-phenotype relationship in cellular biology, utilizing one network (the hypernetwork) to generate weights for another network (the target network). Recent research demonstrates the applicability of HNs across diverse domains, including computer vision \cite{klocek2019hypernetwork}, language modeling \cite{suarez2017language}, sequence decoding \cite{nachmani2019hyper}, continual learning \cite{von2019continual}, federated learning \cite{shamsian2021personalized}, multi-objective optimization \cite{Navon2021Learning,Chen2023Controllable}, and hyperparameter optimization \cite{mackay2019self}. Navon et al. \cite{Navon2021Learning} proposed a unified hypernetwork-based model capable of learning the Pareto front, enabling inference-time adaptability to specific objective preferences. Furthermore, von Oswald et al. \cite{von2019continual} introduced a task-aware hypernetwork approach for continual model-based reinforcement learning, facilitating network adaptability across multiple tasks while preserving previous task performance. Although extensively studied in offline scenarios, research on leveraging hypernetworks to enhance model controllability within online learning and streaming contexts remains limited.

\subsection{Bandits in Non-stationary Environment}
 Non-stationary environments in online learning have recently garnered significant theoretical and practical interest. A prevalent scenario is the abruptly changing or piecewise-stationary environment, characterized by sudden changes occurring at unknown times, with stationary periods between these points. This setting has been extensively studied within the classical context-free framework \cite{hartland2006multi,garivier2008upper,yu2009piecewise,slivkins2008adapting}. For instance, Yu et al. \cite{yu2009piecewise} introduced a windowed mean-shift detection algorithm, providing an upper regret bound of $O(\Gamma_T \log (T))$, where $\Gamma_T$ denotes the total number of actual changes up to time $T$. In contrast, the contextual bandit setting has received comparatively limited attention regarding non-stationary environments \cite{hariri2015adapting,wu2018learning,xu2020contextual}. Wu et al. \cite{wu2018learning} proposed a hierarchical bandit approach employing multiple contextual bandits to detect and respond to environmental shifts. More recently, Xu et al. \cite{xu2020contextual} addressed time-varying user preferences through a change-detection mechanism targeting preference vectors. Shen et al. \cite{shen2023hyperbandit} proposed a hypernetwork-based strategy to address periodic interests. However, the algorithm requires substantial data to converge, negatively impacting user experience during the initial stages. Additionally, some work~\cite{lu2021low, kang2022efficient} exploit the low-rank structure of the preference matrix, treating the user--item (or two-entity) preference/reward matrix as low-rank, However, their goal is to leverage low rankness in static bilinear bandits to derive regret bounds explicitly dependent on the rank, emphasizing the resulting statistical efficiency.  In contrast, we employ low rankness in a time-varying contextual bandit as a means of hypernetwork parameterization and compression, reducing the output dimension from $d_a \times d_u$ to $\tau (d_a + d_u)$, which significantly cuts training and inference time in online settings.


\subsection{LLM-enhanced Recommendation}

Traditional algorithms in recommender systems are often trained to fit over the user's logged data  \cite{kang2018self, rendle2012bpr}. The quality and variety of the input data directly  influence the performance and versatility of the models. With strong capabilities in domain generalization and language generation,  LLMs show promise in addressing these data quality issues in recommendation models \cite{fan2023recommender}. Wang et al. \cite{wang2023recagent} introduce RecAgent, a recommender system simulation paradigm utilizing LLMs. It comprises a user module for social media interaction and a recommender module offering search or recommendation lists. LLM-Rec \cite{lyu2023llm} integrates four prompting strategies to enhance personalized content recommendations, improving performance through diverse prompts and input augmentation techniques. Some studies \cite{xi2023towards,liu2023first} employs LLMs to infer users' potential intention from their historical interaction data thus improving the retrieval of relevant items. Some studies \cite{baheri2023llms, li2023exploring, wei2023llmrec} use LLMs to encode side information, creating more informative user and item representations to enhance downstream recommender models, such as MAB.  Rather than solely deploying LLMs as recommender systems, utilizing them for data augmentation to bolster recommendations emerges as a  promising strategy in the future.

\section{Problem Formulation}

\subsection{Bandit-based Streaming Recommendation}
\label{3.1}

Streaming recommendation is a sequential decision-making problem in which an online platform recommends the most relevant item \( a \in \mathcal{A} \) (e.g., videos, music, or POIs) to a user \( u \in \mathcal{U} \) in real-time. Contextual bandit algorithms are particularly effective for this setting. Here, the candidate item set \( \mathcal{A} \) serves as the action space, while the context space \( \mathcal{S} \) encapsulates feature representations of users and items. Each item \( a \) and user \( u \) is characterized by context feature vectors \( \bm{c}_a \) and \( \bm{c}_u \), respectively. At each time step \( t \), given a subset \( \mathcal{A}_t \subseteq \mathcal{A} \) and a user context, the recommendation policy $\pi$ selects an item for recommendation. The user then provides feedback by either clicking (positive) or skipping (negative) the item. This feedback determines the true reward, which is subsequently used to update the recommendation policy for future decisions.
The above process can be formalized as a contextual bandit problem for streaming recommendation, and represented using a 4-tuple $\left\langle\mathcal{A}, \mathcal{S}, \pi, r \right\rangle$:

\textbf{Action space \(\mathcal{A}\)} represents a predefined set of candidate actions, where each action (or arm) corresponds to a specific item. At each time step \( t \), a dynamic subset \( \mathcal{A}_t \subseteq \mathcal{A} \) is selected as the candidate item set for recommendation. Selecting an action \( a_{I_{t}} \) from \( \mathcal{A}_{t} \) implies recommending the corresponding item to the user, where \( I_{t} \in |\mathcal{A}_t| \) denotes the index of the recommended item at time \( t \).



\textbf{Context space \(\mathcal{S}\)} encapsulates the feature information of users and items, represented by \( \bm c_u \in \mathbb{R}^{d_u} \) and \( \bm c_a \in \mathbb{R}^{d_a} \), respectively. In this work, we specifically decompose the item context \( \bm c_a \) into observed features \( \bm s_a \in \mathbb{R}^{o_a} \) and latent features \( \bm x_a \in \mathbb{R}^{l_a} \), which require learning. This decomposition is expressed as \( \bm c_a = [\bm s_a^\intercal, \bm x_a(t)^\intercal]^\intercal \), where the total dimension of \( \bm c_a \) is given by \( d_a = o_a + l_a \).

\textbf{Policy $\pi : \mathcal{S} \rightarrow \mathcal{A}$} describes the decision-making rule of an agent (i.e., the recommendation model), which selects an action for execution according to the relevance score of each action at time $t$, given a candidate item set $\mathcal{A}_{t}$ and user $u \in \mathcal{U}$, a \emph{relevance score function} $f_t$ treats context features of user and item  in context space (i.e., $\bm c_{u}$ and $\bm c_a$ ) as inputs and determines which action to take: $a_{I_t}:=\arg \max _{a \in \mathcal{A}_{t}} f_t\left(\bm c_{u} , \bm c_a \right)$.


\textbf{Reward $r$} is determined based on user feedback. Specifically, at time $t$, after recommending item $a_{I_t} \in \mathcal{A}_t$ to user $u$, a reward $r(u, a_{I_t}) \in \{0,1\}$ is observed, implicitly indicating positive or negative user feedback toward item $a_{I_t}$. Notably, feedback from the same user regarding the same item may vary across different time steps, reflecting time-varying user preferences.

Table~\ref{tab:notations} summarizes the notations used throughout the paper.

\begin{table}[t]
    \centering
    \caption{A summary of notations.}
    \label{tab:notations}
    \begin{tabularx}{0.75\textwidth}{c|X} 
        \toprule 
        {\bf Symbol} & {\bf Explanation} \\
        \midrule 
        $[n]$ & $[n]: = [1,2, \ldots, n]$ \\
        $t$ & Time step $t \in [T]$ \\
        $\mathcal{A}$ & Action space, i.e., the candidate item set \\
        $|\mathcal{A}|$ & The cardinality of set $\mathcal{A}$ \\
        $\bm c_{u} \in \mathbb{R}^{d_u}$ & Context feature vector of a user $u$ \\
        $\bm c_{a} \in \mathbb{R}^{d_a}$ & Context feature vector of a candidate item $a$ \\
        $\bm s_{a} \in \mathbb{R}^{o_a}$ & Observed features of a candidate item $a$ \\
        $\bm x_{a} \in \mathbb{R}^{l_a}$ & Latent features of a candidate item $a$ \\ 
        $p \in \mathcal{P}$ & Time period variable, takes values in the range $\mathcal{P} := \{0, 1, \dots, 34\}$, representing the 35 time periods within a week \\
        $\bm s_{p} \in \mathbb{R}^{d_p}$ & Time period embedding of time period $p$ \\ 
        $\bm \Theta_p^*$ & True user preference matrix at the time period $p$ \\
        \bottomrule 
    \end{tabularx}
\end{table}

\subsection{Time-Varying User Preferences}

\label{sec:timevaryinguserpreferences}
In this section, we formally describe the time-varying user preferences mentioned in the introduction. 


As detailed in Table~\ref{tab:notations}, we introduce a time period variable $p$ to capture specific time patterns, such as hours within a day and days across the week. In particular, each week is segmented into seven days (Monday to Sunday), with each day further partitioned into five distinct intervals: morning (8:00 AM to 11:30 AM), noon (11:30 AM to 2:00 PM), afternoon (2:00 PM to 5:30 PM), night (5:30 PM to 10:00 PM), and the remaining period. Consequently, the time period variable $p$ takes 35 discrete values ranging sequentially from 0 to 34. Each value of $p$ is then encoded into a corresponding \emph{time period embedding}, represented as $\bm s_p \in \mathbb{R}^{d_p}$.


Traditional recommender systems typically assume a stationary environment, where user feedback mechanisms remain consistent over time. Under this assumption, the \emph{reward generation probability}, defined as $\operatorname{Pr}\{r(u,a)=1 \mid \bm c_u,\bm c_a\}$, is invariant across all time steps $t \in [T]$, suggesting that user $u$'s preference for item $a$ is independent of recommendation timing. However, real-world streaming recommender systems often exhibit periodic fluctuations in user preferences, as demonstrated by \cite{gao2013exploring}. For instance, users might favor office-related items on weekday mornings and entertainment venues like bars at night, leading to temporally distinct feedback patterns. Therefore, considering context $\bm c_u,\bm c_a$, time period variable $p$, and its corresponding embedding $\bm s_p$, the following inequality generally holds:
\begin{equation}
\operatorname{Pr}\{r(u,a)=1 \mid \bm c_u,\bm c_a,\bm s_{p=i}\}\neq\operatorname{Pr}\{r(u,a)=1 \mid \bm c_u,\bm c_a,\bm s_{p=j}\},
\end{equation}
where $i,j \in \{0,\dots,34\}$ and $i \neq j$. Here, $\operatorname{Pr}\{r=1\mid\bm c_u,\bm c_a,\bm s_p\}$ denotes the \emph{time-varying reward generation probability}, quantifying user $u$'s preference for item $a$ at a specific time period $p$.

Formally, we can represent the observed reward generated by the time-varying reward generation probability as $r(u, a, p)$. 
Given a user $u$, an item $a$, and a time period $p$, the generation process of the observed reward can be formalized as follows:
\begin{equation}
\label{eq:hyperbandit:reward:eta}
r (u, a, p)  := r^* (u, a, p) + \eta_{p},
\end{equation}
where $r^* (u, a, p)$ represents the \emph{true reward}, and $\eta_p$ is a random variable under time period $p$ drawn from a distribution with zero mean. This additional error term $\eta_p$ captures the noise or uncertainty present in the observations.
Clearly, we have the following expected reward: $$\mathbb{E} [r (u, a, p)] = r^* (u, a, p) = \operatorname{Pr}\left\{r (u, a, p)=1 \mid \bm c_u, \bm c_a, \bm s_{p}\right\}.$$

Next, we make specific assumptions about the form of the expected reward $\mathbb{E} [r (u, a, p)]$, i.e., the true reward. One straightforward approach is to concatenate the time period embedding with the context feature vectors. However, existing research~\cite{galanti2020modularity} has shown that directly concatenating features from different spaces can make it difficult to capture meaningful information (i.e., time-varying information in this paper). 
To address this issue, we extend the existing linear expected reward in the contextual bandit setting by introducing the true user preference matrix $\bm \Theta_p^*$.
Then, we can specify the true reward as the following \emph{time-varying true reward}:
\begin{equation}
\label{eq:hb:time_reward}
r^*(u, a, p) := \bm c_a^\intercal \bm \Theta_p^* \bm{c}_u, 
\end{equation}
where the \emph{true user preference matrix} $\bm \Theta_p^* \in \mathbb{R}^{d_a \times d_u}$ is utilized to map the user contexts through a linear mapping, taking into account the time period $p$ as well as its embedding $\bm s_p$ as conditions. By applying this mapping, the resulting vector $\bm \Theta_p^* \bm c_u$ can effectively capture the time-varying user preferences across different time periods.
In particular, when $d_a = d_u$ and $\bm \Theta_p^*$ is the identity matrix, the time-varying true reward degenerates to a fixed true reward in the traditional contextual bandit setting.

\section{\mbox{HyperBandit+: The Proposed Algorithm}}
\label{sec:method}
In this section, we introduce a novel bandit algorithm called HyperBandit+. This algorithm captures the relationship between time and user preference using a hypernetwork and improves performance through the integration of LLMs and Euler embedding.  We first present the framework of the whole algorithm. Then we demonstrate how the LLMs and Euler embedding enhances the algorithms and how the hypernetwork works in bandit policy with details. Lastly, we introduce an efficient training method with low-rank factorization to accelerate the training procedure.




\subsection{Algorithm Overview}

\begin{figure}[!t]
    \centering
    \includegraphics[width=\linewidth]{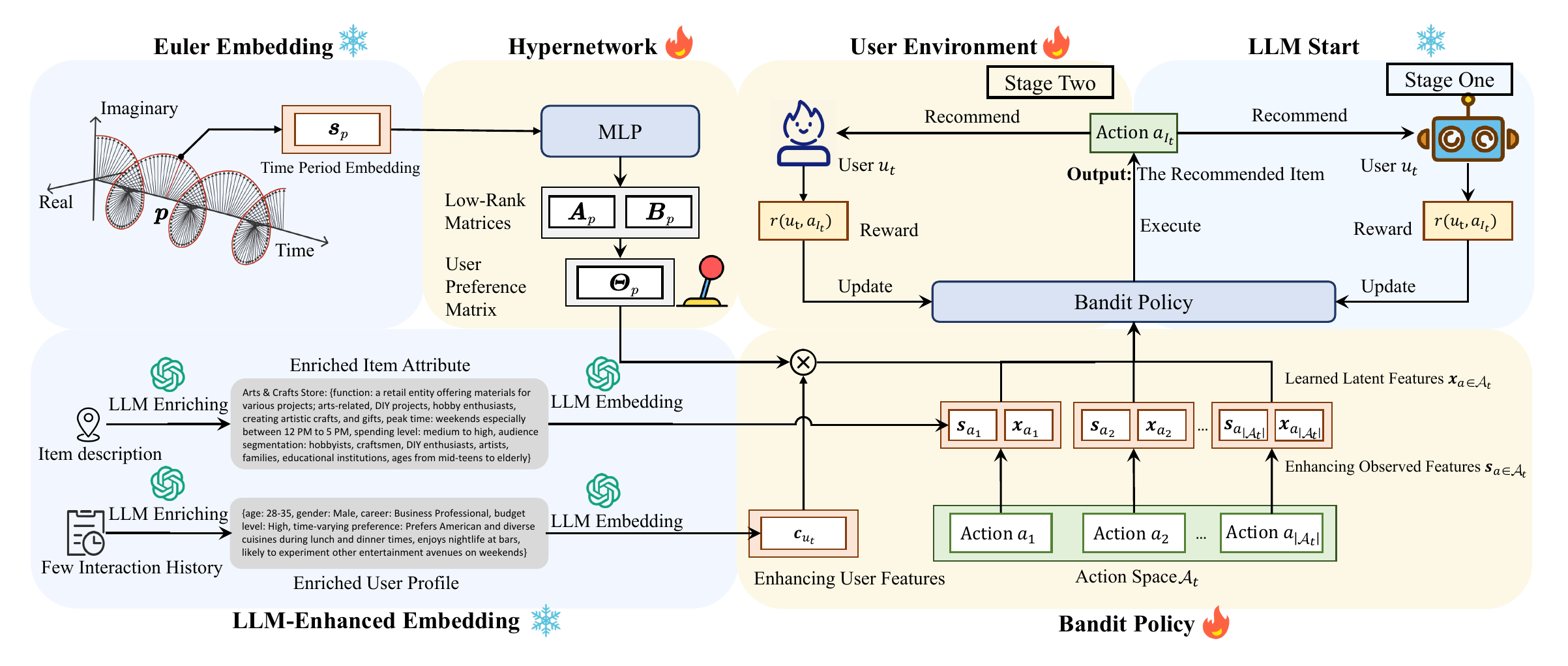}
    \caption{The structure of HyperBandit+ (HyperBandit with extensions of LLM Start, Euler Embedding and LLM-Enhanced Embedding at time $t$). The workflow involves the bandit policy selecting an arm (recommending an item) from the candidate pool, interacting with the environment to obtain feedback, and subsequently updating the bandit policy. ``Stage One'' and ``Stage Two'' represent the two environments that interact with the bandit policy sequentially. The symbol \includegraphics[width=0.025\linewidth]{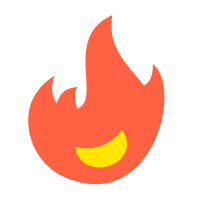} indicates the availability of the module during the online stage, while \includegraphics[width=0.024\linewidth]{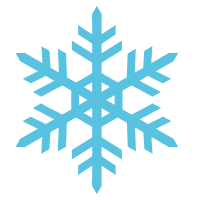} represents the offline policy.}
    \label{fig:HyperBandit+_structure}
\end{figure}




Figure~\ref{fig:HyperBandit+_structure} illustrates the structure of HyperBandit+. The algorithm can be divided into two phases: the online phase and the offline phase, represented in the figure by distinct colors indicating the work time of each module.

The offline phase consists of three modules. \textbf{Euler Embedding} module computes Euler embeddings for various time periods during this phase. In the online phase, these embeddings are directly queried based on the current time period. \textbf{LLM-Enhanced Embedding} module completes a two-step embedding generation during the offline phase. Initially, it infers user profiles and enriches item attributes based on the initial user interaction history, completing the side information. Subsequently, it utilizes this information to encode user-item interactions, yielding LLM-enhanced embeddings for use in the online phase. \textbf{LLM Start} operates before the online phase. Specifically, we proactively employ the LLM to simulate interaction data with feedback. By pre-learning from this simulated data, the bandit policy can achieve improved performance during the initial online stages, mitigating the delay associated with LLM inference.


The online phase consists of two satges: \textbf{(1) Inference Stage.} Given the current time period as input, the Euler embedding module initially maps it to the complex space. The hypernetwork then generates a user preference matrix that maps user context features to a time-aware preference space. Here, the user context features refer to the LLM-enhanced embedding. Subsequently, in conjunction with the enhanced item embedding, the bandit policy utilizes the mapped user context features to recommend a suitable item to the current user. \textbf{(2) Train Stage.} The bandit policy utilizes user feedback to obtain a closed-form solution through ridge regression, thereby updating the internal parameters. The hypernetwork is trained in a mini-batch manner to adaptively adjust the user preference matrix in the bandit policy for a given time period. The algorithm's detailed procedure is outlined in Algorithm \ref{alg:hyperbandit}. It's important to note that the user preference matrix is estimated using two low-rank matrices during training. This specific training technique will be discussed in detail in Sec.~\ref{sec:efficienttraining}.



\subsection{Data Augmentation via LLMs}

\subsubsection{Recommendation with Side Information}
Due to the interaction sparsity and the low-quality challenges, we often get a poor performance in some online learning algorithms during the initial stage. To handle this problem, many efforts introduced side information (i.e., the user profiles and the item attributes) to assist recommendation. 
In the MAB setting, when the policy $\pi$ is fixed, each action selection is solely related to the context features of the user and item (that is $\bm{c}_u$ and $\bm{c}_a$). Therefore, the key challenge in our MAB task is how to utilize interaction data $\mathcal{E}$ to learn improved context features (denoted as $\bm{c}_u$ and $\bm{c}_a$, collectively represented as $\textbf{E}$). Actually, the learning process of MAB is to have a better representation of $\textbf{E}$, incorporates side information $\textbf{F}$ and is formulated as maximizing the posterior estimator $p(\textbf{E} ~| ~\textbf{F}, \mathcal{E})$: 
\begin{align}
    \label{side information}
    \textbf{E}^* = \argmax\limits_{\textbf{E}} p( \textbf{E} ~| ~\textbf{F}, \mathcal{E}) 
\end{align}
where $p( \textbf{E} ~| ~\textbf{F}, \mathcal{E})$ means to encode as much user-item interaction information from $\mathcal{E}$ and side information $\textbf{F}$ as possible for accurate arm selection $a_{I_t} =\arg \max _{a \in \mathcal{A}_{t}} f_t\left(\bm c_{u} , \bm c_a \right)$ as defined in Sec~\ref{3.1}.



To enhance context features effectively, we employ Large Language Models (LLMs). In our algorithm, the item context feature $\bm{c_a}(t)$ at time $t$ for item $a \in \mathcal{A}_t$ consists of two components: the latent features $\bm{x}_a(t)$, which require online learning, and the observed features $\bm{s}_a$, which, like the user context feature $\bm{c}_u$, remain fixed during the online learning stage.
Consequently, we have designed two approaches for data augmentation targeting these two feature components. One is \textbf{LLM-Enhanced Embedding}, aiming to enhance the observed features $\bm{s}_a$ and user context feature $\bm{c_u}$. The other is \textbf{LLM Start}, aiming to improve the latent features $\bm{x}_a(t)$).

\begin{figure}[!t]
    \centering
    \includegraphics[width=\linewidth]{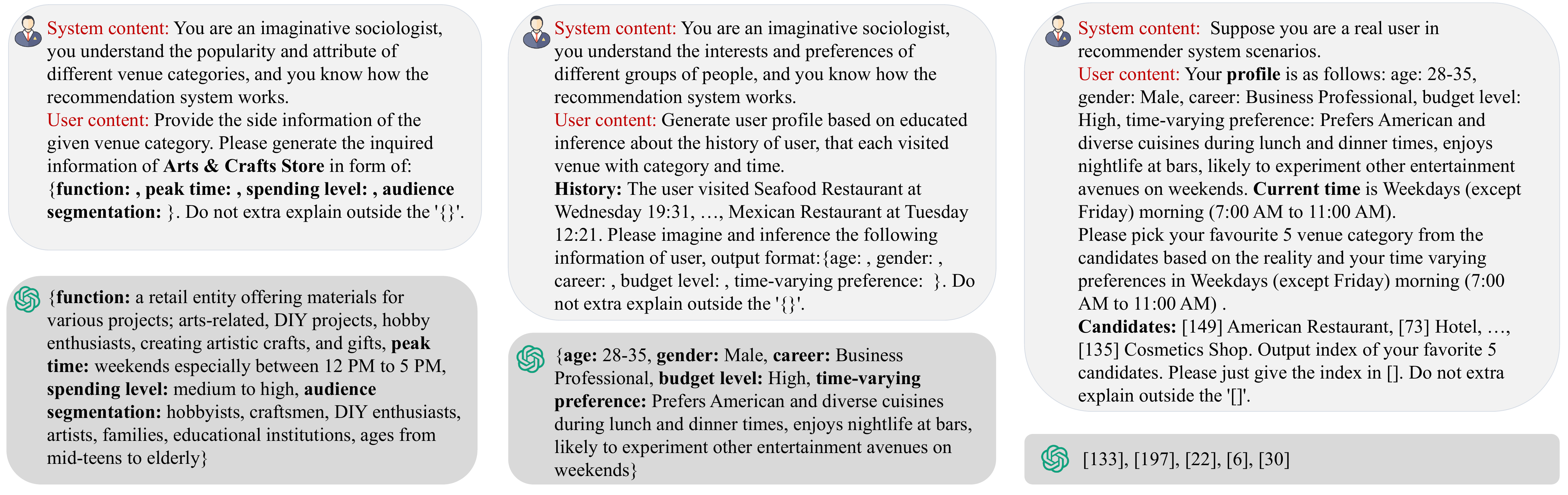}
    \subcaptionbox{Item attribute generation.}[0.32\linewidth]{\hfill}
    \subcaptionbox{User profile generation.}[0.32\linewidth]{\hfill}
    \subcaptionbox{Simulated data generation.}[0.32\linewidth]{\hfill}
    
    \caption{Constructed prompts for LLMs'  side information enhancing  and data simulation. The light gray box contains role assignment, task description, specific information, and output format description. The dark gray box contains the corresponding format for the response.}
    \label{fig:prompt}
    
\end{figure}

\subsubsection{LLM-Enhanced Embedding}
To enhance observed features, we utilize the knowledge base and reasoning capabilities of Large Language Models (LLMs) to summarize user profiles. This involves leveraging users' initial historical interactions and item information, mitigating privacy limitations. Moreover, the LLM-based generation of item attributes aims to create space-unified and informative attributes. Our paradigm for LLM-based side information augmentation comprises two steps:
\begin{enumerate}[(1)]
    \item User/Item Information Augmentation. Using prompts derived from the dataset’s few initial interaction history, we enable LLM to infer user profiles including age, gender, career, budget level and time-varying preference that are not originally part of the dataset. Since the items contains the category feature, we directly enrich the item attributes through the semantic information, thus including function, spending level, audience segmentation. Specific examples are shown in Fig.~\ref{fig:prompt} (a) (b).
    \item LLM Encoding. Utilizing LLM as an encoder, we encode augmented item information as features, which then serve as input for the downstream bandit policy. LLM's efficient and state-of-the-art language understanding capabilities enable the debiasing of item attributes. To align the user feature space with the item feature space, we utilize the average vector of a user’s interacted items in the first week, which helps maintain consistency with the dot-product correlation used in the bandit policy.
\end{enumerate}

\subsubsection{LLM Start}


As mentioned in the preceding section, we obtained augmented side information for users/items and further acquired LLM-Enhanced static embeddings (i.e., $\bm{s}_a$ and $\bm{c_u}$). In the bandit policy, these two components remain unchanged, representing the inherent information of users and items. In addition to them, the bandit policy in this algorithm typically requires learning the latent feature $\bm{x}_a(t)$, which is determined by parameters $\bm \Phi_{a, t}$, $\bm b_{a, t}$. These parameters are incrementally updated after each time step $t$ based on feedback using a closed-form solution, a process that typically requires numerous time steps to achieve algorithm convergence. Based on the augmented side information, we use LLM as a user simulator to generate extensive interaction data facilitating the algorithm to warm start during the offline stage. Specifically, we feed LLM with each user's augmented side information, historical interactions, time information, and item candidates pool. Here, following the setting in \cite{wei2023llmrec}, we introduce item candidates pool to reduce the cost associated with ranking all items. The candidates is selected by the base recommender such as BPR \cite{rendle2012bpr}.  Subsequently, LLM is instructed to select the $top-k$ items most likely to be interacted with the specific user from the candidates at the given time, which are considered as positive samples for the down stream bandit policy. The prompt is shown in Fig.~\ref{fig:prompt} (c).
Through the above operations, we can obtain the simulated interaction data for each user across various time periods.




\subsection{Hypernetwork Assisted Bandit Policy}

\subsubsection{Bandit Policy using User Preference Matrix}
To estimate the time-varying true
reward and account for the user preference shift in each time period, we propose a novel bandit policy that utilizes an estimate of the true user preference matrix $\bm \Theta_p^*$ in Eq.~\eqref{eq:hb:time_reward}. Specifically, the \emph{estimated user preference matrix}, denoted as $\bm \Theta_p \in \mathbb{R}^{d_a\times d_u}$, captures the changes in user preferences during time period $p$. We estimate $\bm \Theta_p$ using a hypernetwork, which will be introduced in Sec.~\ref{sec:HB:Hypernetwork}. 
The estimated user preference matrix allows us to adapt our bandit policy to the evolving user preferences. 
Formally, 
assuming that time step $t$ belongs to time period $p$, 
given a user context $\bm c_u \in \mathbb{R}^{d_u}$ and the estimated user preference matrix, 
the following ridge regression over the current interaction history is employed to estimate the item context $\bm c_{a} (t)$ at time $t\in [T]$:
\begin{equation}
 \label{eq: regularized quadratic loss}
\begin{aligned}
               \bm c_{a} (t) 
                =
                \argmin \limits_{\bm c_{a} \in \mathbb{R}^{d_a}}
                \sum_{(u, a, r) \in \mathcal{H}_t}
                \left[\bm c_{a}^{\intercal}  \bm \Theta_p \bm c_{u} - r(u,a,p) \right]^2 + \lambda \| \bm c_{a} \|_2^2,
\end{aligned}
\end{equation}
where $\hat{r}_{u, a, p} := \bm c_{a}^{\intercal}  \bm \Theta_p \bm c_{u}$ denotes the \emph{estimated time-varying reward}, $\mathcal{H}_t := \left\{ (u_k, a_{I_k}, r_k) \right\}_{k \in [t]}$ represents the \emph{interaction history} up to time $t$, $(u_k, a_{I_k}, r_k)$ denotes that the policy recommended item $a_{I_k}$ to user $u_k$ at time $k$ and received a reward $r_k$, and $\lambda >0$ is the regularization parameter.

To reduce the uncertainty of user preference estimations, we introduce the observed item features. Specifically, we split the context $\bm c_a (t)$ at time $t$ of item $a \in \mathcal{A}_t$ into two parts, represented as $\bm c_a (t):= [\bm s_a^\intercal, \bm x_a (t)^\intercal]^\intercal \in \mathbb{R}^{d_a}$, which includes: the observed features $\bm{s}_a \in \mathbb{R}^{o_a}$, and the latent features $\bm{x}_a(t) \in \mathbb{R}^{l_a}$ that needs to be learned online, where $d_a = o_a + l_a$. Accordingly, we redefine the estimated user preference matrix as $ \bm \Theta_p= \left[\bm \Theta_p^{s \intercal}, \bm \Theta_p^{x \intercal} \right]^\intercal$, where $\bm \Theta_p^{s} \in \mathbb{R}^{o_a\times d_u}$ corresponds to the observed item features $\bm s_a$, and $\bm \Theta_p^{x} \in \mathbb{R}^{l_a\times d_u}$ corresponds to the latent item features $\bm x_a(t)$.
As a result, we can rewrite the ridge regression  in Eq.~\eqref{eq: regularized quadratic loss} as follows:
\begin{equation}
\label{eq:regularized_quadratic_loss_enhanced}
               \bm x_{a} (t) =
                \argmin \limits_{\bm x_{a} \in \mathbb{R}^{l_a}}
                \sum_{(u, a, r) \in \mathcal{H}_t}
                \left\{[\bm s_a^\intercal, \bm x_a(t)^\intercal]  \bm \Theta_p \bm c_{u} - r(u,a,p) \right\}^2 + \lambda \| \bm x_{a} \|_2^2.
\end{equation}

Next, we will derive the optimal solution of Eq.~\eqref{eq:regularized_quadratic_loss_enhanced} in Theorem~\ref{Hyperbandit:thm:solve:x_a}.
\begin{theorem}[Closed-form Solution of Eq.~\eqref{eq:regularized_quadratic_loss_enhanced}]
\label{Hyperbandit:thm:solve:x_a}
The ridge regression problem presented in Eq.~\eqref{eq:regularized_quadratic_loss_enhanced} admits a closed-form solution, which is given as follows:
\begin{equation*}
\begin{aligned}
\bm x_{a}(t) &= \left( \bm \Psi_{a,t} \right)^{-1} \bm b_{a,t}, \\
\bm \Psi_{a,t} &= \sum_{u \in \mathcal{U}_{a,t}}
\left(\bm \Theta_{p}^x \bm c_u\right)\left(\bm \Theta_{p}^x \bm c_u\right)^\intercal
+ \lambda \bm I_{l_a}, \\
\bm b_{a,t} &= \sum_{(u,a,r)\in\mathcal{H}_t}
\left(\bm \Theta_{p}^x \bm c_u\right)
\Bigl[
r(u,a,p) - \left(\bm \Theta_{p}^s \bm c_u\right)^\intercal \bm s_a
\Bigr].
\end{aligned}
\end{equation*}
where $\mathcal{U}_{a, t}$ denotes the set of users (possibly with duplicates) who have been recommended item $a$ until time $t$, and $\bm I_{l_a} \in \mathbb{R}^{l_a \times l_a}$ is a identity matrix. The statistics $\left(\bm \Psi_{a,t}, \bm b_{a,t}\right)$ can be updated incrementally and the detailed computation can be found in Algorithm~\ref{alg:hyperbandit}.
\end{theorem}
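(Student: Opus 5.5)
The plan is to treat Eq.~\eqref{eq:regularized_quadratic_loss_enhanced} as an ordinary ridge regression in the unknown $\bm x_a$ and solve it through its first-order optimality condition. First I would rewrite the estimated reward using the block structure $\bm \Theta_p = [\bm \Theta_p^{s\intercal}, \bm \Theta_p^{x\intercal}]^\intercal$:
\begin{equation*}
[\bm s_a^\intercal, \bm x_a^\intercal]\, \bm \Theta_p \bm c_u = \bm s_a^\intercal \bm \Theta_p^s \bm c_u + \bm x_a^\intercal \bm \Theta_p^x \bm c_u .
\end{equation*}
I then introduce two shorthands: the effective regressor $\bm z_u := \bm \Theta_p^x \bm c_u \in \mathbb{R}^{l_a}$, and the offset-corrected target $y_u := r(u,a,p) - (\bm \Theta_p^s \bm c_u)^\intercal \bm s_a$. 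With these, each residual in the loss becomes $\bm z_u^\intercal \bm x_a - y_u$.

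Only the history tuples whose item is $a$ involve $\bm x_a$; the other tuples are constants with respect to $\bm x_a$. I read the sum in Eq.~\eqref{eq:regularized_quadratic_loss_enhanced} in this per-item sense. The objective therefore reduces to
\begin{equation*}
L(\bm x_a) = \sum_{u \in \mathcal{U}_{a,t}} (\bm z_u^\intercal \bm x_a - y_u)^2 + \lambda \|\bm x_a\|_2^2 ,
\end{equation*}
where $\mathcal{U}_{a,t}$ is counted with multiplicity.

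Next I use convexity. $L$ is a quadratic whose Hessian is $2\bm \Psi_{a,t}$, with $\bm \Psi_{a,t} = \sum_{u} \bm z_u \bm z_u^\intercal + \lambda \bm I_{l_a}$. Since $\lambda > 0$, this matrix is positive definite. So $L$ is strictly convex and coercive, which gives a unique minimizer characterized by $\nabla L = 0$. Setting the gradient $2\sum_u \bm z_u(\bm z_u^\intercal \bm x_a - y_u) + 2\lambda \bm x_a$ to zero gives
\begin{equation*}
\bm \Psi_{a,t} \bm x_a = \sum_u \bm z_u y_u = \bm b_{a,t} .
\end{equation*}
Positive definiteness makes $\bm \Psi_{a,t}$ invertible, so $\bm x_a(t) = \bm \Psi_{a,t}^{-1} \bm b_{a,t}$. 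Substituting back the definitions of $\bm z_u$ and $y_u$ recovers exactly the stated formulas.

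Finally, for the incremental claim, I would observe that both statistics are sums over interactions with item $a$. Each new interaction $(u_t, a, r_t)$ therefore adds $\bm z_{u_t} \bm z_{u_t}^\intercal$ to $\bm \Psi_{a,t}$ and $\bm z_{u_t} y_{u_t}$ to $\bm b_{a,t}$. The inverse can be maintained by a Sherman--Morrison rank-one update.

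The main obstacle is interpretive rather than computational. The period $p$, and hence $\bm \Theta_p$, may change along the history, and the hypernetwork is also being retrained. The statement writes a single $\bm \Theta_p$, so I would fix $\bm \Theta_p$ as the current estimate when solving, or more generally index $\bm z_u$ and $y_u$ by the period $p_k$ of each interaction. The derivation goes through verbatim either way, because it only uses the fact that each term is linear in $\bm x_a$.
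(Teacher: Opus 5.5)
Your proposal is correct and follows the paper's route: split $[\bm s_a^\intercal, \bm x_a^\intercal]\bm \Theta_p \bm c_u$ into its observed and latent blocks, set the gradient of the resulting quadratic to zero to get $\bm \Psi_{a,t}\bm x_a = \bm b_{a,t}$, and invert. Beyond the paper, you also show that $\bm \Psi_{a,t}$ is positive definite because $\lambda > 0$, so the stationary point is the unique minimizer and the inverse exists, and you note that the incremental update is exact only if each interaction keeps its own $\bm \Theta_{p_k}$; the paper asserts the first point and does not address the second.
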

\begin{proof}[Proof of Theorem~\ref{Hyperbandit:thm:solve:x_a}]
Firstly, Eq.~\eqref{eq:regularized_quadratic_loss_enhanced} can be rewritten as follows:
\begin{equation}
\label{eq:regularized_quadratic_loss_enhanced:re}
\begin{aligned}
               \bm x_{a} (t) 
               &=
                \argmin \limits_{\bm x_{a} \in \mathbb{R}^{l_a}}
                \sum_{(u, a, r) \in \mathcal{H}_t}
                \left\{[\bm s_a^\intercal, \bm x_a(t)^\intercal]  \bm \Theta_p \bm c_{u} - r(u,a,p) \right\}^2 + \lambda \| \bm x_{a} \|_2^2
                \\
                &=
                \argmin \limits_{\bm x_{a} \in \mathbb{R}^{l_a}}
                \sum_{(u, a, r) \in \mathcal{H}_t}
                \left[\bm s_a^\intercal \bm \Theta_p^s\bm{c}_u +\bm x_a(t)^\intercal \bm \Theta_p^x \bm{c}_u - r(u,a,p) \right]^2 +  \lambda \| \bm x_{a} \|_2^2.
\end{aligned}
\end{equation}
Then, taking the derivative of Eq.~\eqref{eq:regularized_quadratic_loss_enhanced:re} and setting it equal to zero, we obtain:
\begin{equation*}
2\sum_{(u,a,r)\in\mathcal{H}_{t}}\left(\bm{s}_{a}^{\intercal}\bm{\Theta}_{p}^{s}\bm{c}_{u}\right)\bm{\Theta}_{p}^{x}\bm{c}_{u}+2\sum_{(u,a,r)\in\mathcal{H}_{t}}\left(\bm{x}_{a}(t)^{\intercal}\bm{\Theta}_{p}^{x}\bm{c}_{u}\right)\bm{\Theta}_{p}^{x}\bm{c}_{u}-2\sum_{(u,a,r)\in\mathcal{H}_{t}}r(u,a,p)\bm{\Theta}_{p}^{x}\bm{c}_{u}+2\lambda\bm{x}_{a}(t)=0.
\end{equation*}
We can rewrite the above equation in matrix form:
\[
\bm{\Psi}_{a,t}\bm{x}_{a}(t)=\bm{b}_{a,t}.
\]
Since $\bm{\Psi}_{a,t}$ is invertible, we can obtain the solution:
\[
\bm{x}_{a}(t)=\left(\bm{\Psi}_{a,t}\right)^{-1}\bm{b}_{a,t}.
\]
This completes the proof of Theorem~\ref{Hyperbandit:thm:solve:x_a}. 
\end{proof}




According to the UCB policy in bandit algorithms \cite{li2010contextual, wang2017factorization,Zhang2021Counterfactual}, we define the following UCB-based relevance score function for executing action (i.e., online recommendation) at time $t$:  
\begin{equation*}
\begin{aligned}
f_t (\bm c_u, \bm c_a(t)):= \left[ \bm s_{a}^{\intercal}, \bm x_{a}(t)^{\intercal} \right] \bm \Theta_{p} \bm c_{u} 
            +
            \alpha \left[\left(\bm \Theta_{p}^{x} \bm c_{u} \right)^\intercal \left(\bm \Psi_{a, t} \right)^{-1} \bm \Theta_{p}^{x} \bm c_{u} \right]^{\frac{1}{2}},   
\end{aligned}
\end{equation*}
where $\alpha > 0$ is the exploration parameter, and the term multiplied by $\alpha$ is the exploration term. 
In this way, the executed action at time $t$ can be selected by $a_{I_t} = \argmax_{a \in \mathcal{A}_t} f_t (\bm c_u, \bm c_a (t))$.  Note that the exploration term is related to $\Theta_{p}^{x}$ but independent of $\Theta_{p}^{s}$, where $\Theta_{p}^{x}$ denotes the part of $\bm{\Theta}$ corresponding to the latent variable $\bm{x}_a(t)$. This is because, in our model design, the observed features are static information generated by the LLM and remain fixed during online learning, 
whereas the latent features are dynamically learned from users' real-time feedback. The exploration term in UCB quantifies the model's uncertainty in reward estimation, which mainly arises from parameters being updated online (i.e., the latent features). Hence, the algorithm explores to reduce this uncertainty, and no exploration is needed for fixed observed features.
\begin{algorithm}[t]
    \caption{HyperBandit+}   
    \label{alg:hyperbandit}
    \begin{algorithmic}[1]
    \REQUIRE   Latent features of items $\bm x_{a\in \mathcal{A}} = \bm{0}^{l_a}$,data buffer $\mathcal{D}_{n=1}= \emptyset$,
    $\bm \Phi_{a \in \mathcal{A}, t=1} = \bm{O}^{l_a\times l_a}$, $\bm b_{a \in \mathcal{A}, t=1} =\bm{0}^{l_a}$,
    $\{T_n\}_{n \in [N]}$ set of time steps in each updating part of online stage, regularization parameter $\lambda > 0$, exploration parameter $\alpha > 0$.  
    
    \STATE Initialize hypernetwork parameters $\boldsymbol{\xi}_{n=1}$ with Xavier Normal
    \STATE  $// ~~ \texttt{Adopt Euler Embedding}$ 
    \STATE Compute the time period embedding $\{\bm s_p\}_{p \in \mathcal{P}}$
    \STATE  $// ~~ \texttt{Adopt LLM-Enhanced Embedding}$
    \STATE Generate the side information and the LLM-enhanced embedding $\{\bm c_u\}$ and $\{\bm s_a\}$   
    \STATE  $// ~~ \texttt{Adopt \hyperref[alg:LLMStart]{LLM Start} to initial parameters}$
    
    \STATE $\bm x_{a\in \mathcal{A}}$,
    $\bm \Phi_{a \in \mathcal{A}, t=1}$, $\bm b_{a \in \mathcal{A}, t=1}$, $\boldsymbol{\xi}_{n=1}$ = $\bm{LLM Start}$($\bm x_{a\in \mathcal{A}}$,
    $\bm \Phi_{a \in \mathcal{A}, t=1}$, $\bm b_{a \in \mathcal{A}, t=1}$, $\boldsymbol{\xi}_{n=1}$)
    \STATE  $//~~N~\texttt{Mini-batch for Online Stage}$
    \FOR{$n  \in [N]$}
    \STATE  $// ~~ \texttt{Bandit Policy Procedure}$
    
    \FOR{$t = 1 $ to $ T_n $}
            \STATE Receive the user $u_t$ and the time period embedding $\bm s_{p_t}$
            \STATE Obtain the set of candidate items $\mathcal{A}_t$
            \STATE Obtain the observed features $\bm s_a, \forall a \in \mathcal{A}_{t}$
            \STATE Obtain the latent features $\bm x_a(t), \forall a \in \mathcal{A}_{t}$
            \STATE Estimated the user preference matrix $\bm \Theta_{p_t}^{(n)} :=\left[ \bm \Theta_{p_t}^{s(n)\intercal}, \bm \Theta_{p_t}^{x(n)\intercal} \right]^\intercal \leftarrow h_{\bm \xi_{n}} (\bm s_{p_t})$
            \STATE Recommend item $a_{I_t} \in \mathcal{A}_{t}$ to user $u_t$ following $a_{I_t} \leftarrow \arg \max _{a \in  \mathcal{A}_t}
            \left[ \bm s_{a}^{\intercal}, \bm x_{a}(t)^{\intercal} \right] \bm \Theta_{p_t}^{(n)} \bm c_{u_t} 
            +
            \alpha \left[\left( \bm \Theta_{p_t}^{x(n)} \bm c_{u_t} \right)^\intercal \left(\bm \Psi_{a, t} \right)^{-1} \bm \Theta_{p_t}^{x(n)} \bm c_{u_t} \right]^{\frac{1}{2}}$ 
            
            \STATE Observe reward $r_t = r(u_t,a_{I_t},p_t)$
            \STATE $\mathcal{D}_{n} \leftarrow \mathcal{D}_{n} \cup \left\{ (u_t, a_{I_t}, p_t, r_t,\mathcal{A}_t)  \right\} $
            \STATE  $// ~~ \texttt{Bandit Policy Updating}$            
            \STATE Get the user preference vector $\bm P_{t} \leftarrow \left(\bm \Theta_{p_t}^{x(n)} \bm c_{u_t}\right)^\intercal  \in \mathbb{R}^{l_a}~$ for the latent item features

            \STATE Get the user preference vector $\bm Q_{t} \leftarrow \left(\bm \Theta_{p_t}^{s(n)} \bm c_{u_t}\right)^\intercal \in \mathbb{R}^{o_a}$ for the observed item features
     
            \STATE $\bm \Phi_{a_{I_t},t+1} \leftarrow \bm \Phi_{a_{I_t},t} + \bm P_{t}^\intercal \bm P_{t} $, \quad  $\bm \Psi_{a_{I_t},t+1} \leftarrow \lambda \bm I + \bm \Phi_{a_{I_t},t+1}$
            
            \STATE $\bm b_{a_{I_t},t+1} \leftarrow \bm b_{a_{I_t},t} + \bm P_{t}^\intercal 
            \left(
                \bm r_t - \bm Q_{t} \bm s_{a_{I_t}}
            \right)$
            
            \STATE $\bm x_{a_{I_t},t+1} \leftarrow 
                \left( \bm \Psi_{a_{I_t},t+1}  \right)^{-1} ~\bm b_{a_{I_t},t+1}$
            \ENDFOR
            
            \STATE  $//~~ \texttt{Hypernetwork Updating}$
        
            \STATE Update hypernetwork parameter $\bm \xi_{n+1} \leftarrow \Delta (\bm \xi_{n})$ using efficient training method via low-rank factorization (in Sec.~\ref{sec:HB:Lowrank:training}) and the Adam optimizer on $\mathcal{D}_{n}$
            \STATE Release $\mathcal{D}_{n}$
            and set $\mathcal{D}_{n+1}\leftarrow \emptyset $
    \ENDFOR
\end{algorithmic}
\end{algorithm}

\begin{algorithm}[t]
    \caption{LLM Start}   
    \label{alg:LLMStart}
    \begin{algorithmic}[1]
    \REQUIRE  $\bm x_{a,t}$,
    $\bm \Phi_{a ,t} $, $\bm b_{a ,t}$, $\boldsymbol{\xi}$,

    \STATE  $// ~~ \texttt{Adopt Euler Embedding}$ 
    \STATE Compute the time period embedding $\{\bm s_p\}_{p \in \mathcal{P}}$
    \STATE  $// ~~ \texttt{Adopt LLM-Enhanced Embedding}$
    \STATE Generate the side information and the LLM-enhanced embedding $\{\bm c_u\}$ and $\{\bm s_a\}$   
   \STATE  $//~~N^{\textit{LLM}}~\texttt{Mini-batch for Online Stage}$
    \FOR{$n  \in  [N^{\textit{LLM}}]$}
    \FOR{$t = 1 $ to $ T_n $}
            \STATE $// ~~ \texttt{Bandit Policy Procedure}$
            \STATE $...$
            \ENDFOR
          
            \STATE  $// ~~ \texttt{Hypernetwork Updating}$
            \STATE $...$
        
    \ENDFOR
    
    \ENSURE   $\bm x_{a,t}$,
    $\bm \Phi_{a ,t} $, $\bm b_{a ,t}$, $\boldsymbol{\xi}$
\end{algorithmic}
\end{algorithm}

\subsubsection{Hypernetwork for Time-Varying Preference}
\label{sec:HB:Hypernetwork}
In the last section, we describe the bandit policy given parameter matrix $\bm \Theta_p$ in time period $p$. In this section, we explain how the hypernetwork generates the parameter matrix $\bm \Theta_p$. 
The main concept involves utilizing a hypernetwork that takes the embedding of the current time period as input and generates the parameters of the user preference matrix in the bandit policy. This enables the policy to adapt and adjust itself to accommodate changes in the distribution of user preferences over time.


To ensure stability in online recommendation, we incrementally update the hypernetwork $h$ in mini-batches, where the total $T$ time steps are divided into $N$ parts, and the $n$-th part, $n \in [N]$, contains $T_n$ time steps, corresponding to $T_n$ interaction histories.
In this way, the hypernetwork $h$ is updated $N$ times, and during the $n$-th update, the data buffer $\mathcal{D}_{n}:= \left\{ (u_t, a_{I_t},p_t, r_t, \mathcal{A}_t) \right\}_{t \in [T_n]}$ is used as the training data\footnote{It is important to note that the interaction history corresponding to the same index $t$ in different data buffers $\mathcal{D}_n$ may be different.}, where $r_t = r(u_t,a_{I_t},p_t)$. Then, given the time period embedding $\bm s_p$, the hypernetwork after the $(n-1)$-th update, denoted by $h_{\bm \xi_n}$, can be represented by:
\begin{equation}
\label{eq:hn:formulation}
\bm \Theta_{p}^{(n)}  := 
h_{\bm \xi_{n}} (\bm s_p),
\end{equation}
where $\bm \xi_n$ represents the model parameters of the hypernetwork, and the superscript $(n)$ on $\bm \Theta_{p}^{(n)}$ indicates that it is generated by $h_{\bm \xi_n}$. 
As illustrated in Figure~\ref{fig:HyperBandit+_structure}, we implement the hypernetwork $h$ using a Multi-Layer Perceptron (MLP). 
In this way, the MLP acts like a condition network, inputting the embedding of current time period, outputting the corresponding user preference matrix. 
Besides, the time period embedding $\bm s_p$ is generated via GloVe model~\cite{pennington2014glove} by imputing the current time period $p$.



To train the hypernetwork, we take inspiration from the Listnet loss design~\cite{cao2007learning} to quantify the discrepancy between the estimated reward and the true label for each candidate item in $\mathcal{A}_t$ at time~$t$. 
Specially, we use the estimated time-varying reward $\hat{r}
_{u, a, p} = \bm c_{a}^{\intercal}  \bm \Theta_p \bm c_{u}$ in Eq.~\eqref{eq: regularized quadratic loss}, 
and construct the true labels according to the following rules: if the user $u$ clicks on the recommended item $a$ in time period $p$, the label is set to 1; if the user skips the recommended item, the label is set to -1; if the item is a candidate but not recommended, the label is set to 0. Formally, $y_{u, a, p} = 1$ if $r_{u, a, p} = 1$; $y_{u, a, p} = -1$ if $r_{u, a, p}$ = 0; $y_{u, a, p} = 0$ if it is a candidate item but not recommended. Then, assuming that the number of actions is $M := |\mathcal{A}_1| = \cdots = |\mathcal{A}_T|$, 
during the $n$-th incremental update, the loss function on the data buffer $\mathcal{D}_{n}$ could be shown as follows: 
\begin{equation*}  
\mathcal{L}_{\bm \xi}^{(n)} = -
\sum_{t=1}^{T_n} 
\sum_{k=1}^{M} P \left(y_{u_t, a_k, p_t}\right) \log \widehat{P} \left(\hat{r}_{u_t, a_k, p_t}\right),
\end{equation*}
where $\bm \xi$ represents the hypernetwork parameters that need to be optimized, $p_t$ corresponds to the time period where index $t$ in $\mathcal{D}_n$ is located, and
\begin{equation*}    \widehat{P} \left(\hat{r}_{u_t, a_k, p_t}\right)=\frac{\exp \left(\hat{r}_{u_t, a_k, p_t}\right)}{\sum_{i=1}^{M} \exp \left(\hat{r}_{u_t, a_i, p_t}\right)}, \quad 
{P} \left(y_{u_t, a_k, p_t}\right)=\frac{\exp \left(y_{u_t, a_k, p_t}\right)}{\sum_{i=1}^{M} \exp \left(y_{u_t, a_i, p_t}\right)}.
\end{equation*}


\subsubsection{Euler Embedding for Hypernetwork}

As discussed in the preceding section, the hypernetwork takes time embedding ($\bm{s_p}$) as input and outputs the corresponding parameter matrix, effectively functioning as a time index model. Consequently, the encoding approach for the time period variable ($\bm{p}$) significantly influences downstream bandit policy. Fourier feature mapping, introduced by Rahimi and Recht~\cite{rahimi2007random} to approximate stationary kernel functions using random Fourier features based on Bochner's theorem, has found widespread application. Extending this technique to periodic time embedding, we employ a Fourier feature mapping function, denoted as $\gamma$, to transform the input $\bm{p}$ before feeding it into the hypernetwork. The function $\gamma$ maps the input $\bm{p}$ to the surface of a higher-dimensional hypersphere using a set of sinusoids. The mapped coordinates in hypersphere is denoted as $\bm s_p$: 
\begin{equation}
\label{eqn:gamma}
    \bm s_p = \gamma(\bm p) = [
     \cos(2 \pi \mathrm{a_1}(\bm p)),
     \sin(2 \pi \mathrm{a_1}(\bm p) ), 
    \ldots, 
    \cos(2 \pi \mathrm{a_m}(\bm p) ),
     \sin(2 \pi \mathrm{a_m}(\bm p) )]^{\top},
\end{equation}
where $\bm{m}$ represents the number of cycles, which is set to 2 in our algorithms, signifying the weekly and daily cycles. The function $\mathrm{a_i}(\bm{p}) \in [0,1], i\in {1,\dots,\bm{m}}$ denotes the relative position of $\bm{p}$ within the corresponding cycle. For instance, consider a scenario where $\bm{p}$ is characterized by two cycles: a weekly cycle and a daily cycle (i.e., $\bm{m} = 2$). When $\bm{p}=7$ (i.e., Tuesday afternoon), we have $\mathrm{a_1}(\bm{p}) = 2/7$ and $\mathrm{a_2}(\bm{p}) = 3/5$. This is because ``Tuesday'' corresponds to index 2 within the weekly cycle of 7 days, and ``afternoon'' corresponds to index 3 within the daily cycle of 5 time blocks. Details regarding the segmentation of time blocks and their corresponding indices within each cycle are provided in Section~\ref{sec:timevaryinguserpreferences}.

\subsection{\mbox{Efficient Training via Low-Rank Factorization}}
\label{sec:efficienttraining}

\subsubsection{Analysis of Low-Rank Structure of User Preference Matrix}

\label{subsec: Low-rank Factorization}

\begin{figure*}[t]
\centering
\begin{subfigure}[b]{0.32\textwidth}
\includegraphics[width=\textwidth]{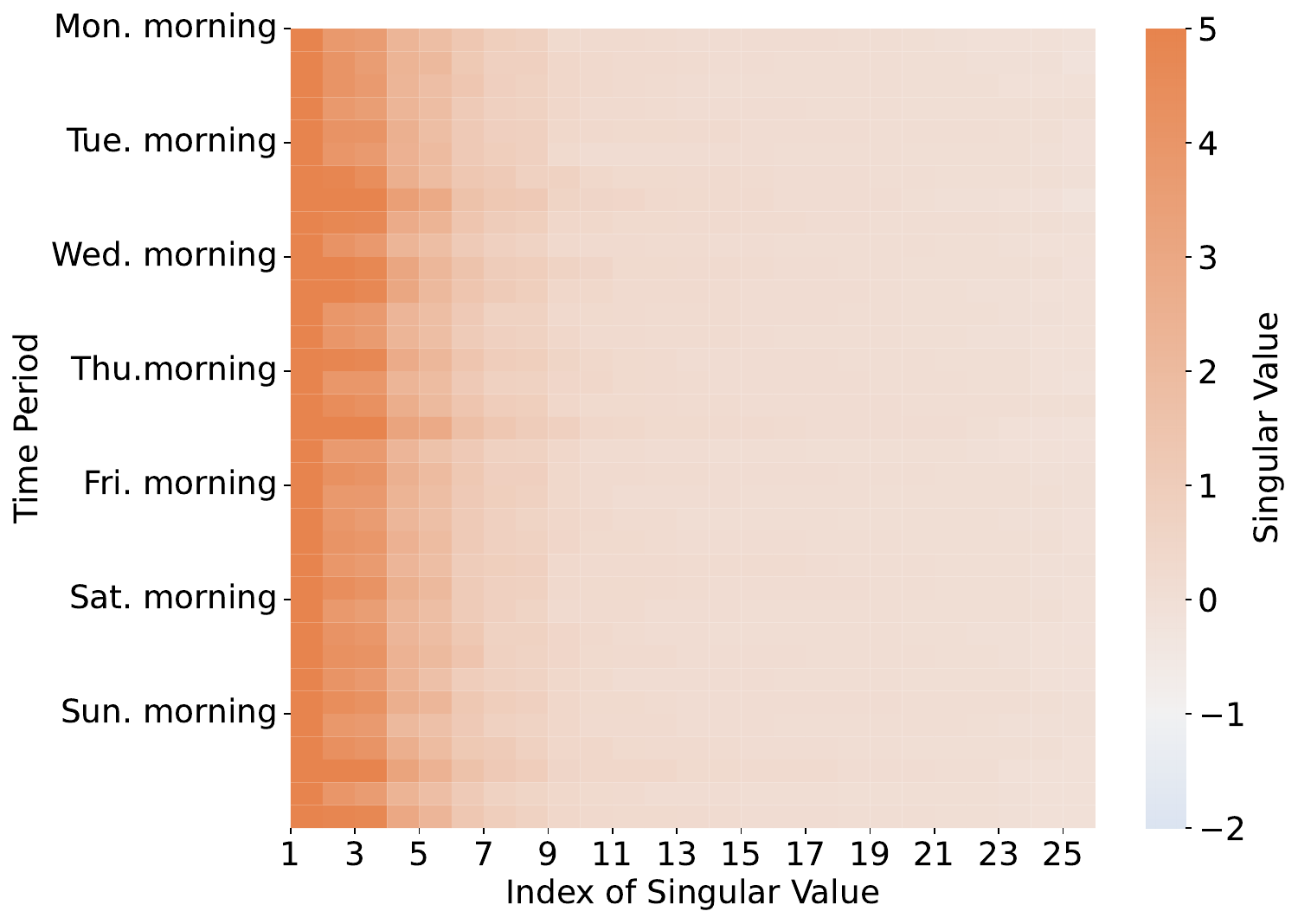}
\caption{KuaiRec}
\end{subfigure}
\hfill
\begin{subfigure}[b]{0.32\textwidth}
\includegraphics[width=\textwidth]{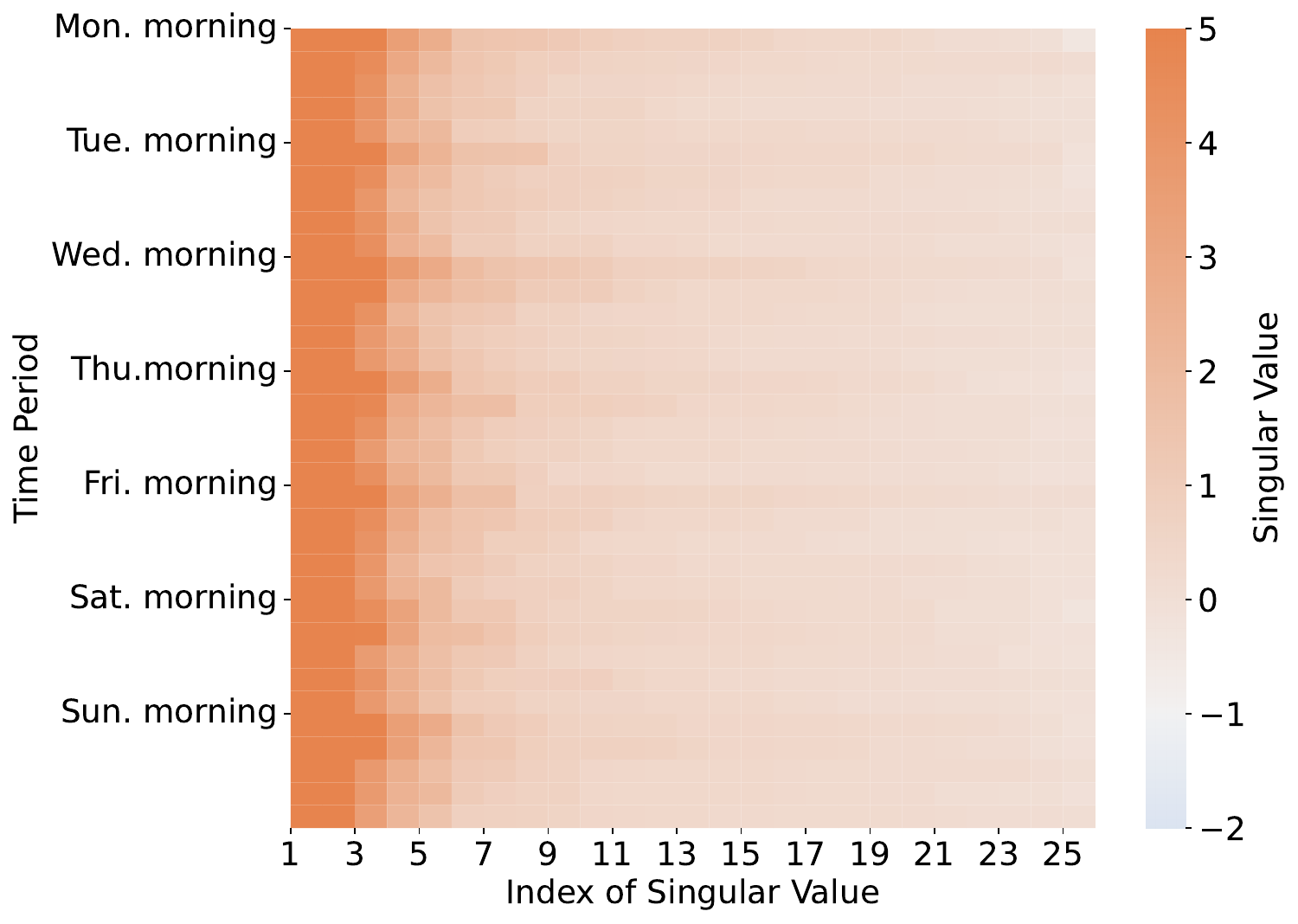}
\caption{NYC}
\end{subfigure}
\hfill
\begin{subfigure}[b]{0.32\textwidth}
\includegraphics[width=\textwidth]{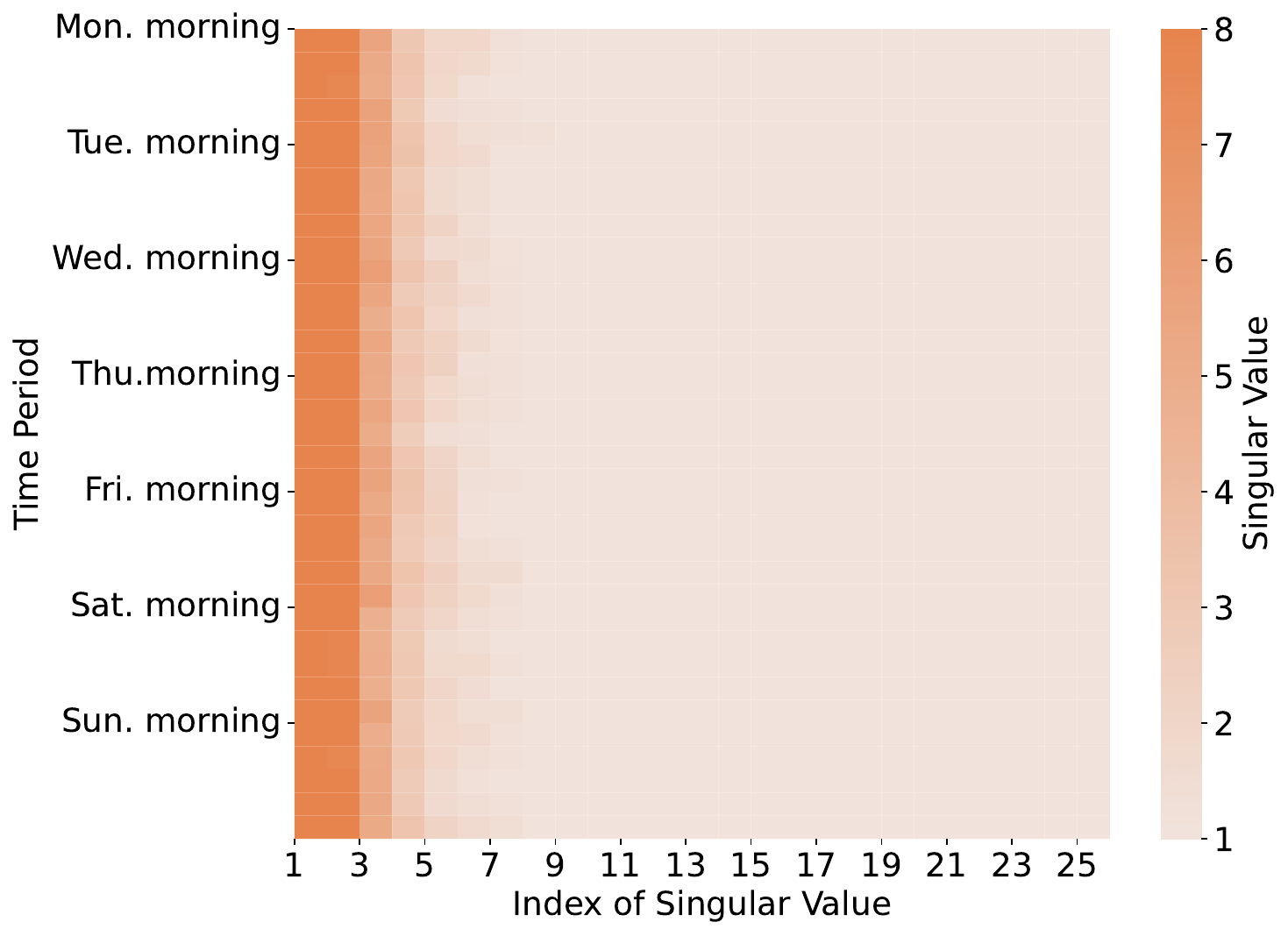}
\caption{TKY}
\end{subfigure}
\caption{
The distribution of singular eigenvalues (SEs) of user preference matrices across different time periods. The horizontal axis represents the index of SEs, arranged in descending order, while the vertical axis represents the time periods. The darkness of the colors corresponds to the magnitude of the singular values.}
\label{fig:SVD}
\end{figure*}
Since the user preference matrix $\bm \Theta_{p} \in \mathbb{R}^{d_a \times d_u}$ is generated by the hypernetwork in Eq.~\eqref{eq:hn:formulation}, a large output dimension (i.e., $d_a \times d_u$) 
would incur significant training costs. Hence, we consider representing the entire user preference matrix using a smaller number of parameters. 
Based on this motivation, it is natural to investigate whether the user preference matrix $\bm \Theta_p$ exhibits a low-rank structure. 
To verify the presence of low-rank structures, we perform singular value decomposition (SVD) on $\bm \Theta_{p}^{(N)}$ across different time periods. 
As shown in Fig.~\ref{fig:SVD}, when the singular values of the user preference matrices are sorted in descending order for different time periods, it becomes apparent that the distribution of singular values is concentrated in the first few dimensions, which is always less than half of the dimensionality of the user preference matrices\footnote{Following the setting of baselines~\cite{li2010contextual,wu2018learning,hariri2015adapting,wang2017factorization}, we set $d_a = d_u = 25$, thus $\bm\Theta_p$ is a square matrix. 
}. 
Based on this observation, we can conclude that there are strong low-rank structures present in the user preference matrices. This implies that a low-rank representation of the matrix $\bm \Theta_p$ could preserve nearly all of its information content.

\subsubsection{Training Process with Low-Rank Factorization}
\label{sec:HB:Lowrank:training}
Based on the analysis above, we try to improve the training efficiency of hypernetwork through explicitly modeling the low-rank structure in the user preference matrix $\bm \Theta_p$ (for ease of exposition, we omit the superscript of $\bm \Theta_p^{(n)}$ below). Specifically, we propose to approximate  $\bm\Theta_p$ with its low-rank approximation. Here, we leverage matrix factorization approach to achieve the approximation. 
Given the \emph{estimated rank} $\tau>0$, we model the low-rank structure of $\bm \Theta_p$ with the product of two rank-$\tau$ latent matrices $\bm A_p \in \mathbb{R}^{d_a\times \tau}$ and $\bm B_p\in \mathbb{R}^{d_u\times \tau}$, i.e., $\bm \Theta_p \approx \bm A_p  \bm B_p^\intercal$, as shown in Fig.~\ref{fig:lowrank_structure}. 



In the implementation of the hypernetwork, given a time period $p \in \mathcal{P}$, the hypernetwork outputs a vector represented by $\mathrm{Concat}(\mathrm{Vec}(\bm A_p),\mathrm{Vec}(\bm B_p) ) \in \mathbb{R}^{\tau d_a+\tau d_u}$, where $\mathrm{Concat}(\cdot)$ denotes the concatenation operation. Then, the vector $\mathrm{Vec}(\bm A_p) \in \mathbb{R}^{\tau d_a}$ is reshaped into a matrix  $\bm A_p \in \mathbb{R}^{d_a \times \tau}$, and the vector $\mathrm{Vec}(\bm B_p) \in \mathbb{R}^{\tau d_u}$ is reshaped into a matrix $\bm B_p \in \mathbb{R}^{d_u \times \tau}$. Finally, the product $\bm A_p  \bm B_p^\intercal$ is obtained to estimate $\bm \Theta_p$.  This matrix factorization reduces the output dimension of the hypernetwork $h$ (defined in Eq.~\eqref{eq:hn:formulation}) from $d_a  d_u$ to $\tau  (d_a + d_u)$, 
effectively alleviating the training efficiency issues.

\begin{figure}[t]
    \centering
    \includegraphics[width=0.8\linewidth]{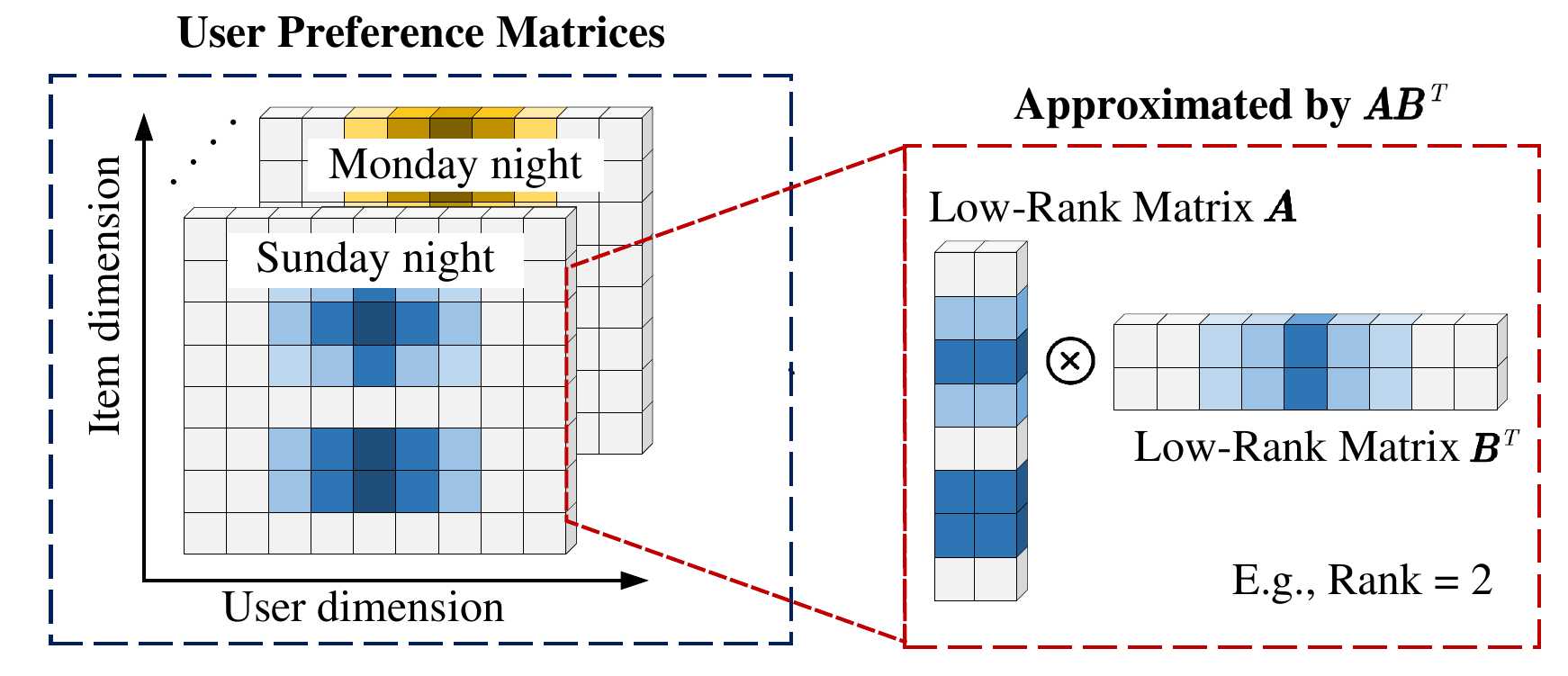}
    \caption{User preference matrix estimation using low rank factorization: An example with estimated rank $\tau = 2$.}
    \label{fig:lowrank_structure}
\end{figure}


\section{Dynamic Regret Analysis}

\subsection{Dynamic Regret Upper Bound}
The dynamic regret bound serves as a fundamental theoretical guarantee for online learning algorithms \cite{Cesa2006Prediction,Bubeck2012Regret,Shalev2011OLA,Hazan2016Introduction,Zhang2019Survey}. 
In this section, we focus on the dynamic regret that in non-stationary environments \cite{cheung2019learning,besbes2019optimal}, and provide a dynamic regret upper bound of the proposed HyperBandit. 
First, we define the \emph{dynamic regret} as follows:
\begin{equation}
\label{eq:HB:regret:def}
    \mathrm{D\text{-}Reg}(T) := 
    \sum_{t \in [T]} 
    \left[
        r^*(u_t, a_t^*, p_t) - r^*(u_t, a_{I_t}, p_t) 
    \right],
\end{equation}
where $a_t^*$ represents the action with the highest time-varying true reward $r^*$ (defined in Eq.~\eqref{eq:hb:time_reward}) at time $t$, $u_t$ denotes the user for whom the item is recommended at time $t$, and $p_t$ represents the time period to which $t$ belongs.  
Recalling that $I_t$ denotes the index of the action executed by HyperBandit at time $t$, the regret in Eq.~\eqref{eq:HB:regret:def} measures the difference between the accumulated time-varying true rewards of the best dynamic policy and our policy.


\begin{lemma}\label{lem:expect_real}

For any $t \in [T]$ and $\delta \in (0, 1)$, with probability at least $1 - \delta$, the following holds for any time step $t$.

\begin{equation}
    \left|\langle \bm P_t, \hat{\bm x}_{a}(t) - \bm x_{a}(t) \rangle \right| \leq L \sum_{p=1}^t \|\bm x_a(p) - \bm x_a(p+1)\|_2 + \beta_t \|\bm{\bm P_t}\|_{\bm \psi_{a, t-1}^{-1}},
\end{equation}
where $\bm{P}_t$ is the intermediate variable defined in Alg.~\ref{alg:hyperbandit}, produced using the real data at step $t-1$. Moreover, $\bm{\psi}_{a,t-1} = \lambda I_d + \bm{\phi}_{a,t-1} + \bm{\phi}_{a,\mathrm{LLM}}$, and $\bm{\phi}_{a,t-1}$ and $\bm{\phi}_{a,\mathrm{LLM}}$ also denote the intermediate variable $\bm{\phi}$ defined in Alg.~\ref{alg:hyperbandit}, generated from the real data at step $t-1$ and by the LLM-Start module, respectively.

\end{lemma}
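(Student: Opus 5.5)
Here $\hat{\bm x}_a(t)=\bm\psi_{a,t-1}^{-1}\bm b_{a,t-1}$ is the ridge estimate of Theorem~\ref{Hyperbandit:thm:solve:x_a}, computed from the real interactions and the LLM-Start interactions. $\bm x_a(t)$ is the (time-varying) true latent item feature. The strategy is the usual drifting linear bandit argument: write $\hat{\bm x}_a(t)-\bm x_a(t)$ explicitly, split it into a drift part, a noise part and a regularization/prior part, and bound each. Concretely, index by $k$ the interactions with item $a$ that are stored in $\bm\phi_{a,t-1}$ and $\bm\phi_{a,\mathrm{LLM}}$. Each has a preference vector $\bm P_k$ and an observation
\[
r_k-\bm Q_k\bm s_a=\langle \bm P_k,\bm x_a(k)\rangle+\eta_k .
\]
Then $\bm b_{a,t-1}=\sum_k \bm P_k(\langle\bm P_k,\bm x_a(k)\rangle+\eta_k)$. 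Adding and subtracting $\sum_k\bm P_k\bm P_k^\intercal\bm x_a(t)$ gives
\[
\hat{\bm x}_a(t)-\bm x_a(t)=\bm\psi_{a,t-1}^{-1}\sum_k\bm P_k\bm P_k^\intercal(\bm x_a(k)-\bm x_a(t))+\bm\psi_{a,t-1}^{-1}\sum_k\bm P_k\eta_k-\lambda\bm\psi_{a,t-1}^{-1}\bm x_a(t).
\]
For the LLM-Start samples we take their reference feature to be $\bm x_a(1)$, or equivalently we absorb any mismatch of the simulated labels into the drift/bias term.

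The next step is to take the inner product with $\bm P_t$ and bound the three pieces separately.

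\textbf{Drift term.} Telescope $\bm x_a(k)-\bm x_a(t)=\sum_{p=k}^{t-1}(\bm x_a(p)-\bm x_a(p+1))$. This gives
\[
\Bigl|\bm P_t^\intercal\bm\psi^{-1}\sum_k\bm P_k\bm P_k^\intercal(\bm x_a(k)-\bm x_a(t))\Bigr|\le\sum_{p=1}^{t}\|\bm x_a(p)-\bm x_a(p+1)\|_2\,\max_{k}\bigl|\bm P_t^\intercal\bm\psi^{-1}\bm P_k\bigr|\,\|\bm P_k\|_2 .
\]
Assume boundedness: $\|\bm c_u\|_2\le 1$ and the hypernetwork outputs have bounded operator norm, so that $\|\bm P_k\|_2\le S$. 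Since $\bm\psi\succeq\lambda\bm I$, each factor is at most $S^2/\lambda$. More sharply, one can use the standard bound $\sum_k|\bm P_t^\intercal\bm\psi^{-1}\bm P_k|\,\|\bm P_k\|\le$ const, obtained from Cauchy--Schwarz with $\bm\psi\succeq\sum_k\bm P_k\bm P_k^\intercal$. Either way we get a constant $L$ depending on $S$, $\lambda$ and the dimension. I expect this to be the main obstacle. The crude bound $\sum_k|\cdot|$ grows with the number of samples. To keep $L$ independent of $t$, I would first try bounding $\bm\psi^{-1}\sum_k\bm P_k\bm P_k^\intercal\preceq\bm I$ inside each telescoped partial sum: the partial sum $\sum_{k\le p}\bm P_k\bm P_k^\intercal$ is dominated by $\bm\psi$, so each drift increment is multiplied by at most $\|\bm P_t\|_2\le S$.

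\textbf{Noise and regularization terms.} By Cauchy--Schwarz in the $\bm\psi$-norm,
\[
\bigl|\bm P_t^\intercal\bm\psi^{-1}\bigl(\textstyle\sum_k\bm P_k\eta_k-\lambda\bm x_a(t)\bigr)\bigr|\le\|\bm P_t\|_{\bm\psi^{-1}}\Bigl(\bigl\|\textstyle\sum_k\bm P_k\eta_k\bigr\|_{\bm\psi^{-1}}+\sqrt\lambda\,\|\bm x_a(t)\|_2\Bigr).
\]
Assume the $\eta_k$ are conditionally $R$-sub-Gaussian; the rewards are in $\{0,1\}$, so this holds. The self-normalized martingale bound of Abbasi-Yadkori et al. then gives, with probability $1-\delta$ simultaneously for all $t$,
\[
\bigl\|\textstyle\sum_k\bm P_k\eta_k\bigr\|_{\bm\psi^{-1}}\le R\sqrt{2\log\bigl(\det(\bm\psi)^{1/2}\det(\lambda\bm I)^{-1/2}/\delta\bigr)}.
\]
The LLM samples enter this bound only through $\bm\phi_{a,\mathrm{LLM}}$ in the determinant; their noise is fixed before the online phase, so the filtration argument still applies. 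Setting $\beta_t:=R\sqrt{2\log(\det(\bm\psi_{a,t-1})^{1/2}\lambda^{-l_a/2}/\delta)}+\sqrt\lambda\,\max_t\|\bm x_a(t)\|_2$ and summing the three bounds yields the stated inequality.
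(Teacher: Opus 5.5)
\textbf{This is a genuine gap, at the step you flagged yourself, and the paper's proof has the same gap.} Your route is otherwise the paper's. You use the same three-way split of $\hat{\bm{x}}_a(t)-\bm{x}_a(t)$ into drift, noise and regularization parts. The last two are handled by Cauchy--Schwarz in the $\bm{\psi}^{-1}$-norm and the self-normalized bound, which is the paper's $B_t\le\beta_t$. None of your suggestions for the drift term works, for three reasons.

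\emph{First}, after telescoping, the coefficient of $\|\bm{x}_a(p)-\bm{x}_a(p+1)\|_2$ is $\sum_{k\le p}|\bm{P}_t^\intercal\bm{\psi}^{-1}\bm{P}_k|\,\|\bm{P}_k\|_2$, not a $\max_k$. So your displayed inequality is false as written.

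\emph{Second}, your ``sharper'' Cauchy--Schwarz bound actually gives $\sum_k|\bm{P}_t^\intercal\bm{\psi}^{-1}\bm{P}_k|\,\|\bm{P}_k\|_2\le\|\bm{P}_t\|_{\bm{\psi}^{-1}}\bigl(\sum_k\|\bm{P}_k\|_2^2\bigr)^{1/2}$. This grows like the square root of the sample count, not a constant.

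\emph{Third}, from $\bm{0}\preceq\bm{M}_p:=\sum_{k\le p}\bm{P}_k\bm{P}_k^\intercal\preceq\bm{\psi}$ you cannot conclude $\|\bm{\psi}^{-1}\bm{M}_p\|_2\le 1$. The product is not symmetric. The Loewner bound only places its \emph{eigenvalues} in $[0,1]$, since it is similar to $\bm{\psi}^{-1/2}\bm{M}_p\bm{\psi}^{-1/2}$. Its operator norm can be of order $\sqrt{\kappa(\bm{\psi})}$.

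Here is a concrete design. Take $l_a=2$ and these steps:
\begin{enumerate}
\item First, $m\ge\lambda$ samples with $\bm{P}_k=\bm{u}:=(1,\sqrt{\lambda/m})^\intercal$.
\item Then a jump $\bm{x}_a(p)-\bm{x}_a(p+1)=\bm{u}/\|\bm{u}\|_2$.
\item Then $N$ samples with $\bm{P}_k=\bm{e}_1$.
\item Finally, query $\bm{P}_t=\bm{e}_2$.
\end{enumerate}
A $2\times 2$ inversion gives
\[
\bm{P}_t^\intercal\bm{\psi}^{-1}\bm{M}_p\frac{\bm{u}}{\|\bm{u}\|_2}=\frac{m(\lambda+N)\sqrt{\lambda/m}}{2\lambda(\lambda+N)+m\lambda}\,\|\bm{u}\|_2\longrightarrow\frac{1}{2}\sqrt{\frac{m}{\lambda}}\,\|\bm{u}\|_2\qquad(N/m\to\infty).
\]
Meanwhile $\|\bm{P}_t\|_2=1$, $\|\bm{P}_k\|_2\le\sqrt{2}$ and $\|\bm{P}_t\|_{\bm{\psi}^{-1}}\to(2\lambda)^{-1/2}$. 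With $N=m^2$, the right-hand side of the lemma is $O(1+\sqrt{\log m})$, whereas the drift contribution grows like $\sqrt{m/\lambda}$. So no $t$-independent $L$ works for arbitrary designs.

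The paper does not supply this step either. It imports $A_t\le\sum_p\|\bm{x}_a(p)-\bm{x}_a(p+1)\|_2$ from the restart analysis of \cite{zhao2020simple}. That proof bounds $\|\bm{\psi}^{-1}\bm{M}_p\bm{\Delta}\|_2$ by $\lambda_{\max}(\bm{\psi}^{-1}\bm{M}_p)\|\bm{\Delta}\|_2$, which is exactly your step. To my recollection, Zhao and Zhang (``Non-stationary linear bandits revisited'', 2021) later identified this as a flaw and corrected the restart rate to $\tilde{O}(T^{3/4}(1+\mathcal{P}_T)^{1/4})$.

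What you can prove uses only symmetric objects. Since $\bm{M}_p^{1/2}\bm{\psi}^{-1}\bm{M}_p^{1/2}\preceq\bm{I}$,
\[
\bigl|\bm{P}_t^\intercal\bm{\psi}^{-1}\bm{M}_p\bm{\Delta}_p\bigr|\le\|\bm{P}_t\|_{\bm{\psi}^{-1}}\,\|\bm{M}_p^{1/2}\bm{\Delta}_p\|_2\le L\sqrt{n}\,\|\bm{P}_t\|_{\bm{\psi}^{-1}}\,\|\bm{\Delta}_p\|_2,
\]
where $n$ is the number of samples in $\bm{\psi}$ and $L\ge\|\bm{P}_k\|_2$. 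This gives the lemma with drift term $L\sqrt{n}\,\|\bm{P}_t\|_{\bm{\psi}^{-1}}\sum_p\|\bm{x}_a(p)-\bm{x}_a(p+1)\|_2$, which degrades the rate in Theorem~\ref{theo:main}. The stated form holds only under extra structure that makes $\bm{\psi}$ commute with every $\bm{M}_p$, e.g.\ each $\bm{P}_k$ parallel to a vector of one fixed orthonormal basis.

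Separately, ``absorbing'' the mismatch of the LLM labels into the drift term is not legitimate. That term only measures variation of the true $\bm{x}_a$. You need to assume explicitly that the simulated labels follow the same linear model with $\bm{x}_a(1)$ and zero-mean sub-Gaussian noise. The paper's decomposition tacitly assumes this too.
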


\noindent \paragraph{Proof of Lemma~\ref{lem:expect_real}} From the model assumption and the estimator, we can verify that the estimate error can be decomposed as

\begin{equation}
    \hat{\bm x}_{a}(t) - \bm x_{a}(t) = \bm \psi_{a,t-1}^{-1} \left( \sum_{s=t_0}^{t-1} \bm P_s \bm P_s^\top (\bm x_{a}(s)- \bm x_{a}(t)) + \sum_{s=t_0}^{t-1} \eta_s \bm P_s - \lambda \bm x_{a}(t) \right).
\end{equation}

\noindent
therefore, by Cauchy-Schwartz inequality, we know that for any $\bm{x} \in \mathcal{X}$,
\begin{equation}
    \left| \bm P_t (\hat{\bm x}_{a}(t) - \bm x_{a}(t)) \right| \leq \|\bm P_t\|_2 \cdot A_t + \|\bm P_t\|_{\bm \psi_{t-1}^{-1}} \cdot B_t,
    \label{equ:inequ13}
\end{equation}
where
\begin{equation}
    A_t = \left\| \bm \psi_{a,t-1}^{-1} \left( \sum_{s=t_0}^{t-1} \bm P_s \bm P_s^\top (\bm x_{a}(s) - \bm x_{a}(t)) \right) \right\|_2,  B_t = \left\| \sum_{s=t_0}^{t-1} \eta_s \bm P_s - \lambda \bm x_{a}(t)\right\|_{\bm \psi_{a,t-1}^{-1}}.
\end{equation}

\noindent
These two terms can be bounded separately, as proved in ~\cite{zhao2020simple}. We
directly present the conclusion, where $A_t$ and $B_t$ can be upper bounded as follows.

\begin{equation}
   A_t \leq \sum_{p=t_0}^{t-1} \|\bm x_a(p) - \bm x_a(p+1)\|_2,
    B_t \leq \beta_t,
    \label{inequ15}
\end{equation}
where  $\beta_t = \sqrt{\lambda} S + R \sqrt{2 \log \frac{1}{\delta} + d \log \left( 1 + \frac{(t - t_0) L^2}{\lambda d} \right)}$ is the confidence radius.

\noindent
Based on the inequality~\ref{equ:inequ13} and inequality~\ref{inequ15}, we have
\begin{equation}
    \left|\langle \bm P_t, \hat{\bm x}_{a}(t) - \bm x_{a}(t) \rangle \right| \leq L \sum_{p=1}^t \|\bm x_a(p) - \bm x_a(p+1)\|_2 + \beta_t \|\bm{\bm P_t}\|_{\bm \psi_{a, t-1}^{-1}},
\end{equation}
which completes the proof.  \qed
\begin{lemma}
\label{lemma:LLM}
The variance term of our algorithm is not larger than that without the LLM augumentation~\cite{zhang2024reward}.  Formally, for any $t \in \mathcal{T}, a \in \mathcal{A}$:

\begin{equation}
\left[ \bm P_t^\top (\bm \psi_{a,t-1})^{-1} \bm P_t\right]^{\frac{1}{2}} 
\leq 
\left[ \bm P_t^\top (\bm \Gamma)^{-1} \bm P_t\right]^{\frac{1}{2}}.
\label{equ:lemma2}
\end{equation}
where $\bm{P}_t$ is the intermediate variable defined in Alg.~\ref{alg:hyperbandit}, produced using the real data at step $t-1$. Moreover, $\bm{\psi}_{a,t-1} = \lambda I_d + \bm{\phi}_{a,t-1} + \bm{\phi}_{a,\mathrm{LLM}}$, and $\bm{\phi}_{a,t-1}$ and $\bm{\phi}_{a,\mathrm{LLM}}$ also denote the intermediate variable $\bm{\phi}$ defined in Alg.~\ref{alg:hyperbandit}, generated from the real data at step $t-1$ and by the LLM-Start module, respectively. For notational convenience, we denote $\lambda I_d + \bm{\phi}_{a,t-1}$ by $\bm{\Gamma}$.
\end{lemma}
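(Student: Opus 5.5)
The inequality will follow from the Loewner ordering of the two matrices together with the fact that matrix inversion reverses that ordering. Write $\bm\psi_{a,t-1} = \bm\Gamma + \bm\phi_{a,\mathrm{LLM}}$ with $\bm\Gamma = \lambda I_d + \bm\phi_{a,t-1}$. By Algorithm~\ref{alg:hyperbandit} and the LLM Start procedure (Algorithm~\ref{alg:LLMStart}), the matrix $\bm\phi_{a,\mathrm{LLM}}$ is a sum of outer products $\bm P_s^\top \bm P_s$ built from simulated interactions. Hence it is symmetric positive semidefinite, and so is $\bm\phi_{a,t-1}$. Since $\lambda>0$, $\bm\Gamma$ is symmetric positive definite, and therefore $\bm\psi_{a,t-1}\succeq \bm\Gamma \succ 0$ in the Loewner order.

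The key step is to show that $A\succeq B\succ 0$ implies $B^{-1}\succeq A^{-1}$. I would give a self-contained argument by congruence. Set $C = B^{-1/2} A B^{-1/2}$. Then $C\succeq I$, so every eigenvalue of the symmetric matrix $C$ is at least $1$. It follows that $C^{-1}\preceq I$, which is the same as $B^{1/2}A^{-1}B^{1/2}\preceq I$. Conjugating by $B^{-1/2}$ then yields $A^{-1}\preceq B^{-1}$. Applying this with $A=\bm\psi_{a,t-1}$ and $B=\bm\Gamma$ gives $\bm\psi_{a,t-1}^{-1}\preceq \bm\Gamma^{-1}$.

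To finish, evaluate both quadratic forms at the vector $\bm P_t^\top$; here $\bm P_t$ is a row vector in the convention of Algorithm~\ref{alg:hyperbandit}, so the notation in \eqref{equ:lemma2} should be read accordingly. This gives $\bm P_t^\top\bm\psi_{a,t-1}^{-1}\bm P_t \le \bm P_t^\top\bm\Gamma^{-1}\bm P_t$. Both sides are nonnegative because both inverses are positive definite, and the square root is monotone on $[0,\infty)$, so taking square roots yields \eqref{equ:lemma2}.

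The only genuinely nontrivial point is the operator monotonicity of $X\mapsto -X^{-1}$. One subtlety needs care: the LLM-Start statistics must be computed with feature vectors of the same form, namely outer products of real vectors, so that $\bm\phi_{a,\mathrm{LLM}}\succeq 0$ holds. This is true whatever hypernetwork parameters were used to generate those vectors, so the lemma holds deterministically for every $t$ and $a$.
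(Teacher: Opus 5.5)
Your argument is correct, but it reaches the inequality by a different route from the paper.

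The paper writes $\bm\phi_{a,\mathrm{LLM}}=\bm S^\top\bm S$ and applies the Sherman--Morrison--Woodbury identity. This gives the explicit difference $\bm\Gamma^{-1}-\bm\psi_{a,t-1}^{-1}=\bm H:=\bm\Gamma^{-1}\bm S^\top(I+\bm S\bm\Gamma^{-1}\bm S^\top)^{-1}\bm S\bm\Gamma^{-1}$. The paper then shows $\bm H\succeq 0$ by factoring it as a Gram matrix $\bm Q^\top\bm Q$ via an eigendecomposition of $I+\bm S\bm\Gamma^{-1}\bm S^\top$. The text presents this factorization as one of $\bm\Gamma$, but a factorization of $\bm H$ is what is needed and what the formula for $\bm Q$ actually gives.

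You instead prove the general fact that inversion reverses the Loewner order on positive definite matrices, by congruence with $\bm\Gamma^{-1/2}$. You then apply it to $\bm\psi_{a,t-1}=\bm\Gamma+\bm\phi_{a,\mathrm{LLM}}\succeq\bm\Gamma\succ 0$.

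Your route is shorter. It isolates the only hypothesis that matters: the LLM-Start statistic is a sum of outer products and hence positive semidefinite, whatever hypernetwork generated the vectors. It also avoids the decomposition bookkeeping where the paper's write-up slips.

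The Woodbury route, in exchange, yields a closed form for the variance reduction, $\bm P_t\bm H\bm P_t^\top=\|(I+\bm S\bm\Gamma^{-1}\bm S^\top)^{-1/2}\bm S\bm\Gamma^{-1}\bm P_t^\top\|_2^2$. This shows the inequality is strict exactly when $\bm S\bm\Gamma^{-1}\bm P_t^\top\neq\bm 0$. It could also be used to quantify how much the synthetic data shrinks the confidence width.

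Your reading of $\bm P_t$ as a row vector is consistent with the updates in Algorithm~\ref{alg:hyperbandit}.
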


\noindent \paragraph{Proof of Lemma~\ref{lemma:LLM}.} 

Eq.\eqref{equ:lemma2} is equivalent to
\begin{equation}
\bm P_t^\top (\bm \psi_{a,t-1})^{-1} \bm P_t
\leq 
\bm P_t^\top (\bm \Gamma)^{-1} \bm P_t.
\end{equation} 

Considering $\bm \psi_{a,t-1} = \lambda I_d + \bm \phi_{a,t-1} + \bm \phi_{a, \text{LLM}} $, by the Sherman-Morrison-Woodbury formula, we have  
\begin{equation}
    \begin{aligned}
    (\bm \psi_{a,t-1})^{-1}=\left( \bm \Gamma + \bm{\phi}_{a,\text{LLM}}\right)^{-1} = \left( \bm \Gamma +  \bm S^\top \bm S \right)^{-1} = \bm \Gamma^{-1} -  \bm \Gamma^{-1} \bm S^\top \left( I_d +  \bm S \bm \Gamma^{-1} \bm S^\top \right)^{-1} \bm S \bm \Gamma^{-1},
    \end{aligned}
\end{equation}
yielding that Eq.\eqref{equ:lemma2} is equivalent to
\begin{equation}
    \bm P_t^\top \bm H \bm P_t \geq 0,
    \label{equ:22}
\end{equation}
where
\begin{equation}
    \bm H = \bm \Gamma^{-1} \bm S^\top \left( I_d + \bm S \bm \Gamma^{-1} \bm S^\top \right)^{-1} \bm S \bm \Gamma^{-1}.
\end{equation}

Let \( \bm S = \bm U_d \bm \Sigma^{1/2}_d \bm V_d^\top \) be the Singular Value Decomposition (SVD) of \( \bm S \). Note that \( \bm \phi_{a,
\text{LLM}} = \bm V_d \bm \Sigma_d \bm V_d^\top \). We can obtain that \( \bm \Gamma \) is a square symmetric positive semi-definite matrix, since \( \bm \Gamma \) can be decomposed into
\[
\bm \Gamma = \bm Q  \bm Q^\top,
\]
where \( \bm P \bm \Lambda \bm P^\top \) is the SVD of \( I_d + \bm S \bm \Gamma^{-1} \bm S^\top \) and
\[
\bm Q = \bm \Lambda^{-1/2} \bm P^\top \bm S \bm \Gamma^{-1}.
\]
Thus, Eq.\eqref{equ:22} holds, yielding that Eq.\eqref{equ:lemma2} also holds.  \qed


\begin{theorem}[Dynamic Regret Bound with LLM Augmentation]
\label{theo:main}
Assume that the conditions of Lemmas~\ref{lem:expect_real} and~\ref{lemma:LLM} hold. Then, with probability at least \(1-2/T\), the dynamic regret over \(T\) rounds is bounded by
\begin{equation}
    \text{D-Regret}_T \leq LH\,\mathcal{P}_T + 2T\,\tilde{\beta}_H \sqrt{\frac{2d}{H}\log\Bigl(1+\frac{L^2H}{\lambda d}\Bigr)},
\end{equation}
where
\begin{itemize}
    \item \(\mathcal{P}_T = \sum_{a \in \mathcal{A}} \sum_{p=t_0}^{T-1} \|\bm x_a(p)-\bm x_a(p+1)\|_2\) denotes the total variation of the arm contexts,
    \item \(\tilde{\beta}_H = \sqrt{\lambda S} + R\sqrt{2\log\Bigl(T\lceil T/H\rceil\Bigr) + d\log\Bigl(1+\frac{HL^2}{\lambda d}\Bigr)}\).
\end{itemize}
Ignoring logarithmic factors, we have
\[
    \text{D-Regret}_T = \tilde{O}\Bigl(H\,\mathcal{P}_T + \frac{dT}{\sqrt{H}}\Bigr).
\]
In particular, by setting 
\[
    H = H^* = \lfloor (dT/\mathcal{P}_T)^{2/3} \rfloor,
\]
one obtains the near-optimal bound
\[
    \text{D-Regret}_T = \tilde{O}\Bigl(d^{2/3}\mathcal{P}_T^{1/3}T^{2/3}\Bigr).
\]
\end{theorem}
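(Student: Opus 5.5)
The plan is to follow the restart (sliding-window) analysis of \cite{zhao2020simple}. The horizon $[T]$ is split into $\lceil T/H\rceil$ blocks of length $H$. Within each block the statistics $\bm \phi_{a,\cdot}$ and $\bm b_{a,\cdot}$ are rebuilt from the block start $t_0$. The LLM-Start term $\bm\phi_{a,\mathrm{LLM}}$ is kept throughout, since it is produced offline. The regret is then bounded block by block and the blocks are summed.

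\textbf{Step 1 (per-round decomposition).} Fix a round $t$ in a block starting at $t_0$. By the UCB rule,
\[
f_t(\bm c_{u_t},\bm c_{a_{I_t}}(t))\ge f_t(\bm c_{u_t},\bm c_{a_t^*}(t)).
\]
This gives the standard bound that the instantaneous regret is at most the sum of two terms. The first is the confidence-width errors $|\langle \bm P_t,\hat{\bm x}_a(t)-\bm x_a(t)\rangle|$ for $a\in\{a_t^*,a_{I_t}\}$. The second is the exploration bonus of $a_{I_t}$. The observed-feature part $\bm s_a^\intercal\bm\Theta_p^s\bm c_u$ is known and cancels.

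\textbf{Step 2 (apply the two lemmas).} Lemma~\ref{lem:expect_real} is invoked with $\delta = 1/(T\lceil T/H\rceil)$. A union bound over rounds and blocks then gives probability at least $1-1/T$. A second such event accounts for the $2/T$ in the statement. On this event, each width is at most a drift term plus $\tilde\beta_H\|\bm P_t\|_{\bm\psi_{a,t-1}^{-1}}$, where $\tilde\beta_H$ is $\beta_t$ with $t-t_0\le H$ and the chosen $\delta$. Lemma~\ref{lemma:LLM} then replaces $\|\bm P_t\|_{\bm\psi_{a,t-1}^{-1}}$ by $\|\bm P_t\|_{\bm\Gamma^{-1}}$. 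This reduces the analysis to the plain restarted LinUCB quantity, so the LLM warm start can only help. I also plan to enlarge the confidence radius to $\alpha\ge\tilde\beta_H$ so that the optimistic comparison in Step 1 is valid.

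\textbf{Step 3 (drift term).} For each arm, the drift within a block is at most $\sum_{p=t_0}^{t-1}\|\bm x_a(p)-\bm x_a(p+1)\|_2$. Summing over the at most $H$ rounds of the block gives at most $H$ times the block's variation. Summing over blocks and arms, and using $\|\bm P_t\|_2\le L$, gives $LH\mathcal{P}_T$.

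\textbf{Step 4 (variance term).} In each block, Cauchy--Schwarz and the elliptical potential lemma give
\[
\sum_{t\in\text{block}}\min\{1,\|\bm P_t\|_{\bm\Gamma^{-1}}\}\le\sqrt{2dH\log\bigl(1+L^2H/(\lambda d)\bigr)}.
\]
Multiplying by $2\tilde\beta_H$ and by $T/H$ blocks yields $2T\tilde\beta_H\sqrt{(2d/H)\log(1+L^2H/(\lambda d))}$.

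\textbf{Step 5 (choice of $H$).} Dropping logarithmic factors gives $\tilde O(H\mathcal{P}_T+dT/\sqrt H)$. Setting $H\mathcal{P}_T = dT/\sqrt H$ gives $H^*=(dT/\mathcal{P}_T)^{2/3}$ and the stated $\tilde O(d^{2/3}\mathcal{P}_T^{1/3}T^{2/3})$.

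The main obstacle is the potential argument in Step 4 when the design vectors are bundled per arm. The matrices $\bm\Gamma$ are arm-specific, so the elliptical potential must be applied separately for each arm and the resulting sums combined. The $\min\{1,\cdot\}$ truncation also has to be justified. I would first try to absorb the per-arm split into constants, assuming rewards are bounded so each regret term is at most $1$. A secondary issue is that $\bm P_t$ depends on the hypernetwork output, which changes across the $N$ mini-batches. I would treat $\bm\Theta_p^{(n)}$ as fixed and bounded, with $\|\bm P_t\|_2\le L$, so that the self-normalized bound behind Lemma~\ref{lem:expect_real} still applies conditionally.
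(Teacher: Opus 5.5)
Your plan follows the paper's proof essentially step for step. It uses the restart analysis of \cite{zhao2020simple} and applies Lemma~\ref{lem:expect_real} to $a_t^*$ and $a_{I_t}$. The selection rule merges the two bonuses into $2\tilde\beta_H\|\bm P_t\|_{\bm\psi_{a_{I_t},t-1}^{-1}}$, and Lemma~\ref{lemma:LLM} passes from $\bm\psi$ to $\bm\Gamma$. The drift sums to $LH\mathcal{P}_T$, an elliptical-potential bound is applied per epoch, a union bound is taken over the $\lceil T/H\rceil$ epochs, and $H$ is balanced.

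The gap is exactly the step you flag as the main obstacle, and the paper does not close it either; it only cites ``the standard elliptical potential''. Each $\bm\Gamma_a$ accumulates only the vectors $\bm P_t$ from rounds in which $a$ was pulled, so the potential lemma can only be applied arm by arm. If arm $a$ is pulled $n_a$ times in epoch $\mathcal{E}$, Cauchy--Schwarz gives
\[
\sum_{t\in\mathcal{E}}\min\bigl\{1,\|\bm P_t\|_{\bm\Gamma_{a_{I_t},t-1}^{-1}}\bigr\}\le\sum_{a\in\mathcal{A}}\sqrt{2dn_a\log\bigl(1+L^2n_a/(\lambda d)\bigr)}\le\sqrt{2d|\mathcal{A}|H\log\bigl(1+L^2H/(\lambda d)\bigr)},
\]
so the second term of the bound picks up a factor $\sqrt{|\mathcal{A}|}$. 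This factor cannot be ``absorbed into constants'', because the displayed inequality has explicit constants and no dependence on $|\mathcal{A}|$.

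No argument can remove the factor. Take $d=1$, $\bm P_t\equiv 1$ and $\mathcal{A}_t=\mathcal{A}$; the model is then an $|\mathcal{A}|$-armed bandit with minimax regret $\Omega(\sqrt{|\mathcal{A}|T})$. The stated bound with $\mathcal{P}_T=0$ and $H=T$ is only $O(\sqrt{T}\log T)$, which is smaller once $|\mathcal{A}|\gg\log^2 T$. What your argument, and the paper's, actually proves is the statement with $\sqrt{|\mathcal{A}|}$ multiplying the second term. The $\tilde O$ rates survive only if $|\mathcal{A}|$ is treated as a constant.

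Two smaller repairs are needed in Step~2.
\begin{itemize}
\item \emph{The union bound.} Because $a_{I_t}$ is chosen from the data, ``a second such event'' does not justify the $2/T$; the paper's $1-2\sigma$ step has the same flaw. The confidence bound must hold for all arms at once, so the union runs over the $|\mathcal{A}|$ arms and the $\lceil T/H\rceil$ epochs, using the time-uniform self-normalized bound within each epoch. A union over rounds with your $\delta$ would only give $1-1/\lceil T/H\rceil$. The correct union inflates the failure probability to $|\mathcal{A}|/T$, or, after shrinking $\delta$, adds $\log|\mathcal{A}|$ inside $\tilde\beta_H$.
\item \emph{The exploration weight.} The constant $2\tilde\beta_H$ requires $\alpha=\tilde\beta_H$ exactly. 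With $\alpha>\tilde\beta_H$ you get $\alpha+\tilde\beta_H$ instead. The paper silently writes the selection rule with $\beta_t$ in place of the algorithm's $\alpha$.
\end{itemize}
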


\noindent \paragraph{Proof of Theorem~\ref{theo:main}.} 

Due to Lemma~\ref{lem:expect_real} and the fact that $\bm x_{a^*}(t), \bm x_a(t)$ where $a \in \mathcal{A}$, each of the following holds with probability at least $1 - \delta$,
\begin{equation}
\begin{aligned}
    \langle \bm x_{a^*}(t), \bm P_t \rangle &\leq \langle \hat{\bm x}_{a^*}(t), \bm P_t \rangle 
    + L \sum_{p=t_0}^{t-1} \|\bm x_{a^*}(p) - \bm x_{a^*}(p+1)\|_2 
    + \beta_t \|\bm P_t^*\|_{\bm \psi_{a^*,t-1}^{-1}}, \\
    \langle \bm x_a(t), \bm P_t \rangle &\geq \langle \hat{\bm x}_a(t), \bm P_t \rangle 
    - L \sum_{p=t_0}^{t-1} \|\bm x_a(p) - \bm x_a(p+1)\|_2 
    - \beta_t \|\bm P_t\|_{\bm \psi_{a,t-1}^{-1}}.
\end{aligned}
\end{equation}

By the union bound, the following holds with probability at least $1-2\sigma$,



\begin{align}
\langle \bm x_{a^*}(t), \bm P_t \rangle - \langle \bm x_{a}(t), \bm P_t \rangle 
&\leq \langle \hat{\bm x}_{a^*}(t), \bm P_t \rangle - \langle \hat{\bm x}_{a}(t), \bm P_t \rangle 
+ L \sum_{p=t_0}^{t-1} \|\bm x_a(p) - \bm x_a(p+1)\|_2  
\\&+ L \sum_{p=t_0}^{t-1} \|\bm x_{a^*}(p) - \bm x_{a^*}(p+1)\|_2  + \beta_t (\|\bm P_t\|_{\bm \psi_{a^*,t-1}^{-1}} + \|\bm P_t\|_{\bm \psi_{a,t-1}^{-1}}) 
\\&\leq L \sum_{p=t_0}^{t-1} \|\bm x_a(p) - \bm x_a(p+1)\|_2 + L \sum_{p=t_0}^{t-1} \|\bm x_{a^*}(p) - \bm x_{a^*}(p+1)\|_2 
+ 2\beta_t \|\bm P_t\|_{\bm \psi_{a, t-1}^{-1}}\label{line:-3}
\\&\leq L \sum_{a \in \mathcal{A}}\sum_{p=t_0}^{t-1} \|\bm x_{a}(p) - \bm x_{a}(p+1)\|_2 +  2\beta_t \|\bm P_t\|_{\bm \psi_{a, t-1}^{-1}}
\\&\leq L \sum_{a \in \mathcal{A}}\sum_{p=t_0}^{t-1} \|\bm x_{a}(p) - \bm x_{a}(p+1)\|_2 +  2\beta_t \|\bm P_t\|_{\bm \Gamma_{a, t-1}^{-1}}\label{line:-1}
\end{align}
where the inequality \eqref{line:-3} comes from the following implication of the arm selection criteria,
\begin{equation}
\langle \hat{\bm x}_{a^*}(t), \bm P_t \rangle + \beta_t \|\bm P_t\|_{\bm \psi_{a^*,t-1}^{-1}} 
\leq \langle \hat{\bm x}_{a}(t), \bm P_t \rangle + \beta_t \|\bm P_t\|_{\bm \psi_{a,t-1}^{-1}}.
\end{equation}

And the inequality \eqref{line:-1} comes from the following conclusion:  the variance term of our algorithm is not larger than that without the LLM augumentation.  Formally, for any $t \in \mathcal{T}, a \in \mathcal{A}$:

\begin{equation}
\left[ \bm P_t^\top (\bm \psi_{a,t-1})^{-1} \bm P_t\right]^{\frac{1}{2}} 
\leq 
\left[ \bm P_t^\top (\bm \Gamma)^{-1} \bm P_t\right]^{\frac{1}{2}}.
\end{equation}
where  $\bm \psi_{a,t-1} = \lambda I_d + \bm \phi_{a,t-1} + \bm \phi_{a, \text{LLM}} $, and   $\lambda I_d + \bm{\phi}_{a,t-1}$ is denoted as $\bm \Gamma$.

Hence, based on the inequallity \eqref{line:-1}, dynamic regret within epoch $\mathcal{E}$ is bounded as follows,
\begin{equation}
\begin{aligned}
\text{D-Regret}(\mathcal{E}) &\leq \sum_{t \in \mathcal{E}} L \sum_{a \in \mathcal{A}}\sum_{p=t_0}^{t-1} \|\bm x_{a}(p) - \bm x_{a}(p+1)\|_2 +  2\beta_t \|\bm P_t\|_{\bm \Gamma_{a, t-1}^{-1}}\\
&\leq LH\mathcal{P}(\mathcal{E}) + 2\beta_H \sqrt{2dH \log\left( 1 + \frac{L^2H}{\lambda d} \right)},
\end{aligned}
\label{equ:regret}
\end{equation}
where $\mathcal{P}(\mathcal{E}) = \sum_{a \in \mathcal{A}}\sum_{p=t_0}^{t-1} \|\bm x_{a}(p) - \bm x_{a}(p+1)\|_2 $. The last inequality holds due to the standard elliptical potential.

By taking the union bound over the dynamic regret of all $\lceil T/H \rceil$ epochs, we know that the following holds with probability at least $1 - 2/T$:
\begin{equation}
\begin{aligned}
    \text{D-Regret}_T &= \sum_{s=1}^{\lceil T/H \rceil} \text{D-Regret}(\mathcal{E}_s)\\
    &\leq LH \mathcal{P}_T + 2T \tilde{\beta}_H \sqrt{\frac{2d}{H} \log\left(1 + \frac{L^2 H}{\lambda d}\right)},
\end{aligned}
\end{equation}
where $\tilde{\beta}_H = \sqrt{\lambda S} + R \sqrt{2 \log\left(T \lceil T/H \rceil\right) + d \log\left(1 + \frac{HL^2}{\lambda d}\right)}$. 

Ignoring logarithmic factors, we finally obtain:
\begin{equation}
    \text{D-Regret}_T \leq \tilde{O}(H \mathcal{P}_T + dT / \sqrt{H}).
\end{equation}

By setting $H = H^* = \lfloor(dT / \mathcal{P}_T)^{2/3} \rfloor$, we achieve a:
\begin{equation}
    \tilde{O}(d^{2/3} \mathcal{P}_T^{1/3} T^{2/3}),
\end{equation}
which is a near-optimal dynamic regret.  \qed

Note that in a non-stationary environment, some studies~\cite{cheung2019learning} provide the optimal dynamic regret lower boun: 
$\Omega\big(d^{2/3} P_T^{1/3}T^{2/3}\big)$. 
Therefore, following prior work~\cite{zhao2020simple}, 
we refer to $\tilde{O}(d^{2/3}\mathcal{P}_T^{1/3} T^{2/3})$ as the near-optimal dynamic regret.

\subsection{Empirical Validation}
Inspired by \cite{zhao2020simple}, we already know that in the setting of nonstationary linear bandits, a simple restarted strategy can achieve near-optimal dynamic regret.  A similar modeling approach applies to our HyperBandit+, which generates the parameters of the bandits bansed on the current time, instead of resetting them to zero when detecting enveironment change as is done in the restarted strategy. Under the assumption that the parameters are reset to zero at the beginning of each epoch, we obtain the regret bound as shown in Eq.~\eqref{equ:regret}. Furthermore, as shown 
 in Figure.~\ref{fig:regret}, Our experiments demonstrate that HyperBandit+, which generates model parameters at the start of each epoch, achieves a better regret bound compared to resetting the parameters to zero (i.e., RestartUCB~\cite{zhao2020simple}).

\subsection{Discussion on LLM Start}
The success of LLM Start can be attributed to enabling the algorithm to perform effective exploration on synthetic data, which facilitates better exploitation on real data.  

\begin{enumerate}[(1)]
    \item \textbf{Exploration:} LLM Start leverages the world knowledge of LLMs to complement sparse user interactions.  By exploring on synthetic data, the algorithm gradually reduces uncertainty across each arm, allowing the model to avoid starting from scratch with purely random exploration when applied to real data.  Instead, it performs more targeted exploration in promising regions based on the knowledge acquired from pretraining, thus achieving faster convergence.
    \item \textbf{Exploitation:} After training on synthetic data, the algorithm’s initial parameters already encode a preliminary understanding of user preferences within the synthetic data.  Consequently, it can make more informed decisions from the beginning, leveraging this prior knowledge to recommend items likely to yield high rewards and avoiding poor user experiences caused by early-stage random recommendations.
\end{enumerate}

\begin{figure}
\centering
\begin{subfigure}{0.30\textwidth}
    \centering
    \includegraphics[width=\linewidth]{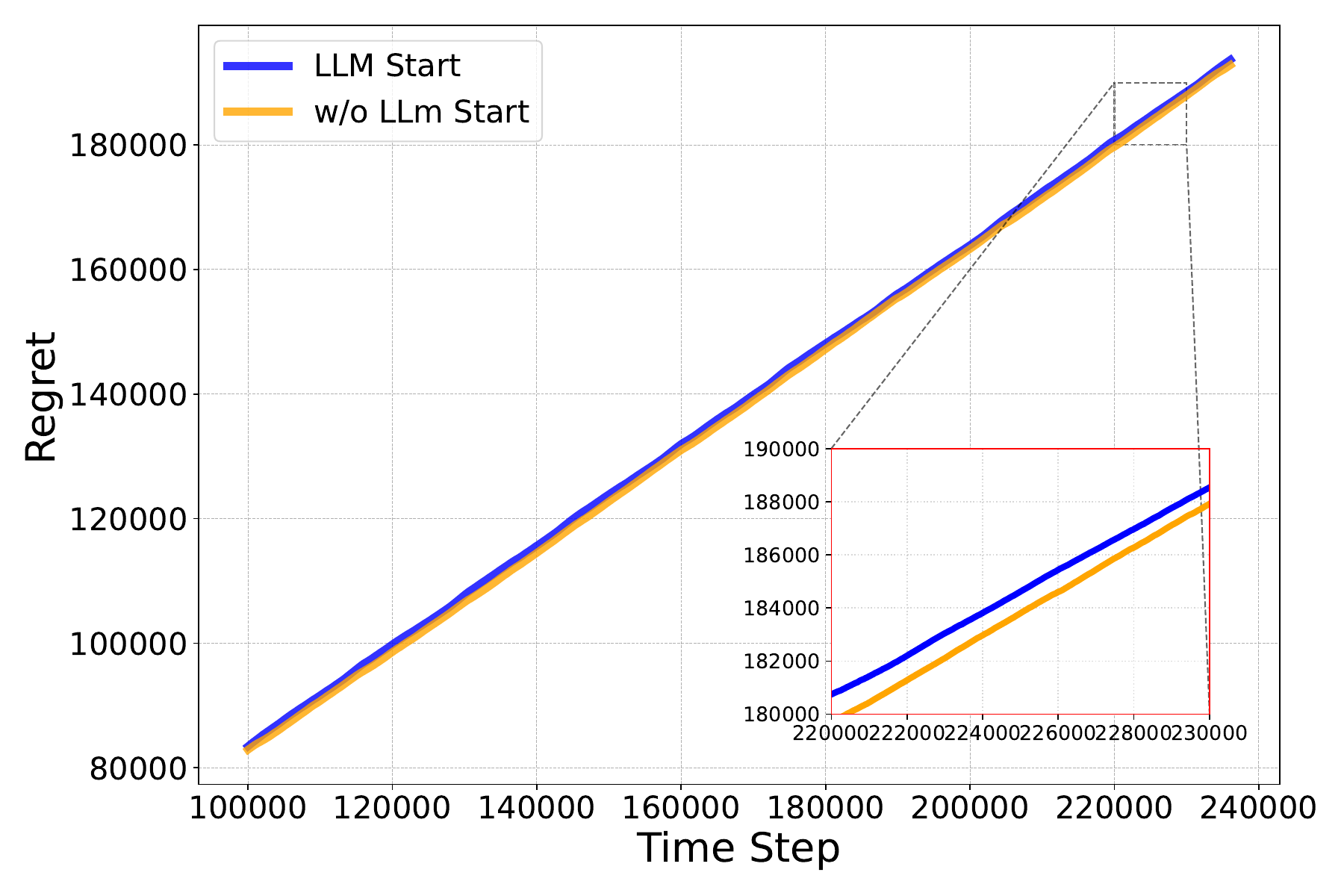}
    \caption{The regret on Kuai}
    \label{fig:kuai:regret}
\end{subfigure}
\hfill
\begin{subfigure}{0.30\textwidth}
    \centering
    \includegraphics[width=\linewidth]{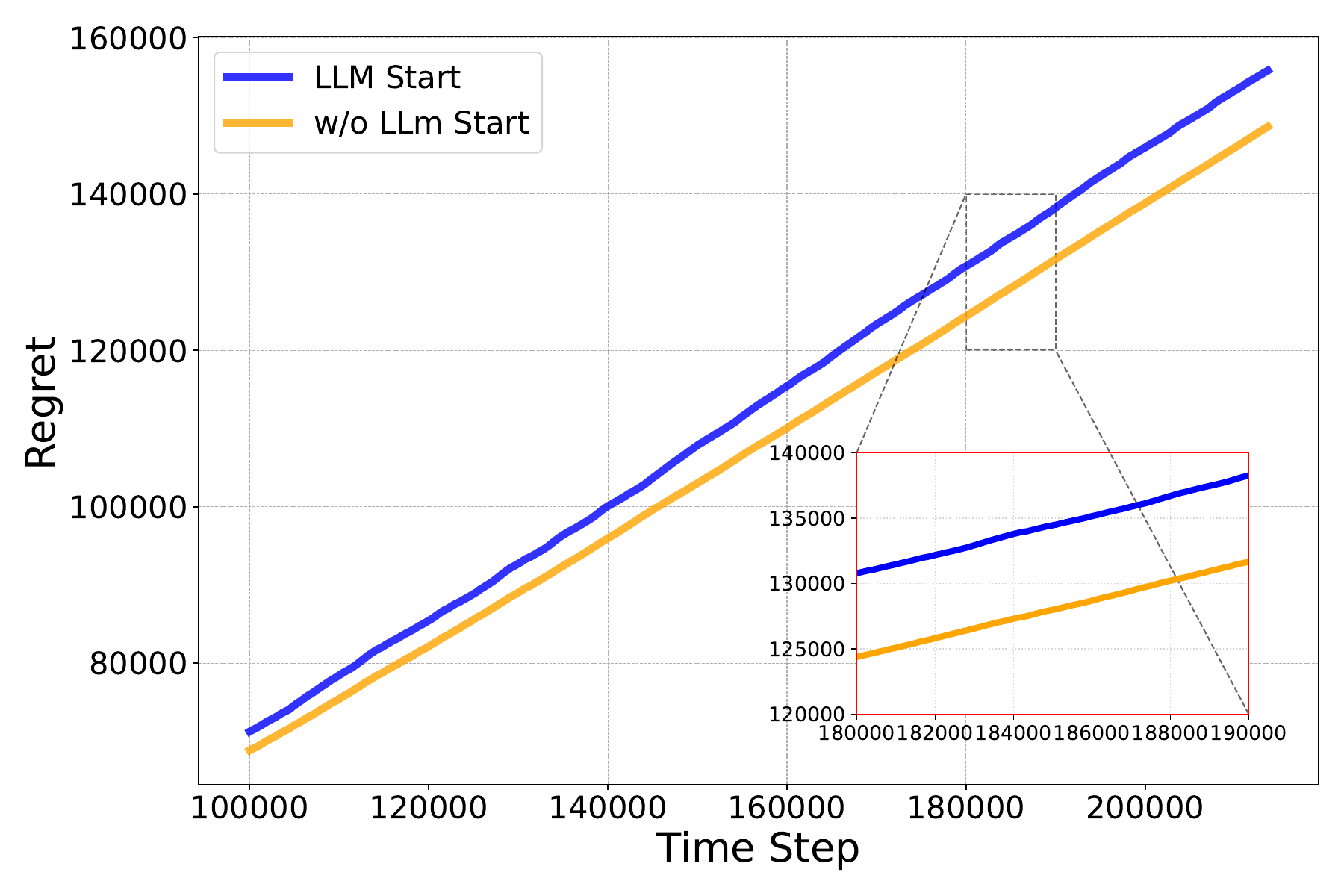}
    \caption{The regret on NYC}
    \label{fig:NYC:regret}
\end{subfigure}
\hfill
\begin{subfigure}{0.30\textwidth}
    \centering
    \includegraphics[width=\linewidth]{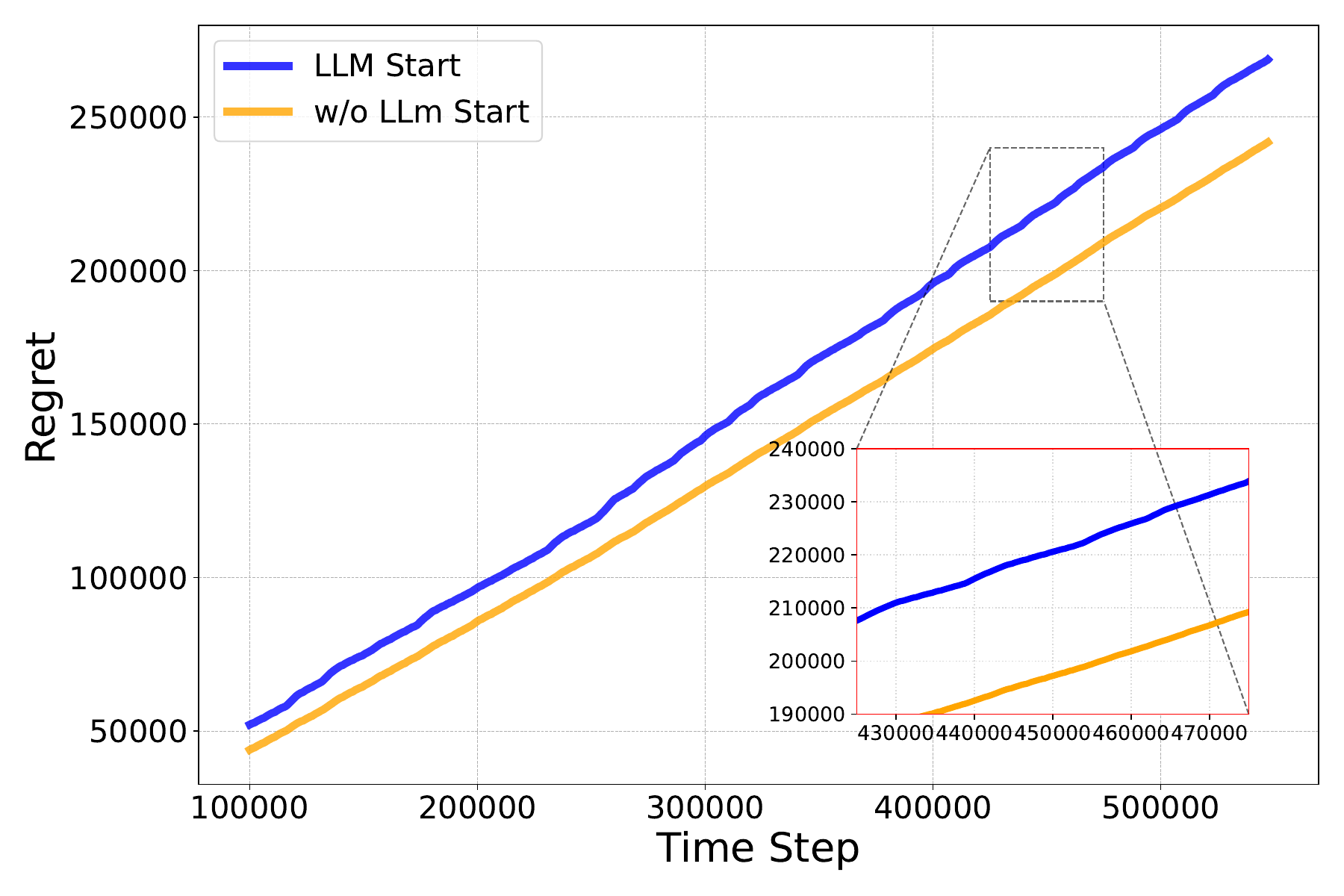}
    \caption{The regret on TKY}
    \label{fig:TKY:regret}
\end{subfigure}
\caption{The result of regret difference between restart and w/o restart settings on three datasets. Where the w/o Start algorithm refers to the RestartUCB algorithm~\cite{zhao2020simple} without the LLM Start component}.
\label{fig:regret}
\end{figure}






\section{Experiments}\label{sec:Experiments}
We conducted experiments to evaluate the performance of HyperBandit+ on datasets for short video recommendation and point-of-interest (POI) recommendation.

\subsection{Experimental Settings}

\subsubsection{Baselines.} HyperBandit+ was compared with several algorithms that constructed in stationary, piecewise-stationary and periodic environment, including:







\textbf{LinUCB \cite{li2010contextual}} is a classical contextual bandit algorithm that addresses the problem of personalized recommendation.

\textbf{HybridLinUCB \cite{li2010contextual}} is a variant algorithm of LinUCB that takes into account both shared and non-shared interests among users. 

\textbf{DLinUCB \cite{wu2018learning}} is built upon a piecewise stationary environment, where each user group corresponds to a slave model. Whether to discard a slave model is based on the detection of ``badness''. Through continuous interaction with the environment, this approach enables adaptive detection of change points and updates the selection strategy.

\textbf{ADTS \cite{hariri2015adapting}} is a bandit algorithm based on Thompson sampling, which tends to discard parameters before the last change point and conducts the recommendation just based on the interactions in the current context. 


\textbf{FactorUCB \cite{wang2017factorization}} leverages observed contextual features and user interdependencies to improve the convergence rate and help conquer cold-start in recommendation.

\textbf{HyperBandit \cite{shen2023hyperbandit}} is an online learning algorithm that dynamically models user preference shifts in periodic non-stationary streaming recommendations via a hypernetwork-enhanced bandit framework.

It is worth noting that our work focuses on enhancing bandit algorithms for streaming recommendation, rather than proposing a general streaming recommender system.
Hence, we follow prior studies~\cite{xu2020contextual}, which also benchmark primarily against bandit-based baselines. Adding non-bandit streaming models would diverge from the scope and formulation of our problem, as those methods typically rely on full feedback or supervised settings, whereas we operate under the partial-feedback (bandit) constraint.  

\subsubsection{Hyperparameter Settings.}
We implemented the hypernetwork $h$ in  Eq.~\eqref{eq:hn:formulation} using a MLP. The MLP consists of 1 input layer, 8 hidden layers, and 1 output layer. The network comprises layers with the following node counts: 4, 256, 512, 1024, 1024, 1024, 1024, 512, 256, and $25 \times \tau \times 2$, where $\tau$ denotes the rank used in low-rank factorization. ReLU activation functions are employed after each hidden layer. The hypernetwork was trained at intervals of 2000 time steps (i.e., $T_n = 2000, n\in [N]$) for KuaiRec and NYC, and every 5000 steps ($T_n = 5000$) for TKY. It is worth noting that the training of the hypernetwork and the ridge regression updates of the arms in the bandit are performed alternately. 
Specifically, ridge regression is updated after each interaction, while the hypernetwork is trained after every $T_n$ interactions. Early stopping was utilized during training to mitigate overfitting. The experiments employed the LLM model Llama-3.1-8B-Instruct\footnote{https://huggingface.co/meta-llama/Meta-Llama-3-8B-Instruct}. Regarding bandit policy parameters, the exploration parameter $\alpha$ and regularization parameter $\lambda$ were uniformly set to 0.1 across all algorithms. For all algorithms, the candidate item set $\mathcal{A}_t$ size at each time step $t \in [T]$ was fixed at 25. Context feature dimensions for users and items were both set to 25. Specifically, for FactorUCB, HyperBandit, and HyperBandit+, latent and observed item feature dimensions were defined as 10 and 15, respectively. 

\subsubsection{Evaluation Protocol.}
The accumulated reward was utilized to assess the recommendation accuracy of algorithms, which was computed as the sum of the observed reward from the beginning to the current step. The normalized accumulated reward refers to the accumulated reward normalized by the corresponding logged random strategy.


\subsection{Experiments on Short Video Recommendation}

We employed KuaiRec~\cite{gao2022kuairec} for evaluation, that is a real-world dataset collected from the recommendation logs of the video-sharing mobile app Kwai\footnote{https://github.com/chongminggao/KuaiRec}.
The dense interaction matrix we used contains 1411 users, 3327 items and 529 video categories (i.e., tags). Each interactive data includes user id, video id, play duration, video duration, time, date, timestamp, and watch ratio, etc. 
Following the settings in~\cite{wan2021contextual}, we used video categories (tags) as actions to reduce the action space. In this experiment, we treated watch ratio higher than 2.0 as positive feedback. If the action (i.e., tag) got positive feedback in other time periods while not in current period, we assumed that the current user would give a negative feedback to it.


 

To fit the data into the contextual bandit setting, we pre-processed it first. To align with the handling approach of POI recommendation, we initially obtain embeddings based on the textual description of items, which serves as the context feature vectors. Following the method described in Section~\ref{sec:method}, we first utilize a large language model (LLM) to enrich the semantic representation of actions (i.e., categories) and infer user profiles based on users' interaction history during the first week. Subsequently, we employ the \textit{text-embedding-ada-002}\footnote{https://platform.openai.com/docs/guides/embeddings} model to obtain contextual feature vectors for both candidate items and users. Principal Component Analysis (PCA) is then applied to reduce the dimensionality of the candidates' context feature vectors, and the resulting PCA projection is similarly used to process the user context feature vectors. We retained the first 25 principal components and applied the same procedure to the context feature vectors of items (i.e., $d_a = d_u =25$ ). To obtain the final user context feature vectors suitable for the dot-product strategy in bandit computation, we extend the previous approach by incorporating the average of the context feature vectors of the categories the user interacted with during the first week. This aggregated value is then used as the final user context feature vector. For a particular time step, the video tag id having positive feedback was picked and the remaining 24 were randomly sampled from the tags which would get negative feedback in the current time step. Additionally, to construct explicit periodic data, we extracted one week's worth of data from August 10th, 2020, to August 16th, 2020. Within each time period of that week, we randomly sampled multiple time steps to reconstruct the dataset while preserving diversity. This sampling process was repeated 16 times to generate data for 16 weeks. 

The results are shown in Fig.~\ref{fig:reward} and Table~\ref{tab:comparisons of NAR and RT}, we can clearly notice that HyperBandit+ outperformed all the other baselines on KuaiRec in terms of rewards, especially in the initial stage. As environment is periodic, both DLinUCB and ADTS were worse than others since these two algorithms were designed for the piece-wise stationary environment (i.e., they need to abandon the knowledge acquired during past periods). As more observations recurrent, LinUCB quickly caught up, because it is better to regard periodic environment as a stationary environment rather than piecewise environment. Besides, FactorUCB leveraged observed contextual features and dependencies among users to improve the algorithm's convergence rate, leading to good performance at the beginning. HyperBandit, as an algorithm designed for periodic environments, also demonstrates strong performance. However, compared to HyperBandit+, it typically experiences a challenging initialization phase at the beginning.

\begin{table}[!t]
    \centering
    \small
    \caption{Comparisons of normalized accumulated reward, running time (sec., mean), and training time (sec., mean) of hypernetwork on KuaiRec, Foursquare (NYC) and Foursquare (TKY). The ``Running Time of BP'' means the average time cost of online recommendation and updating by Bandit Policy at each time step, and the ``Training Time of HN'' means the average time cost for training HyperNetwork at each time step. ``/'' means the corresponding algorithm has no hypernetwork. Statistical significance is evaluated using paired $t$-tests between our method (HyperBadnit+ w/o LowRank) and each baseline over five independent runs. The bolded results indicate significant improvement ($p<0.05$).}
    \vspace{-1ex}
    \label{tab:comparisons of NAR and RT}
    \begin{spacing}{1}
    \resizebox{\columnwidth}{!}{%
    \begin{tabular}{l|c|c|c|c|c|c|c}
        \hline
        \multirow{2}{*}{Algorithm} &
        \multicolumn{3}{c|}{Normalized Accumulated Reward} &
        \multicolumn{2}{c|}{Running Time of BP} &
        \multicolumn{2}{c}{Training Time of HN} \\
        \cline{2-8}
        & KuaiRec & NYC & TKY & KuaiRec & Foursquare & KuaiRec & Foursquare \\
        \hline\hline
        LinUCB & $3.02\pm0.03$ & $4.70\pm0.10$ & $10.44\pm0.03$ & $4.55\mathrm{e{-}04}$ & $4.58\mathrm{e{-}04}$ & / & / \\
        HybridLinUCB & $2.53\pm0.02$ & $4.04\pm0.03$ & $10.31\pm0.10$ & $2.97\mathrm{e{-}02}$ & $2.87\mathrm{e{-}02}$ & / & / \\
        DLinUCB & $2.22\pm0.01$ & $4.07\pm0.05$ & $7.96\pm0.02$ & $5.61\mathrm{e{-}04}$ & $5.15\mathrm{e{-}04}$ & / & / \\
        ADTS & $1.46\pm0.01$ & $2.90\pm0.01$ & $6.49\pm0.04$ & $3.78\mathrm{e{-}03}$ & $3.39\mathrm{e{-}03}$ & / & / \\
        FactorUCB & $3.02\pm0.01$ & $4.30\pm0.14$ & $9.70\pm0.24$ & $9.54\mathrm{e{-}02}$ & $9.53\mathrm{e{-}02}$ & / & / \\
        \hline
        HyperBandit & $4.55\pm0.08$ & $7.99\pm0.17$ & $13.93\pm0.08$ & $2.13\mathrm{e{-}03}$ & $2.22\mathrm{e{-}03}$ & $3.26\mathrm{e{-}04}$ & $3.71\mathrm{e{-}04}$ \\
        \textbf{HyperBandit+} ($\tau=1$) & $4.80\pm0.19$ & $6.91\pm0.06$ & $13.27\pm0.12$ & $3.80\mathrm{e{-}03}$ & $5.37\mathrm{e{-}03}$ & $3.17\mathrm{e{-}04}$ & $1.91\mathrm{e{-}04}$ \\
        \textbf{HyperBandit+} ($\tau=3$) & $4.83\pm0.03$ & $7.91\pm0.14$ & $13.92\pm0.08$ & $3.65\mathrm{e{-}03}$ & $4.75\mathrm{e{-}03}$ & $3.25\mathrm{e{-}04}$ & $2.01\mathrm{e{-}04}$ \\
        \textbf{HyperBandit+} ($\tau=5$) & $4.92\pm0.15$ & $8.27\pm0.08$ & $13.99\pm0.05$ & $3.81\mathrm{e{-}03}$ & $5.58\mathrm{e{-}03}$ & $3.72\mathrm{e{-}04}$ & $2.07\mathrm{e{-}04}$ \\
        \textbf{HyperBandit+} w/o Low-Rank & $\bm{5.08\pm0.11}$ & $\bm{8.49\pm0.25}$ & $\bm{14.00\pm0.05}$ & $3.71\mathrm{e{-}03}$ & $4.99\mathrm{e{-}03}$ & $4.11\mathrm{e{-}04}$ & $2.54\mathrm{e{-}04}$ \\
        \hline
    \end{tabular}%
    }
    \end{spacing}
    \vspace{-1ex}
\end{table}

\subsection{Experiments on POI Recommendation}

\begin{table}[t]
\centering
\caption{The statistics of the Foursquare NYC \& TKY.}
\vspace{-1ex}
    \label{tab:foursquare data info}
    \begin{tabular}{ccccc}
    \toprule
    Dataset & \#Users & \#POIs & \#POI Categories & \#Check-ins  \\
    \hline
    NYC & 1,083 & 38,333 & 400 & 227,428    \\
    TKY & 2,293 & 61,858 & 385 & 573,703    \\
    \bottomrule
    \end{tabular}
\end{table}

Foursquare NYC \& TKY dataset~\cite{yang2014modeling} includes long-term (about 10 months) check-in data in New York city (NYC) and Tokyo (TKY) collected from Foursquare\footnote{https://foursquare.com/} from 12 April 2012 to 16 February 2013. 
Table~\ref{tab:foursquare data info} shows the statistics of two check-in datasets: NYC and TKY. Each dataset includes user id, venue id, venue category id, venue category name, latitude, longitude, and timestamp, etc.


Similarly, we used POI categories as actions. The ground-truth categories of the check-ins were considered positive samples of the current step while the rest categories were considered negative. Initially, we adopt the same processing method as mentioned in last section(i.e., the processing method on KuaiRec dataset) to obtain both the user context features and the context feature vectors of candidate items.  For the candidate action set at each time step, we selected the ground-truth check-in tag and randomly extracted 24 negative categories of the current step. We used all the data from 12 April 2012 to 16 February 2013 (except data from the first week) to construct a data streaming. It is important to note that HyperBandit+ is designed for users with sparse interactions. Therefore, unlike the setting in~\cite{shen2023hyperbandit}, where users who did not appear in the first week were completely excluded, we instead leverage each user's first-week interaction data for enrichment and initial embedding. During the testing phase, we filter out each user's first-week interaction data accordingly.

\begin{figure*}[t]
\centering
\begin{subfigure}[b]{0.32\textwidth}
\includegraphics[width=\textwidth]{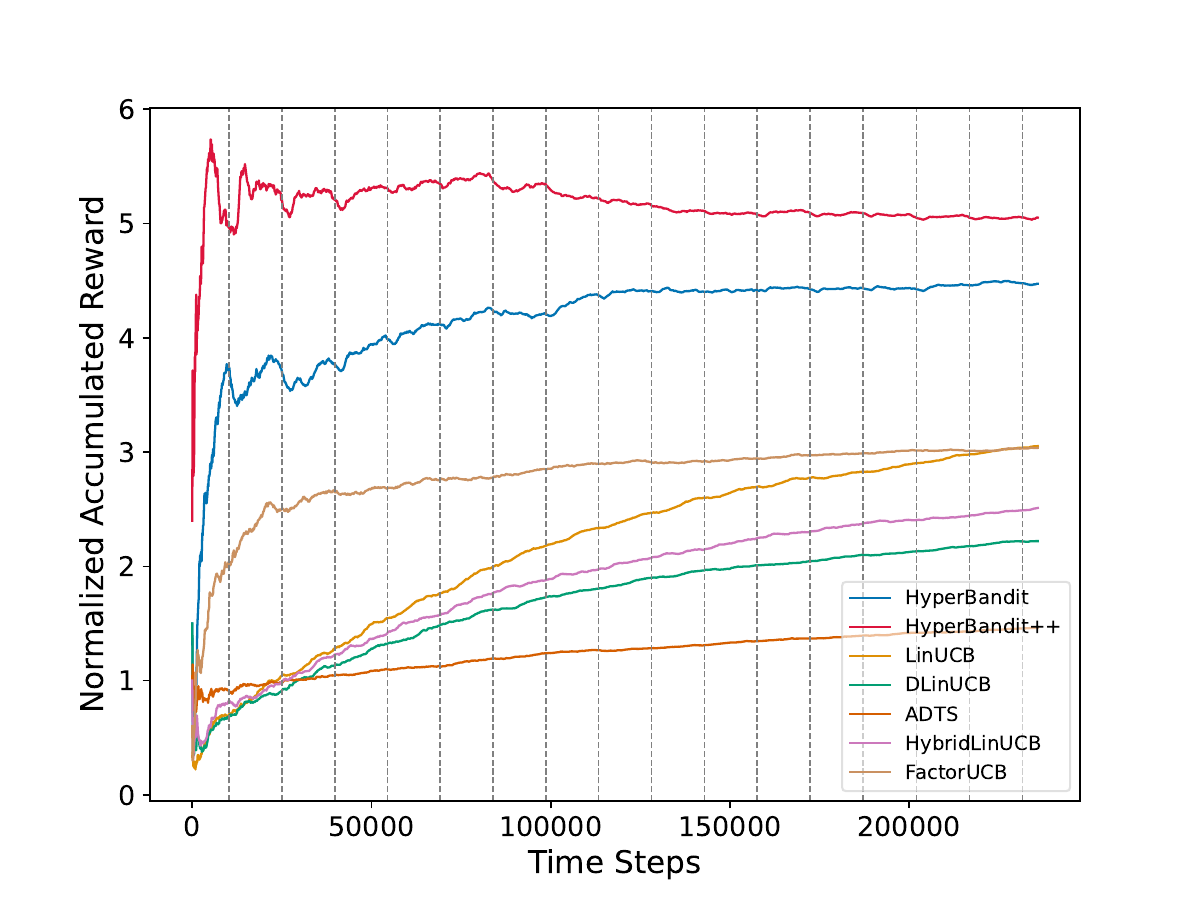}
\caption{ KuaiRec}
\label{subfig:reward:a}
\end{subfigure}
\hfill
\begin{subfigure}[b]{0.32\textwidth}
\includegraphics[width=\textwidth]{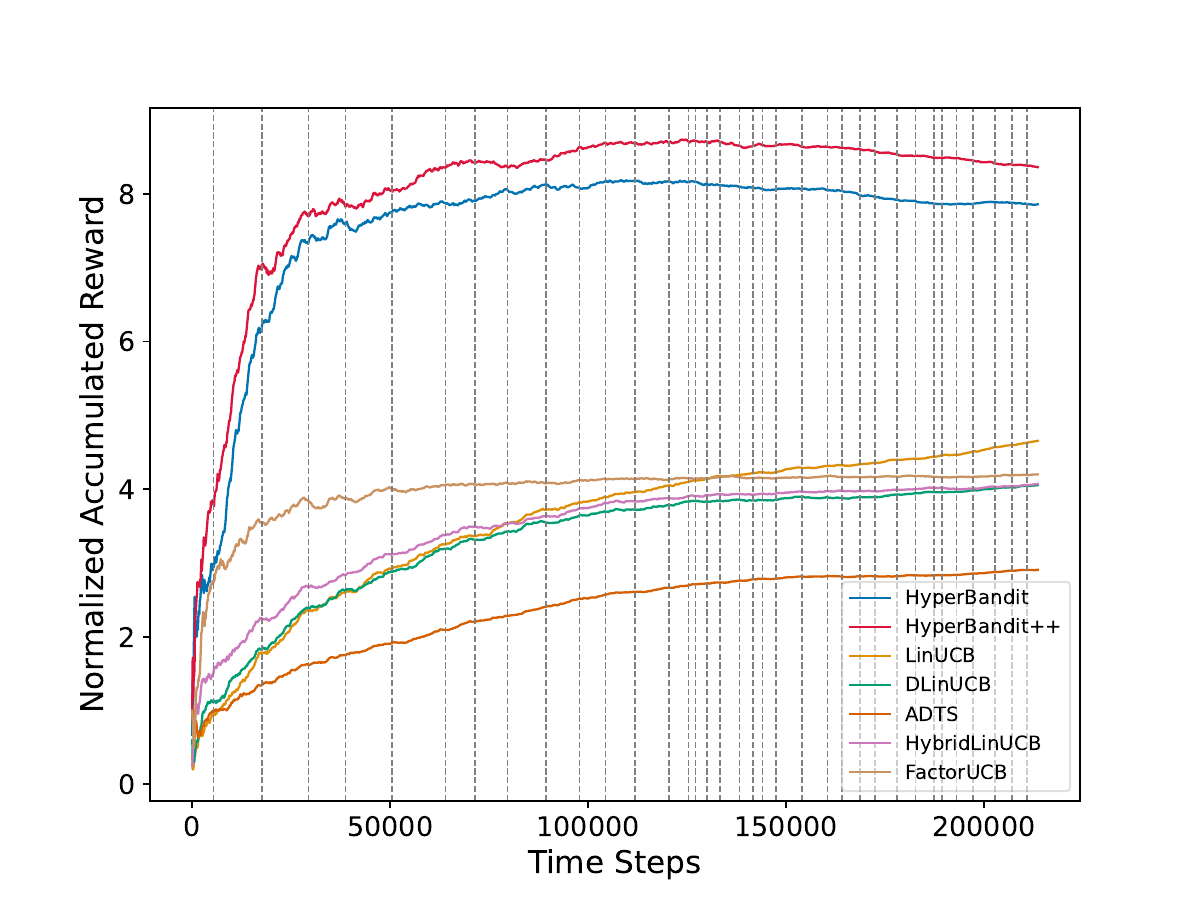}
\caption{ NYC}
\label{subfig:reward:b}
\end{subfigure}
\hfill
\begin{subfigure}[b]{0.32\textwidth}
\includegraphics[width=\textwidth]{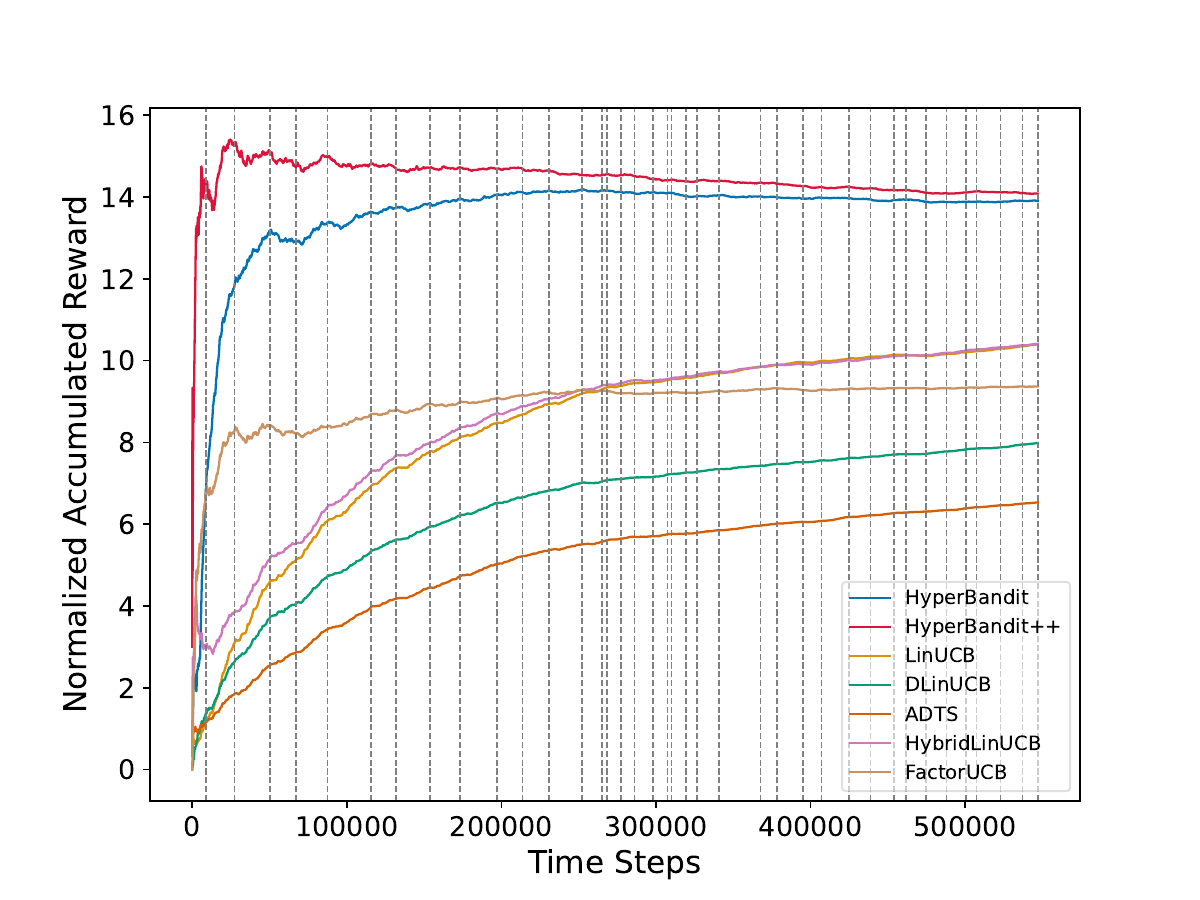}
\caption{ TKY}
\label{subfig:reward:c}
\end{subfigure}
\caption {Normalized accumulated reward of baselines, and the proposed HyperBandit+ on three datasets, KuaiRec \& NYC \& TKY. Note that The grey dashed lines represent the boundaries between weekdays and weekends. The $x$-axis represents the interaction data arranged in chronological order, and the $y$-axis represents the normalized accumulated reward. }

\label{fig:reward}
\end{figure*}

As illustrated in Fig.~\ref{subfig:reward:b} and Fig.~\ref{subfig:reward:c}, similar conclusions can be drawn as in KuaiRec. Additionally, we conducted an analysis of time cost for all algorithms and compared the performance of different estimated rank  ($\tau=1$, $\tau=5$ and w/o Low-Rank) of HyperBandit. The corresponding results are presented in Table~\ref{tab:comparisons of NAR and RT}. Notably, HyperBandit consistently outperformed the baselines in terms of normalized accumulated reward, while the running time of BP (bandit policy) remained acceptable. FactorUCB achieved excellent performance among baselines due to leveraging user adjacency relationships. However, that also lead to a significant time cost as FactorUCB required updating all user parameters at each time step. Furthermore, HyperBandit+ ($\tau=1$), HyperBandit+ ($\tau=3$), and HyperBandit+ ($\tau=5$) reduced the training time of HN by 22. 8\%, 20. 9\%, 9. 5\% in KuaiRec and, 24. 8\%, 20. 8\%, 18. 5\% in Foursquare dataset compared to HyperBandit+ (w / without Low rank), which provided strong evidence of the training efficiency with low rank factorization.

\subsection{Ablation Experiments}

In this section, we empirically studied the proposed HyperBandit+ by addressing the following research questions: 

{\raggedright\textbf{RQ1:}} Is HyperBandit+ efficient enough to meet the real-time requirements of online recommendations?

{\raggedright\textbf{RQ2:}} 
How does the estimated rank $\tau$ of low-rank factorization affect HyperBandit+?

{\raggedright\textbf{RQ3:}} Does the LLM Start mechanism truly improve exploration and exploitation in the initial stage?

{\raggedright\textbf{RQ4:}} What is the impact of key embedding methods in HyperBandit+?

{\raggedright\textbf{RQ5:}} Have the three extensions in HyperBandit+ fulfilled their respective roles?

{\raggedright\textbf{RQ6:}} What is the impact of key updating methods
in HyperBandit+ on the recommendation performance?

\subsubsection{\textbf{RQ1: Running Time}} 
In streaming recommendation scenarios, running time is a crucial metric. We report the running time of the bandit policy and the training time of the hypernetwork in Table~\ref{tab:comparisons of NAR and RT}. The results indicate that the time cost of HyperBandit+ is on the order of milliseconds (ms), demonstrating its capability to meet real-time requirements in streaming recommendation settings. Moreover, the training time of the hypernetwork exhibits a decreasing trend as the estimated rank \(\tau\) is progressively reduced from full rank to 5, then to 3, and finally to 1, further confirming its efficiency in low-rank factorization.

\begin{figure*}[t]
\centering
\begin{subfigure}[b]{0.32\textwidth}
\includegraphics[width=\textwidth]{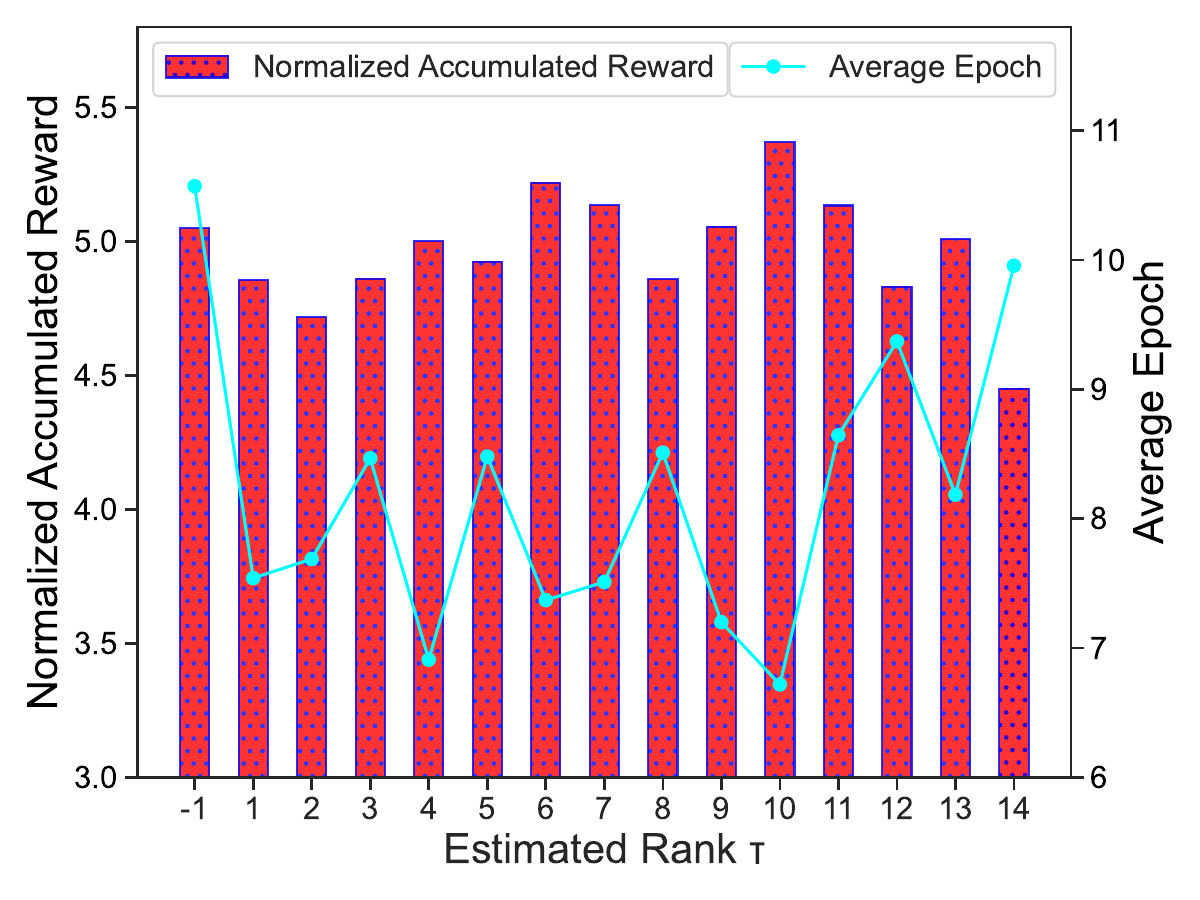}
\caption{KuaiRec}
\end{subfigure}
\hfill
\begin{subfigure}[b]{0.32\textwidth}
\includegraphics[width=\textwidth]{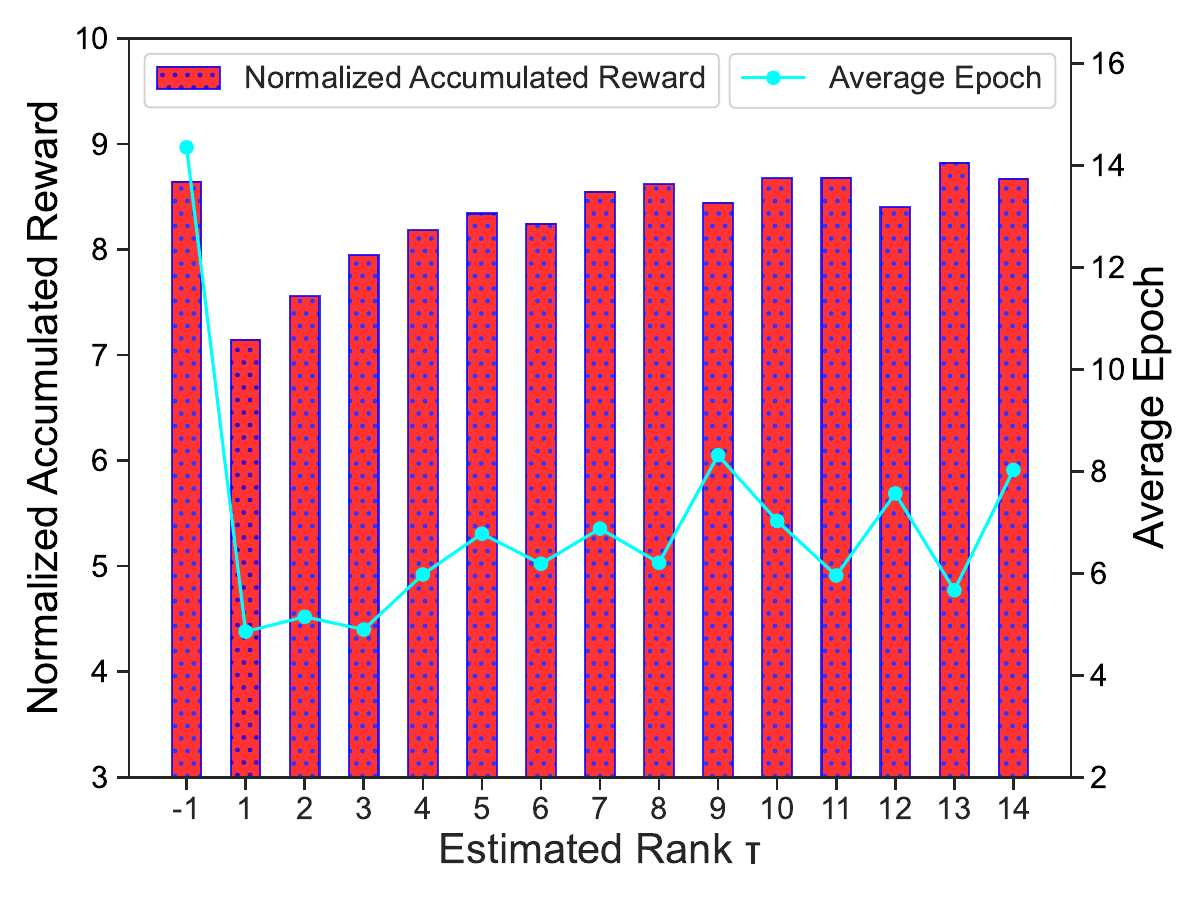}
\caption{NYC}
\end{subfigure}
\hfill
\begin{subfigure}[b]{0.32\textwidth}
\includegraphics[width=\textwidth]{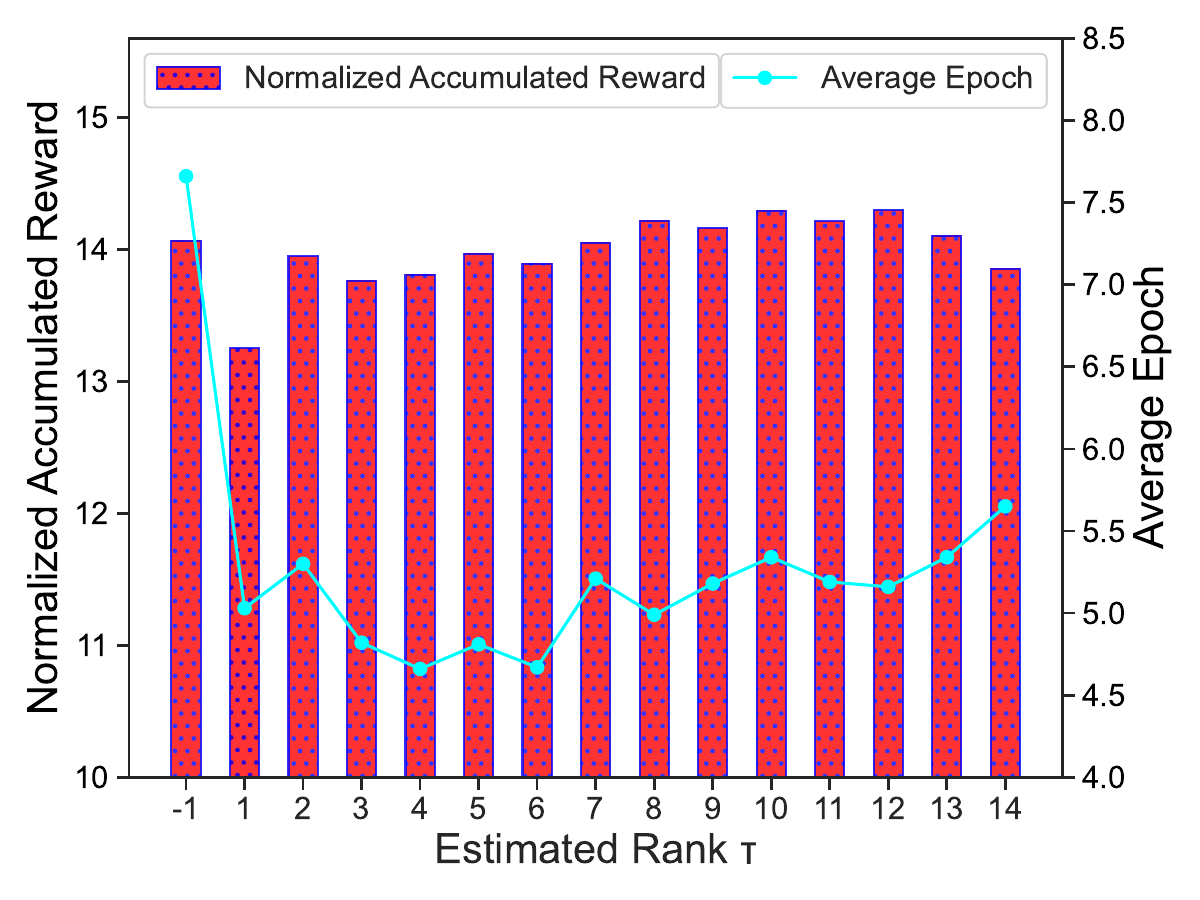}
\caption{TKY}
\end{subfigure}
\vspace{-2ex}
\caption{Performance of low-rank factorization in HyperBandit on different estimated rank across three datasets. Note that the bar chart shows normalized accumulated reward, while the line chart shows average epoch in training process (a larger average epoch indicates a longer training time of hypernetwork). The first data point (with an $x$-coordinate of ``$\mathrm{-1}$'') represents the result obtained without utilizing low-rank factorization.}
\label{fig:ranks performance}
\vspace{-2.2ex}
\end{figure*}

\subsubsection{\textbf{RQ2: Impact of  Estimated Rank $\tau$}}

To investigate the impact of low-rank factorization with varying estimated ranks on HyperBandit+, we conducted experiments across three datasets.The estimated rank $\tau$ was tested with $\mathrm{-1}$ and within a range of $\mathrm{1}$ to $\mathrm{14}$, where $\tau$=$\mathrm{-1}$ implies not utilizing low-rank factorization. Performance was evaluated using two metrics: normalized accumulated reward and training time.

The observations from the
experimental results, presented in Fig.~\ref{fig:ranks performance}, can be summarized as follows: 1). With an increase in the estimated rank $\tau$, our HyperBandit+
demonstrated an overall improvement in normalized accumulated reward on Foursquare datasets, and the normalized
accumulated reward of HyperBandit+ with ranks ranging from 5 to 14 were nearly equivalent to that of HyperBandit+
without low-rank factorization. Particularly on KuaiRec dataset, the normalized accumulated reward of HyperBandit+ with ranks from 1 to 9 shows a gradual increase, even surpassing the algorithm without low-rank factorization. However, for ranks from 11 to 14, the reward declines slightly, suggesting potential overfitting or computational redundancy at higher ranks. These results validate the effectiveness of the low-rank factorization approach, which maintains excellent performance.2). The average epoch, which measures the training time of the hypernetwork, also exhibits an
overall upward trend as the estimated rank $\tau$ increases, although it is significantly smaller than that without low-rank factorization. This observation highlights the benefits of efficient training via low-rank factorization as described in Sec.~\ref{sec:HB:Lowrank:training}.



\subsubsection{\textbf{RQ3: Does the LLM Start Mechanism Truly Improve Exploration and Exploitation in the Initial Stage?}}

To investigate whether synthetic data generated by LLMs effectively improves parameter initialization for Bandit models, we present the normalized accumulated reward over the first 10000 rounds on real data under two settings: with and without the LLM Start mechanism(i.e., LLM Start and w/o LLM Start).
\begin{figure*}[t]
\centering
\begin{subfigure}[b]{0.32\textwidth}
\includegraphics[width=\textwidth]{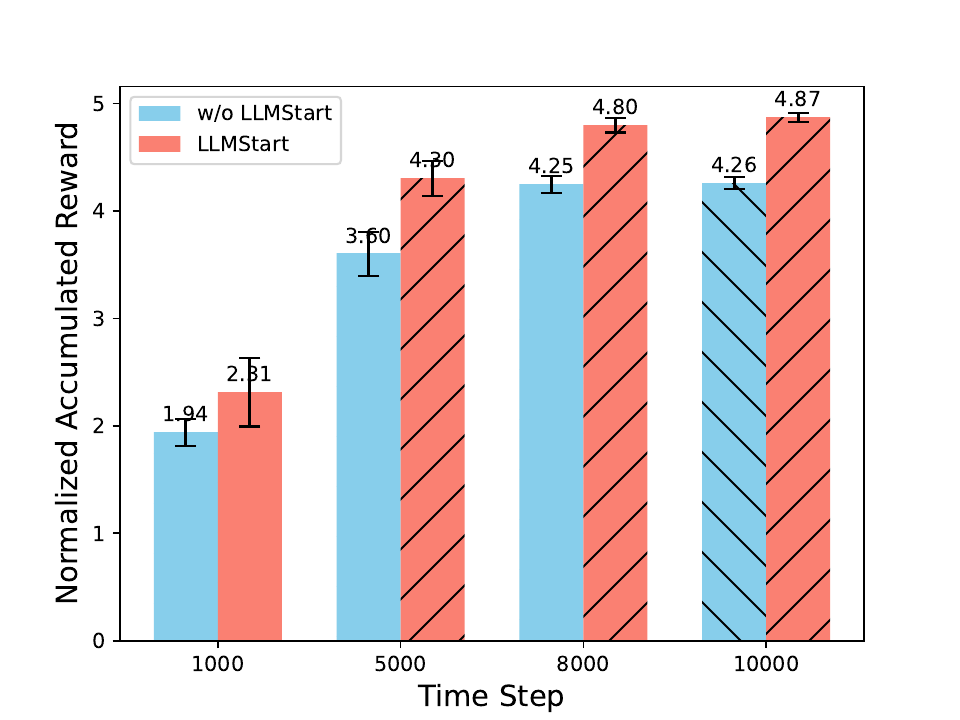}
\caption{KuaiRec}
\end{subfigure}
\hfill
\begin{subfigure}[b]{0.32\textwidth}
\includegraphics[width=\textwidth]{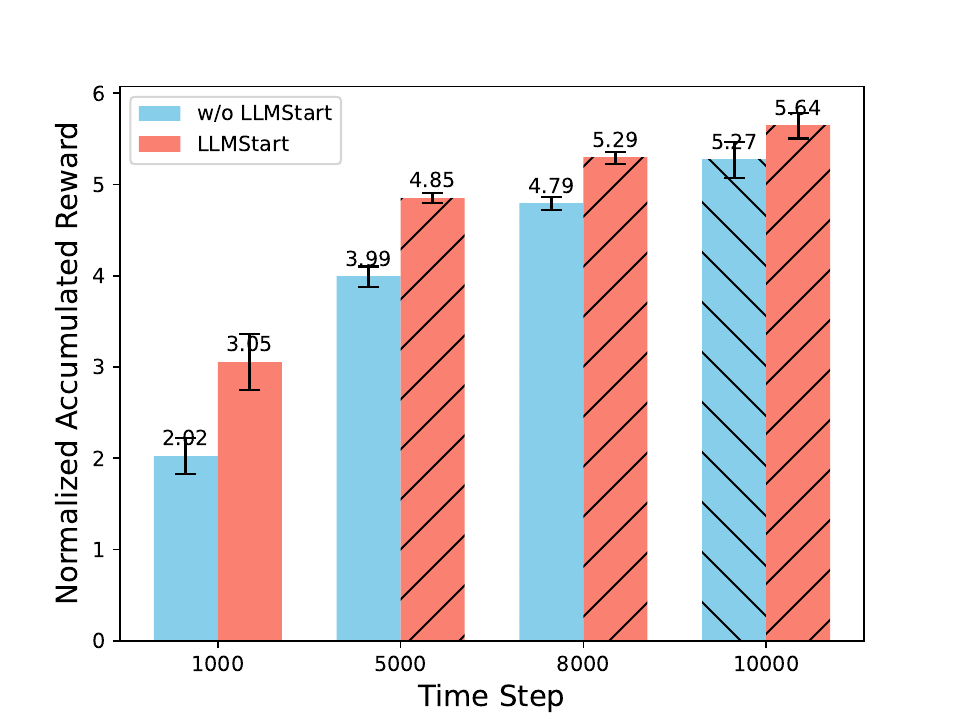}
\caption{NYC}
\end{subfigure}
\hfill
\begin{subfigure}[b]{0.32\textwidth}
\includegraphics[width=\textwidth]{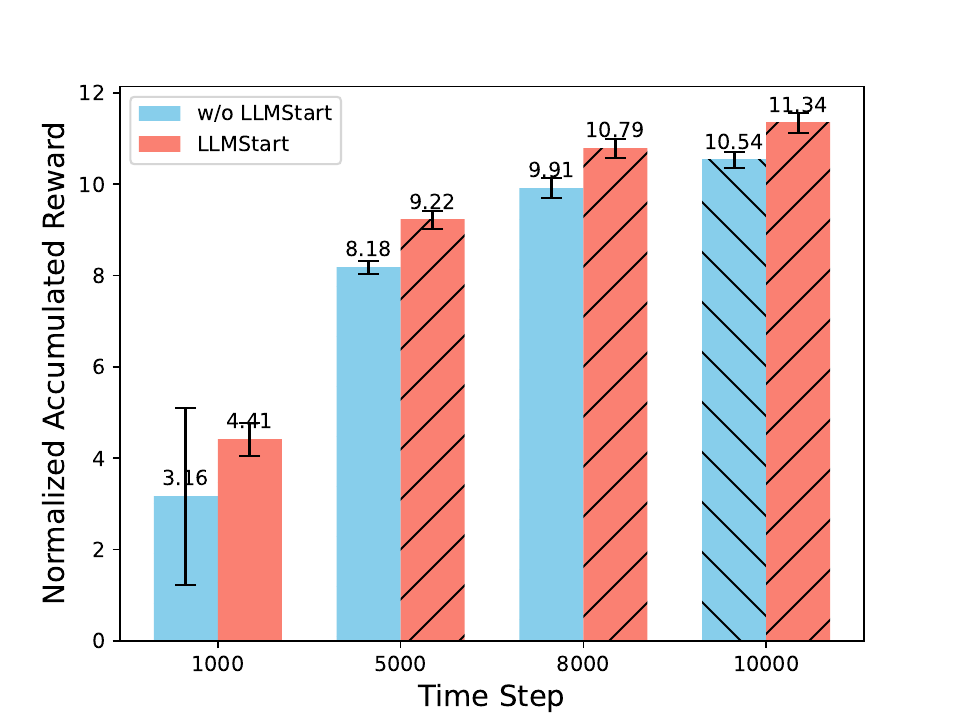}
\caption{TKY}
\end{subfigure}
\vspace{-2ex}
\caption{Performance of LLM Start in HyperBandit+ across three datasets.  ``LLM Start'' refers to HyperBandit+ leveraging synthetic data generated by an LLM to update internal parameters before the online phase on real datasets, while ``w/o LLM Start'' indicates HyperBandit+ with randomly initialized parameters applied directly to real datasets.}
\label{fig:LLMStartinhp+}
\vspace{-2.2ex}
\end{figure*} 
The results are illustrated in Fig.\ref{fig:LLMStartinhp+}, from which we can make the following observation: across all three datasets, HyperBandit+ with the LLM Start mechanism achieves higher rewards in the initial stage compared to the version without LLM Start. This validates that LLM Start effectively mitigates the sparsity of real data, enabling HyperBandit+ to exhibit improved exploration and exploitation capabilities during the initial online phase. Note that LLM Start essentially depends on the quality of the synthetic data, and its performance gain further demonstrates the effectiveness of the synthetic data.

\subsubsection{\textbf{RQ4: Impact of Key Embedding Methods in HyperBandit+.}}
Compared to HyperBandit, HyperBandit+ enhances the model by incorporating multiple embedding techniques, including the use of Euler embedding for time embedding—which better emphasizes periodicity. We conduct ablation experiments comparing the Euler embedding approach with a simple one-hot embedding method.
Additionally, the adoption of LLM-enhanced embeddings for user and item context feature, which leverage richer world knowledge to enrich information and harness powerful embedding capabilities. We conducted extensive ablation experiments to demonstrate the effectiveness of these embeddings.The results are presented in Table~\ref{tab:emb}, from which we formally emphasize two key improvements:  1) Euler embedding enhanced the hypernetwork's ability to capture periodic time patterns.  
2) LLM-enhanced embedding led to an improvement in the normalized accumulated reward.

\begin{table}[t]
    \centering
    \caption{Comparison of normalized accumulated reward (mean~$\pm$~std) across KuaiRec, NYC and TKY datasets under different embedding configurations. ``Euler'' refers to the use of Euler embedding for temporal representation, where ``\xmark/$\checkmark$'' indicate the use of one-hot embedding and Euler embedding, respectively. ``LLM'' denotes the adoption of LLM-enhanced embeddings for contextual feature vectors, where ``\xmark/$\checkmark$'' represent whether LLM-enhanced embeddings are used or not. Statistical significance is evaluated using paired t-tests between our method and each ablation variant over five independent runs. The bolded results indicate significant improvement ($p < 0.05$).}
    \label{tab:emb}
    \begin{subtable}[ht]{\linewidth}
    \centering
    \begin{tabular}{c|c|c|c|c}
        \hline
        \multirow{2}{*}{Euler} & \multirow{2}{*}{LLM} & \multicolumn{3}{c}{Normalized Accumulated Reward}  \\
        \cline{3-5}
         ~&~& KuaiRec &   NYC   & TKY  \\ 
        \hline \hline
         \checkmark       & \checkmark       &$\bm{5.08\pm 0.11}$  & $\bm{8.49\pm 0.25}$  &$14.00\pm 0.05$    \\
         \hline
        \checkmark       & \xmark       &$4.78\pm 0.20$  & $8.18\pm 0.24$  &$13.90\pm 0.14$    \\
        \xmark     & \checkmark       & $4.86\pm 0.20$   & $8.25\pm 0.05$  & $13.99\pm 0.03$   \\ 
        \hline
    \end{tabular}
    \end{subtable}
\end{table}

\begin{figure*}[t]
\centering
\begin{subfigure}[b]{0.32\textwidth}
\includegraphics[width=\textwidth]{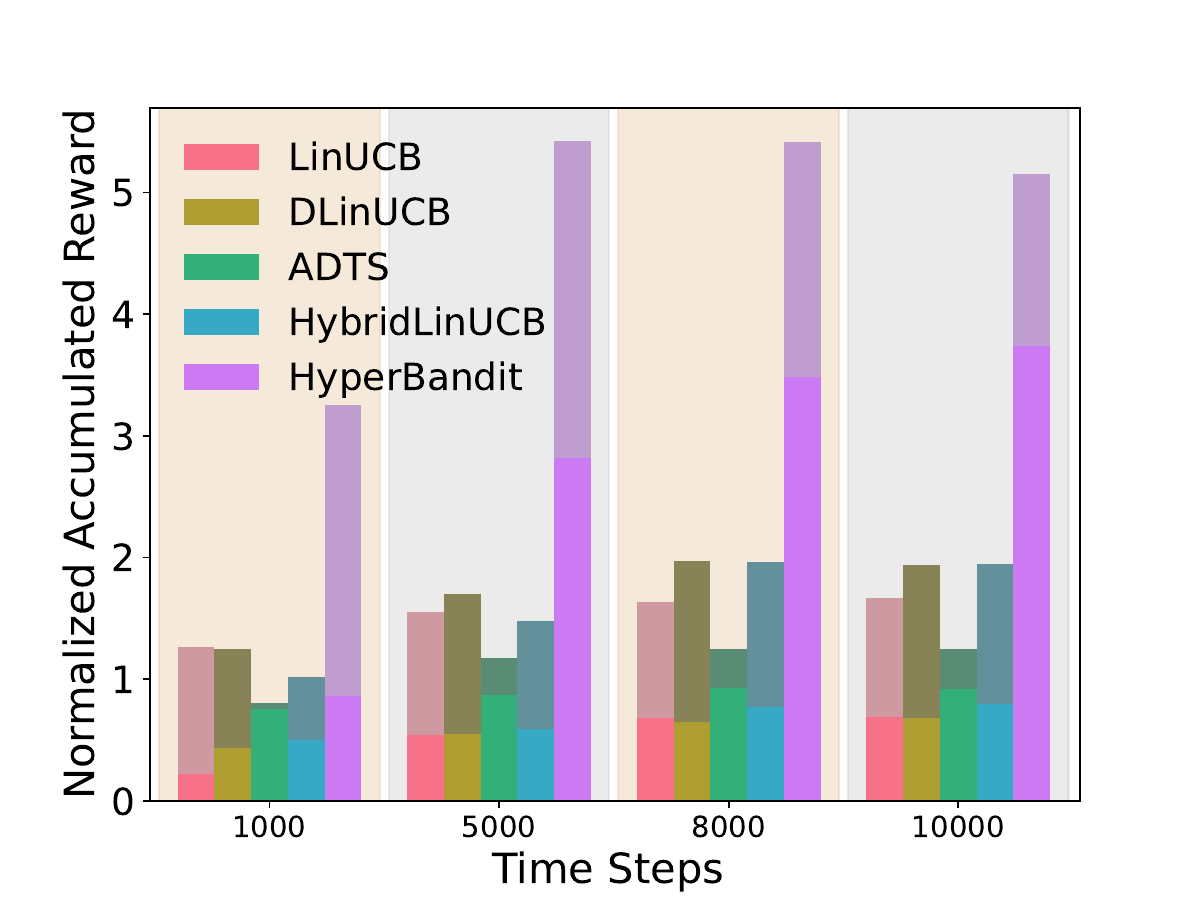}
\caption{KuaiRec}
\end{subfigure}
\hfill
\begin{subfigure}[b]{0.32\textwidth}
\includegraphics[width=\textwidth]{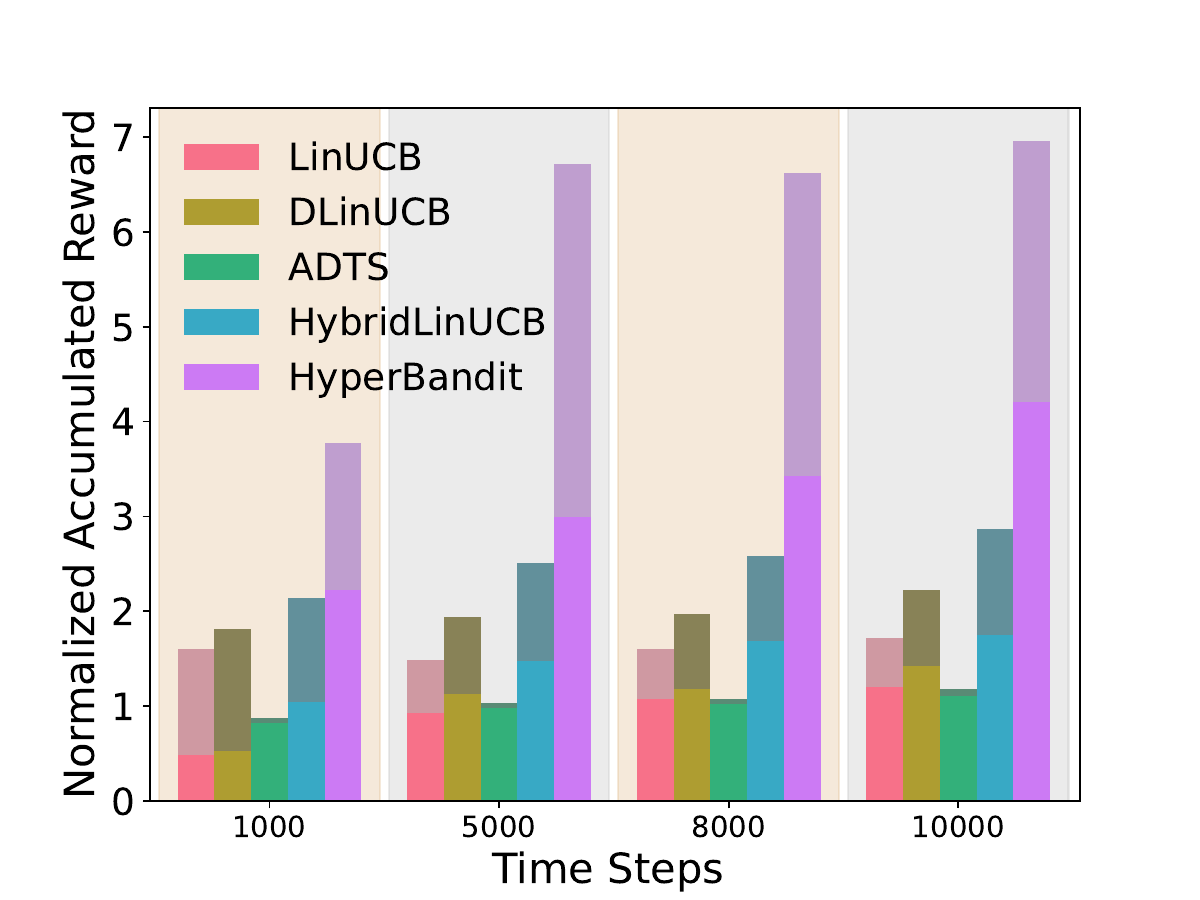}
\caption{NYC}
\end{subfigure}
\hfill
\begin{subfigure}[b]{0.32\textwidth}
\includegraphics[width=\textwidth]{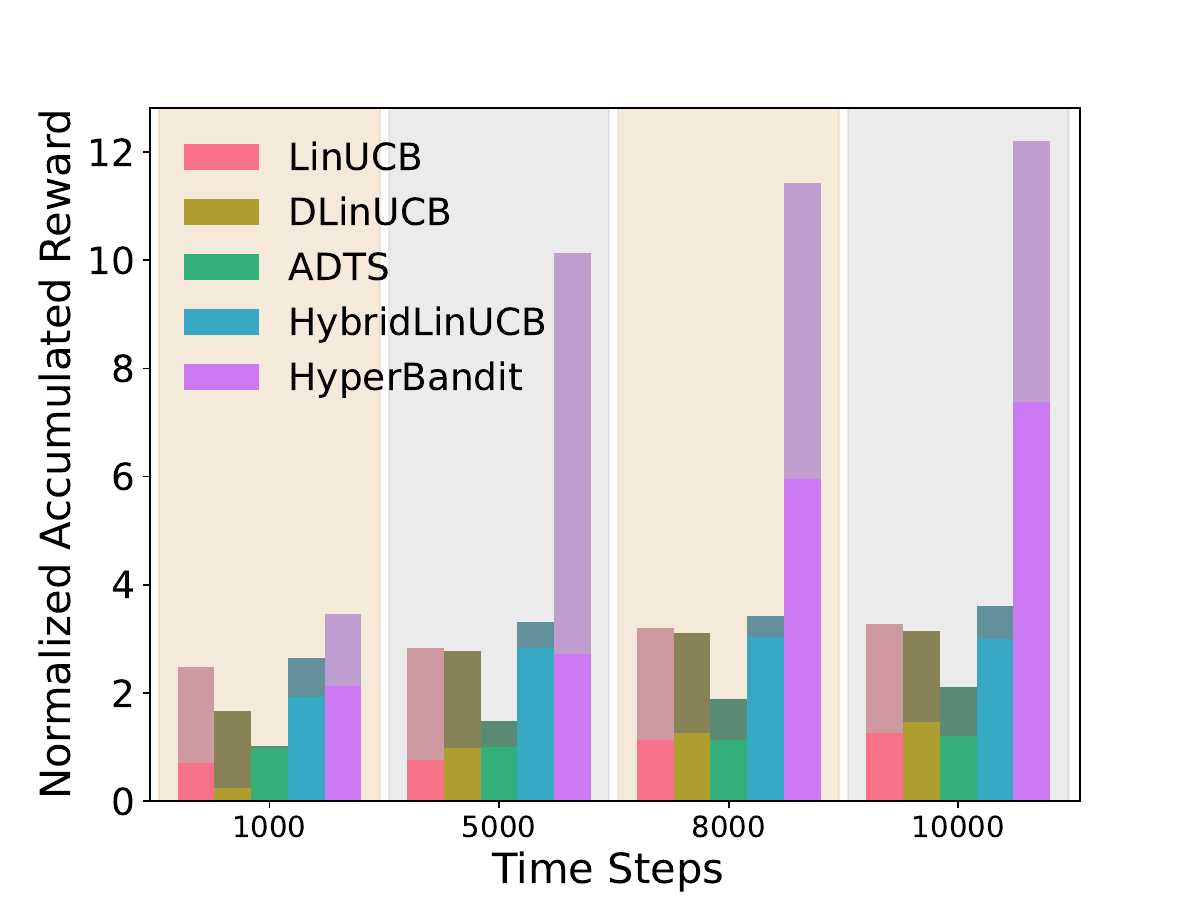}
\caption{TKY}
\end{subfigure}
\vspace{-2ex}
\caption{Normalized accumulated reward comparison of baselines  with and without LLM Start mechanism across three datasets over the first 10000 steps. Each stacked bar comprises two components: the lower solid segment denotes baseline performance without LLM Start initialization, while the upper light-tinted segment quantifies the additive performance gains achieved with LLM Start mechanism.}
\label{fig:LLMStart}
\vspace{-2.2ex}
\end{figure*}
\subsubsection{\textbf{RQ5: How Generalizable is the LLM Start Mechanism?}}
In this section, we investigate the adaptability of the LLM Start mechanism across various baselines. Specifically, we first allow multiple algorithms to perform offline learning on synthetic data generated by an LLM, updating their parameters, before applying these algorithms to online real data. We then observe the performance of these algorithms over the first 10000 steps.The results are presented in Figure~\ref{fig:LLMStart}, leading to the following conclusion: The LLM Start mechanism improves various bandit strategies, demonstrating its generalizability. Additionally, this provides valuable insight: LLM Start significantly benefits the initial phase of reinforcement learning (RL) algorithms, and leveraging LLM-generated synthetic data for RL algorithm initialization is an effective approach. By applying the synthetic data to various baseline algorithms, the consistent performance gains demonstrate the generalization capability of the synthetic data, further validating the effectiveness of the LLM-generated data.

\subsubsection{\textbf{RQ6: Impact of Key Online Updating Methods in HyperBandit+.}}
HyperBandit+ comprises two key components for online updating: (1) updating the latent features of items via ridge regression and (2) updating the parameters of the hypernetwork through gradient descent.  To examine the interaction between these two updating mechanisms, we conducted an ablation experiment with the following settings:  1) Disabling ridge regression updating: The dimensionality of the latent item features was set to zero. 2) Disabling hypernetwork updating: The hypernetwork parameters were frozen at their initial values.  The results, presented in Table~\ref{tab:comparisons of NAR and RR}, lead to the following conclusions:  
1) Independently employing either updating mechanism enhances recommendation performance.  
2) Regardless of whether ridge regression was enabled or disabled, the use of the hypernetwork consistently improved performance, demonstrating the effectiveness of the hypernetwork.

                                  

\begin{table}[t]
    \centering
    \caption{The results of the ablation experiment on key components of HyperBandit+. Note that the ridge regression updating in the bandit policy is denoted by ``RR'', and the hypernetwork updating is referred to as ``HN''. The symbol \checkmark~ signifies the inclusion of a particular update process, while the symbol \xmark~ indicates its exclusion. Note that since the latent features are entirely learned through the ``RR'' mechanism, excluding ``RR'' represents removing the use of latent features. Statistical significance is evaluated using paired t-tests between our method and each ablation variant over five independent runs. The bolded results indicate significant improvement ($p < 0.05$).
} 
    \label{tab:comparisons of NAR and RR}
    \begin{subtable}[ht]{\linewidth}
    \centering
    \begin{tabular}{c|c|c|c|c}
        \hline
        \multirow{2}{*}{RR} & \multirow{2}{*}{HN} & \multicolumn{3}{c}{Normalized Accumulated Reward}  \\
        \cline{3-5}
         ~&~& KuaiRec &   NYC   & TKY  \\ 
        \hline \hline
        \xmark      & \xmark      & $0.86\pm0.14$  & $0.61\pm 0.18$   &  $1.35\pm 0.37$      \\
        \checkmark       & \xmark       &$4.38\pm 0.26$  & $6.57\pm 0.12$  &$13.73\pm 0.13$    \\
        \xmark     & \checkmark       & $3.32\pm 0.08$   & $6.10\pm 0.16$  & $11.04\pm 0.45$   \\ 
        \checkmark    & \checkmark      & $\bm{5.08\pm 0.11 }$ & $\bm{8.49\pm 0.25}$ &  $\bm{14.01\pm 0.05}$\\
        \hline
    \end{tabular}
    \label{tab:bt:bandit:SBUCB}
    \end{subtable}
\end{table}


\subsubsection{\textbf{RQ7: Impact of Hyperparameters.}}

\begin{table}[t]
\caption{The results of the analysis experiments on candidate pool size, time segmentation, and LLM Start scale.}
\label{tab:revision}
\begin{tabular}{cc|cc|cc}
\toprule
\multicolumn{6}{c}{Normalized Accumulated Reward in NYC(NAR)} \\
\midrule
\multicolumn{1}{c}{Candidate Pool Size} &
\multicolumn{1}{c|}{NAR} &
\multicolumn{1}{c}{Time Segmentation} &
\multicolumn{1}{c|}{NAR} &
LLM Data Scale & NAR
\\
\hline \hline
25 & \multicolumn{1}{l|}{8.4737} & 35(7*5) & \multicolumn{1}{l|}{8.4737} & 48735(full) & 8.4737 \\
20 & \multicolumn{1}{l|}{7.3442} & 21(7*3) & \multicolumn{1}{l|}{8.3568} & 30000 & 8.3837 \\
15 & \multicolumn{1}{l|}{6.1080} & 7(7*1)  & \multicolumn{1}{l|}{8.4093} & 20000 &8.2833  \\
10 & \multicolumn{1}{l|}{4.6350} & /       &  \multicolumn{1}{l|}{/}       & 10000 & 8.3639 \\
\bottomrule
\end{tabular}
\end{table}

To investigate the impact of hyperparameters involved in our experiments, 
we conducted an in-depth analysis on candidate pool size, time segmentation, and LLM data scale. 
The results are presented in Table~\ref{tab:revision}, from which we draw three key conclusions:  (1) As the candidate pool size increases, the algorithm’s performance improves, benefiting from the greater exploration opportunities provided by a larger pool.  (2) Different levels of time segmentation lead to varying results; an appropriate segmentation that aligns with the data distribution is most beneficial. As illustrated in the word cloud analysis in the Introduction, the data exhibit distinct preference shifts across five daily periods—morning, noon, afternoon, evening, and night—thus, dividing each day into five segments proves advantageous. (3) As the scale of synthetic data used in LLM Start increases, the algorithm’s performance shows a consistent upward trend.

\section{Conclusion}

This paper introduces HyperBandit+, which enhances the initial-stage performance of online learning algorithms in periodic non-stationary streaming recommendation scenarios. Specifically, HyperBandit+ employs a hypernetwork to dynamically adjust user preference parameters for estimating time-varying rewards, thereby improving bandit policy effectiveness in online recommendation.  To address the challenge of inefficient exploration during the initial stage of online learning, we incorporate large language models (LLMs) to enrich the semantic representations of both actions and users, yielding more informative context features. Additionally, the LLM Start mechanism generates synthetic offline data, enabling a more effective warm start for bandit strategies before deployment.  To ensure efficient model training, we adopt low-rank factorization to simplify hypernetwork outputs.
A rigorous theoretical analysis proves that both the LLM Start strategy and the Hypernetwork strategy enable the bandit policy to achieve a sublinear regret bound. Furthermore, experimental results validate the effectiveness and efficiency of HyperBandit+ in streaming recommendation. Overall, HyperBandit+ presents a promising direction for integrating LLMs with controllable online learning capabilities.



\begin{acks}


This work was partially supported by the National Natural Science Foundation of China (No. 62376275, 62472426, 62502091). Work partially done at Beijing Key Laboratory of Research on Large Models and Intelligent Governance, and Engineering Research Center of Next-Generation Intelligent Search and Recommendation, MOE. Supported by the Fundamental Research Funds for the Central Universities in UIBE (Grant No. 24QN06, 24PYTS22). Supported by fund for building world-class universities (disciplines) of Renmin University of China.
\end{acks}


\bibliographystyle{ACM-Reference-Format}

\bibliography{sample-base}

\end{document}